\def\eqref#1{equation~\ref{#1}}
\def\1{\bm{1}}
\DeclareMathAlphabet{\mathsfit}{\encodingdefault}{\sfdefault}{m}{sl}
\SetMathAlphabet{\mathsfit}{bold}{\encodingdefault}{\sfdefault}{bx}{n}
\newcommand{\modelname}{TEMPO}
\newcommand{\modelnamespace}{TEMPO~}
\newtheorem{theorem}{Theorem}[section]
\newtheorem{proposition}[theorem]{Proposition}
\newtheorem{proof}{Proof}
\title{TEMPO: Prompt-based Generative Pre-trained Transformer for Time Series Forecasting}
\author{\textbf{Defu Cao}\textsuperscript{\rm 1},  \textbf{Furong Jia}\textsuperscript{\rm 1}, \textbf{Sercan \"{O}. Ar{\i}k}\textsuperscript{\rm 2}, \textbf{Tomas Pfister}\textsuperscript{\rm 2}, \textbf{Yixiang Zheng}\textsuperscript{\rm 1},  \textbf{Wen Ye}\textsuperscript{\rm 1}, \textbf{Yan Liu}\textsuperscript{\rm 1} 
\\
$^1$ University of Southern California\\
$^2$ Google Cloud AI Research\\
 \small\texttt{\{defucao, florajia, yixiangzheng, yewen, yanliu.cs\}@usc.edu}\\
\small\texttt{\{soarik, tpfister\}@google.com}, 
}
\begin{document}

\maketitle

\begin{abstract}

The past decade has witnessed significant advances in time series modeling with deep learning. While achieving state-of-the-art results, the best-performing architectures vary highly across applications and domains. 
Meanwhile, for natural language processing, the Generative Pre-trained Transformer (GPT) has demonstrated impressive performance via training one general-purpose model across various textual datasets. It is intriguing to explore whether GPT-type architectures can be effective for time series, capturing the intrinsic dynamic attributes and leading to significant accuracy improvements. 
In this paper, we propose a novel framework, \modelname, that can effectively learn time series representations. We focus on utilizing two essential inductive biases of the time series task for pre-trained models: (i) decomposition of the complex interaction between trend, seasonal and residual components; and (ii) introducing the design of prompts to facilitate distribution adaptation in different types of time series. 
\modelnamespace expands the capability for dynamically modeling real-world temporal phenomena from data within diverse domains.
Our experiments demonstrate the superior performance of \modelname over state-of-the-art methods on zero shot setting for a number of time series benchmark datasets. This performance gain is observed not only in scenarios involving previously unseen datasets but also in scenarios with multi-modal inputs. This compelling finding highlights \modelname's potential to constitute a foundational model-building framework.

\end{abstract}

\section{Introduction}

Time series forecasting, i.e., predicting future data based on historical observations, has broad real-world applications, such as health, transportation, finance and so on. In the past decade, numerous deep neural network architectures have been applied to time series modeling, including convolutional neural networks (CNN)~\citep{bai2018empirical}, recurrent neural networks (RNN)~\citep{siami2018comparison}, graph neural networks (GNN)~\citep{li2018dcrnn_traffic, cao2021spectral}, and Transformers \citep{pyraformer,  zhou2021informer, timesnet, zhou2022fedformer, woo2022etsformer, DBLP:conf/iclr/KitaevKL20-reformer, Yuqietal-2023-PatchTST}, leading to state-of-the-arts results. 
{While achieving strong prediction performance, some of the previous works on time series mostly benefit from the advance in sequence modeling (from RNN and GNN, to transformers) that captures temporal dependencies but have not fully capitalized on the benefits of intricate patterns within time series data, such as seasonality, trend, and residual.} These components are the key differentiating factors of time series from classical sequence data ~\citep{harvey1990forecasting}. 
As a result, recent studies suggest that deep learning-based architectures might not be as robust as previously thought and might even be outperformed by shallow neural networks or even linear models on some benchmarks ~\citep{dlinear, lightts, timesnet, 10.1145/3580305.3599533, fan2022depts}. Despite the notable success of deep learning forecasters, the vast majority of them still follow a conventional training mechanism,  training and predicting using the same datasets.

Meanwhile, the rise of foundation models in natural language processing (NLP) and computer vision (CV), such as LLaMA~\citep{Touvron2023LLaMAOA}, CLIP~\citep{radford2021clip} and ChatGPT, marks major milestones on effective representation learning. 
It is extremely intriguing to explore a pre-trained path for foundation time series models with vast amounts of data, facilitating performance improvement in downstream tasks.
Some recent works shed light into the possibility of building general transformers for time series ~\citep{one_fits_all, TEST, goswami2024moment, das2023decoder, rasul2023lag}.  However, the theoretical and practical understanding of such models has not reached the consensus observed in other domains where generative models have been widely acknowledged~\citep{garza2023timegpt}. In addition, prompting techniques in LLM (such as InstructGPT~\citep{Ouyang2022TrainingLM}) provide a way to leverage the model's existing representations during pre-training instead of requiring learning from scratch. 
However, existing backbone structures and prompt techniques in language models do not fully capture the evolution of temporal patterns as in N-BEATS~\citep{n-beats} and AutoFormer~\citep{wu2021autoformer}, which are fundamental for time series modeling. 

{In this paper, we make an attempt to address the timely challenges of adapting large pre-trained models for time series forecasting tasks} and developing a prompt-based generative pre-training transformer for time series, namely TEMPO. TEMPO consists of two key analytical components for effective time series representation learning: \textit{one} focuses on modeling specific time series patterns, such as trends and seasonality, and \textit{the other} concentrates on obtaining more universal and transferrable insights from the inherent properties of data through a prompt-based approach. 
Specifically, \modelnamespace firstly decomposes time series input into three additive components, i.e., trend, seasonality, and residuals via locally weighted scatterplot smoothing~\citep{cleveland1990stl}. Each of these temporal inputs is subsequently mapped to its corresponding hidden space to construct the time series input embedding of the generative pre-trained transformer (GPT). We conduct a formal analysis, bridging the time series domain with the frequency domain, to highlight the necessity of decomposing such components for time series analysis. In addition, we theoretically reveal that the attention mechanism is hard to achieve the decomposition automatically.
Second, \modelnamespace utilizes a soft prompt  to efficiently tune the GPT ~\citep{radford2019language} for forecasting tasks by guiding the reuse of a collection of learnable continuous vector representations that encode temporal knowledge of trend and seasonality. 
In addition, we leverage the three key additive components of time series data—trend, seasonality, and residuals— to provide an interpretable framework for comprehending the interactions among input components~\citep{hastie2017generalized}. 
Experiment results on zero shot setting and multimodal setting of \modelnamespace pave the path to foundational models for time series. Besides, we demonstrate the stable predictive power of our model on unseen samples with textual information on two multimodal datasets including TETS (Text for Time Series) dataset, which is first introduced in this work to foster further research topics of pre-trained time series models.

In summary, the main \textbf{contributions} of our paper include: (1)  We introduce an interpretable prompt-tuning-based generative transformer, \modelname, for time series representation learning. 
It further drives a paradigm shift in time series forecasting - from conventional deep learning methods to pre-trained foundational models. (2) We adapt pre-trained models for time series by focusing on two fundamental inductive biases: First, we utilize decomposed trend, seasonality, and residual information. Second, we explore the soft prompt strategies to accommodate time series data's dynamic nature.
(3) Through extensive experimentation on benchmark datasets and two multimodal datasets, our model demonstrates superior performance. Notably, our robust results towards highlights the potential of foundational models in the realm of time series forecasting.

\section{Related Works}

\textbf{Pre-trained Large Language Models for Time Series.}
The recent development of Large Language Models (LLMs) has opened up new possibilities for time-series modeling. LLMs, such as T5 \citep{T5}, GPT \citep{GPT}, GPT-2 \citep{radford2019language}, GPT-3 \citep{Brown2020LanguageMA}, GPT-4 \citep{GPT-4}, LLaMA \citep{Touvron2023LLaMAOA}, have demonstrated a strong ability to understand complex dependencies of heterogeneous textual data and provide reasonable generations. Recently, there is growing interest in applying language models to time series tasks~\citep{jin2024timellm, gruver2024large}. For example, \citeauthor{promptcast}  naively convert time series data to text sequence inputs and achieves encouraging results. ~\citeauthor{TEST} propose text prototype-aligned embedding to enable LLMs to handle time series data. 
 In addition, \citeauthor{temporal_meet_llm} present an innovative approach towards leveraging LLMs for explainable financial time series forecasting. 
 The works in \citep{one_fits_all} and \citep{chang2023llm4ts} are the most relevant ones to our work, as they both introduce approaches for time-series analysis by strategically leveraging and fine-tuning LLMs. However, these studies directly employ time series data to construct embeddings, without adequately capturing the inherent and unqiue characteristics of time series data which is challenging to decouple such information within the LLMs~\citep{shin2020autoprompt}. In addition, there is still very limited work on LLM for multimodal data with time series. METS \citep{ECG} is one of the early works pursuing this direction. While the experiment results are encouraging, it is difficult to extend METS to other modalities since the embedding alignment between time series and texts are specific. Please refer to the suvery papers ~\citep{jin2023large, jin2024position} for further references of time series meeting LLMs.

\textbf{Prompt tuning.}
Prompt tuning is an efficient, low-cost way of adapting a pre-trained foundation model to new downstream tasks which has been adapted to downstream tasks across various domains. In NLP domain,  soft prompts with trainable representation are used through prompt-tuning \citep{prompt_tuning} or prefix-tuning \citep{prefix_tuning}. Prompting techniques have also been extended to CV tasks like object detection\citep{GLIP} and image captioning \citep{GLIPv2}, etc and other domains such as misinformation~\citep{zhang2024guiding}. Multimodal works, such as CLIP \citep{radford2021clip}, use textual prompts to perform image classification and achieve SOTA performance. In addition, L2P~\citep{l2p} demonstrates the potential of learnable prompts stored in a shared pool to enable continual learning without rehearsal buffer, and Dualprompt~\citep{dualprompt} introduces a dual-space prompt architecture, maintaining separate prompt encodings for general knowledge and expert information, etc. Our research builds upon these concepts by exploring the use of prompt design from indicative bias specifically for temporal reasoning and knowledge sharing across time series forecasting problems.

\begin{figure}[t]
    \centering
    \includegraphics[width=1.0\columnwidth]{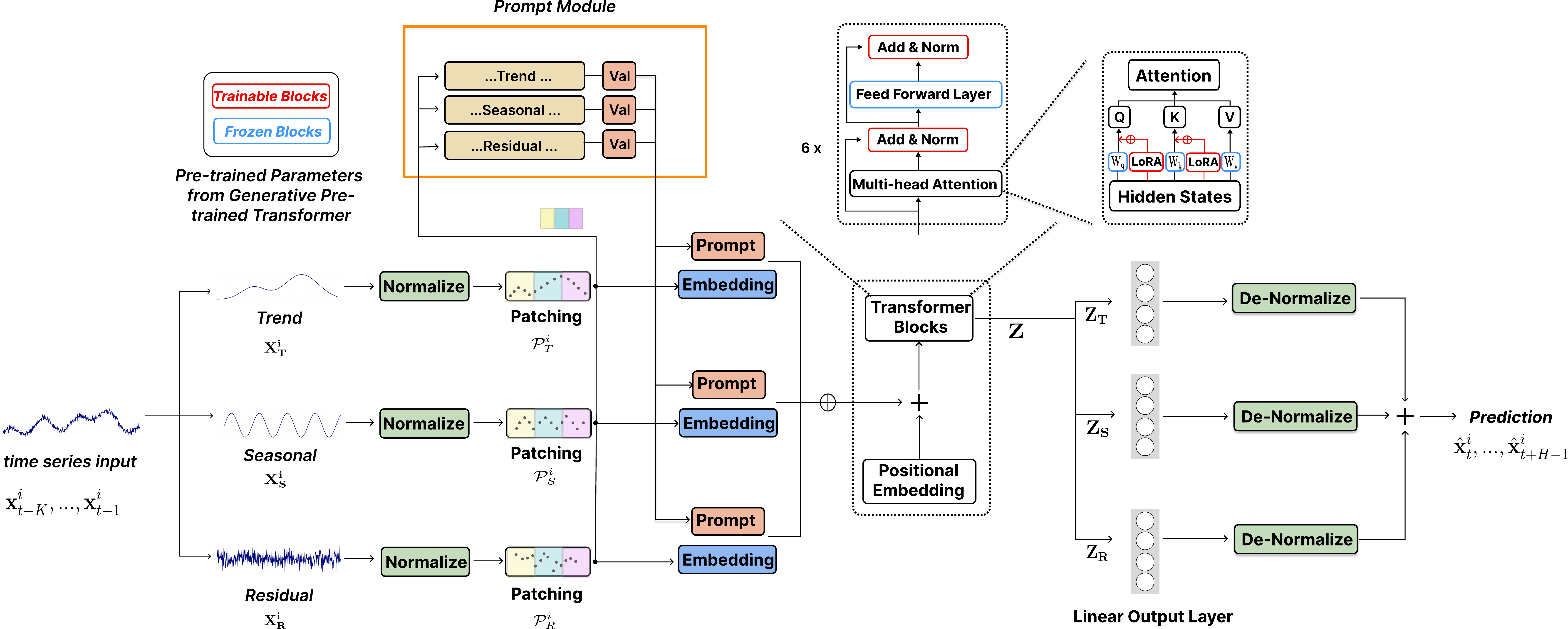}
    \captionsetup{font=small} 
    \caption{The architecture of proposed \modelname-GPT. The trend $X_T$, seasonal $X_S$ and residual $X_R$ components are treated as different semantic inductive biases to feed into the pre-trained transformer.}
    \label{fig:structure_2}
\end{figure}

\section{Methodology}
In our work, we adopt a hybrid approach that incorporates the robustness of statistical time series analysis with the adaptability of data-driven methods. 
As shown in Figure~\ref{fig:structure_2}, we propose a novel integration of seasonal and trend decomposition from STL~\citep{cleveland1990stl} into the pre-trained transformers. 
This strategy allows us to exploit the unique strengths of both statistical and machine learning methods, enhancing our model's capacity to handle time series data efficiently. 
Moreover, a semi-soft prompting approach is introduced to enhance the adaptability of pre-trained models for handling time series data. This innovative approach enables the models to merge their extensive learned knowledge with the unique requirements intrinsic to time series analysis. 

\subsection{Problem Definition}
Given observed values of previous $K$ timestamps, the task of multivariate time-series forecasting aims to predict the values for the next $H$ timestamps. That is,
\begin{equation}
\hat{\mathbf{x}}_{t}^i, ..., \hat{\mathbf{x}}_{t+H-1}^i = F(\mathbf{x}^i_{t-K},...,\mathbf{x}^i_{t-1};\mathbf{V}^i;\Phi)
\end{equation}
where $\hat{\mathbf{x}}_{t}^i, ..., \hat{\mathbf{x}}_{t+H-1}^i$ is the vector of $H$-step estimation from timestamp $t$ of channel $i$ corresponding to the $i$-th feature. Given the historical values $\mathbf{x}_{t-K}^i,...,\mathbf{x}_{t-1}^i$, it can be inferred by model $F$ with parameter $\Phi$ and prompt $\mathbf{V}^i$. In anticipation of the foundational model's strong generalization capabilities across unseen datasets, we default to a zero-shot learning configuration in the absence of specific indications. This approach entails that the model is not privy to the target dataset's history value and horizon value during the training process.

\subsection{Time Series Input Representation}
\label{STL_section}

For time series data, representing the complex input by decomposing it into meaningful components, such as trend and season components, can help extract information optimally. In this paper, given the input $X\in\mathbb{R}^{n\times L}$, where $n$ is the feature (channel) size and $L$ it the length of the time series, the additive STL decomposition~\citep{cleveland1990stl} can be represented as:
\begin{equation}
    X^i = X_T^i + X_S^i + X_R^i.
\end{equation}
Here, $i$ is the channel index (corresponding to a certain covariate) for multivariate time series input, and the trend $X_T\in\mathbb{R}^{n\times L} =\frac{1}{m}\sum^k_{j=-k}X_{t+j}$ captures the underlying long-term pattern in the data, where $m=2k+1$ and $k$ is the averaging step size. The seasonal component $X_S\in\mathbb{R}^{n\times L}$ encapsulates the repeating short-term cycles, which can be estimated after removing the trend component. The residual component $X_R\in\mathbb{R}^{n\times L}$ represents the remainder of the data after the trend and seasonality have been extracted. 
Note that, in practice, it is suggested to leverage as much information as possible to achieve a more precise decomposition. However, in consideration of computational efficiency, we opt not to use the STL decomposition on the largest possible data window on each instance. Instead, we perform local decomposition within each instance using a fixed window size. Inspired by N-BEATs~\citep{n-beats}, we introduce learnable parameters for estimating the various local decomposition components. Same for the others. This principle applies to other components of the model as well.
In Appendix~\ref{proof}, we establish a connection between time series forecasting and frequency domain prediction, where our findings indicate that decomposition significantly simplifies the prediction process. Note that such decomposition is of more importance in current transformer-based methods as the attention mechanism, in theory, may not disentangle the disorthogonal trend and season signals automatically:
\begin{theorem}
\label{theorem_3_1}
Suppose that we have time series signal $X = X_{Tt} + X_{St} + X_{Rt},t\in[t_1,t_n]$. Let $E=\{e_1,e_2,...,e_n\}$ denote a set of orthogonal bases. Let $E_S\subseteq E$ denote the subset of $E$ on which $X_{St}$ has non-zero eigenvalues and $E_T\subseteq E$ denote the subset of $E$ on which $X_{Tt}$ has non-zero eigenvalues. If $X_{St}$ and $X_{Tt}$ are not orthogonal, i.e. $\sum_{i=1}^n X_{Tt}^iX_{St}^i\not=0$, then $E_T\cap E_S\not=\emptyset$, i.e. $E$ can not disentangle the two signals onto two disjoint sets of bases.
\end{theorem}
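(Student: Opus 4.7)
I would argue by contrapositive: assume $E_T \cap E_S = \emptyset$ and derive that $\langle X_{Tt}, X_{St}\rangle = 0$, which contradicts the non-orthogonality hypothesis.

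The first step is to expand both signals in the basis $E$. Interpreting the phrase ``has non-zero eigenvalues on $e_i$'' as ``has a non-vanishing coefficient in the expansion with respect to $e_i$'', I write
\begin{equation*}
X_{Tt} \;=\; \sum_{i=1}^{n} a_i\, e_i, \qquad X_{St} \;=\; \sum_{j=1}^{n} b_j\, e_j,
\end{equation*}
so that by definition $E_T = \{e_i : a_i \neq 0\}$ and $E_S = \{e_j : b_j \neq 0\}$. Next I would compute the inner product using bilinearity,
\begin{equation*}
\sum_{t=1}^{n} X_{Tt}^{(t)} X_{St}^{(t)} \;=\; \langle X_{Tt}, X_{St}\rangle \;=\; \sum_{i=1}^{n}\sum_{j=1}^{n} a_i b_j \,\langle e_i, e_j\rangle.
\end{equation*}
Because $E$ is orthogonal, the cross terms $\langle e_i, e_j\rangle$ vanish for $i \neq j$, reducing the double sum to a single diagonal sum $\sum_{i} a_i b_i \|e_i\|^2$. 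Since $a_i b_i$ is non-zero only when $e_i \in E_T \cap E_S$, the inner product collapses to
\begin{equation*}
\langle X_{Tt}, X_{St}\rangle \;=\; \sum_{e_i \in E_T \cap E_S} a_i b_i\, \|e_i\|^2.
\end{equation*}

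Under the contrapositive hypothesis $E_T \cap E_S = \emptyset$, this sum is empty and hence zero, contradicting the assumption $\sum_{i} X_{Tt}^{i} X_{St}^{i} \neq 0$. Therefore $E_T \cap E_S \neq \emptyset$, so no single orthogonal basis can place the support of $X_{Tt}$ and $X_{St}$ on disjoint index sets. I do not expect any serious obstacle; the argument is a one-line consequence of bilinearity plus orthogonality of $E$. The only delicate point is clarifying the terminology ``non-zero eigenvalue on $e_i$'' to mean a non-zero coefficient in the $E$-expansion; once that is fixed, the proof is immediate, and the conceptual payoff (that an attention module operating in any fixed orthogonal representation inherits this entanglement whenever trend and seasonality are correlated in time) follows as a remark rather than as an extra step.
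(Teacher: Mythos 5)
Your proof is correct and follows essentially the same route as the paper's: both expand $X_{Tt}$ and $X_{St}$ in the orthogonal basis, use bilinearity plus orthogonality to collapse the inner product to $\sum_i a_i b_i \|e_i\|^2$, and conclude that a non-zero inner product forces some common index with $a_i b_i \neq 0$. The only difference is cosmetic — you phrase it as a contrapositive while the paper argues directly — and, incidentally, your write-up avoids a small sign typo in the paper's proof (which states ``$\sum_{i=1}^n S(t_i)T(t_i)=0$'' where the hypothesis is $\neq 0$).
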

The proof can be found in Appendix~\ref{proof}.  
{Theorem ~\ref{theorem_3_1} states that if trend and seasonal components of a time series are non-orthogonal, they cannot be fully disentangled and separated by any set of orthogonal bases. According to ~\citep{one_fits_all}, the self-attention layer naturally learns an orthogonal transformation, akin to PCA's decomposition into orthogonal principal components. Thus, applying attention directly to a raw time series would be ineffective at disentangling non-orthogonal trend and seasonal components.} For the remainder of the methodology section, we will utilize the trend component $X_T$ as the exemplary case. 
We first apply reverse instance normalization~\citep{kim2022reversible} on each global component and local input respectively to facilitate knowledge transfer and minimize losses introduced by distribution shifts. 
That is, for each sample $x_{T t}$ from $X_T$'s -th channel of time $t$,  $\hat{x}_{T t}=\gamma_T\left({x_{T t}-\mathbb{E}_t\left[x_{T t}\right]}/{\sqrt{\operatorname{Var}\left[x_{T t}\right]+\epsilon_T}}\right)+\beta_T$, where $\mathbb{E}_t\left[x_{T t}\right]$ and $\operatorname{Var}\left[x_{T t}^{i}\right]$  are the instance-specific mean and standard deviation; $\gamma_T$ and $\beta_T$ are trainable affine parameter vectors for trend component.
In addition, we implement a mean square error (MSE) reconstruction loss function to ensure that the local decomposition aligns with the global STL decomposition observed in the training data. The decomposition loss function, denoted as $L_{Dec} = {f_T(X, \theta_T) -  \hat{X}^g_T}$, where $f_T$ is the function with learnable variables 
$\theta_T$ for mapping local decomposition to be aligned with the global decomposition after normalization $\hat{X}^g_T$.
Then, following ~\citep{Yuqietal-2023-PatchTST},
we combine time-series patching with temporal encoding to extract local semantics by aggregating adjacent time steps into tokens, significantly increasing the historical horizon while reducing redundancy. Specifically, we get the patched token for the $i$-th  normalized trend component for $f_T(X^i, \theta_T)$  with $P_T^i \in \mathbb{R}^{L_P\times N}$, where $L_P$ is the patch length, $N = \left\lfloor\frac{(L-L_P)}{S}\right\rfloor+2$ is the number of patches and $S$ is the stride. We get patched tokens $P_S^i$ and $P_R^i$ in the same way. 
Then, we feed the patched time series tokens to the embedding layer $f$  to get the representation $\mathcal{P}_T^i = f(P_T^i)\in \mathbb{R}^{P \times L_E}$ for the language model architecture to transfer its language capabilities to the novel sequential modality effectively, where $L_E$ is the embedding size.

\subsection{Prompt Design}
\label{prompt_design}
Prompting techniques have demonstrated remarkable effectiveness across a wide range of applications by leveraging the power of task-specific knowledge encoded within carefully crafted prompts. This success can be attributed to the prompts' ability to provide a structured framework that aligns the model's outputs with the desired objectives, resulting in enhanced accuracy, coherence, and overall quality of the generated content. Previous works mostly focus on utilizing a fixed prompt to boost the pre-trained models' performance through fine-tuning~\citep{Brown2020LanguageMA}. In pursuit of leveraging the rich semantic information encapsulated within various time series components, our research introduces a semi-soft prompting strategy. This approach involves the generation of distinct prompts corresponding to each primary time series component: trend, seasonality, and residuals. \textit{`Predict the future time step given the [trend, season, residual]'} serves as the template from which we derive our component-specific prompts. These are subsequently concatenated with the relevant component data, thereby enabling a more refined modeling approach that acknowledges the multifaceted nature of time series data. Specifically, commence by translating the trend-specific prompts into the word embedding space, followed by a linear transformation to derive the learnable trend prompt vector $V_t$. This so-called `semi-soft' prompt design thus strikes a balance between the interpretability and initial guidance of a `hard' prompt and the adaptability of a `soft' prompt.   The combined embedding of this prompt with the time series representation is encapsulated by:
\begin{equation}
    \boldsymbol{x}_T=\left[V_{t} ;  \mathcal{P}_T\right]
\end{equation}
Here, $\boldsymbol{x}_T$ denotes the aggregation of embeddings along the temporal axis. This concatenation procedure is mirrored for the seasonality and residual components, yielding $\boldsymbol{x}_S$ and $\boldsymbol{x}_R$, respectively. This framework allows for an instance to be associated with specific prompts as the inductive bias, jointly encoding critical information relevant to the forecasting task, such as recurring patterns, overarching trends, and inherent seasonality effects. It is of note that our prompt design maintains a high degree of adaptability, ensuring compatibility with a broad spectrum of time series analyses. In particular, similar with ~\citep{dualprompt}, we introduce prompt pool as an extension of our design of soft prompt in Appendix ~\ref{app:furthe_Analysis},
aimed at accommodating the characteristically non-stationary nature of real-world time series data and the associated distributional shifts~\citep{huang2020causal, fan2023dish}. This adaptability underscores the potential of our prompting strategy to evolve in congruence with the complexities presented by diverse time series datasets.

\subsection{Generative Pre-trained Transformer Architecture}
\label{gpt_arch}
We use the decoder-based generative pre-trained transformer (GPT) as the backbone to build the basis for the time-series representations.
To utilize the decomposed semantic information in a data-efficient way, we choose to concatenate the prompt and different components together and put them into the GPT block. Specifically, the input of our time series embedding can be formulated as: $\boldsymbol{x} = \boldsymbol{x}_T \oplus \boldsymbol{x}_S \oplus \boldsymbol{x}_R$,
where $\oplus$ corresponds to concatenate operation and $\boldsymbol{x}_*$ can be treated as different sentences. Note that, another alternative way is to build separate GPT blocks to handle different types of time series components. Inside the GPT block, we adopt the strategy used in~\citep{one_fits_all} and opt to update the gradients of the position embedding layer and layer normalization layers. In addition, 
we employ LORA (Low-Rank Adaptation)~\citep{hu2021lora} to adapt to varying time series distributions efficiently as it performs adaptation with significantly fewer parameters.

The overall forecasting result should be an additive combination of the individual component predictions. Finally, the outputs $Z$ of $n$ features from the GPT block can be split into $Z_T, Z_S, Z_R\in \mathbb{R}^{n\times P \times L_E}$ (output corresponding to trend, seasonality, and residual) based on their positions in the input order. Each $Z$ component is then fed into fully connected layers to generate predictions $Y_*\in \mathbb{R}^{n\times L_H}$, where $L_H$ is the prediction length. The forecast results can be formulated as follows: 
$\hat{Y}= \hat{Y_T} + \hat{Y_S}+ \hat{Y_R}$.
 After that, we de-normalize $Y$ according to the corresponding statistics used in the normalization step: $\hat{Y}_{t}^{i}=\sqrt{\operatorname{Var}\left[x_{t}^{i}\right]+\epsilon} \cdot\left(\frac{Y_{t}^{i}-\beta}{\gamma}\right)+\mathbb{E}_t\left[x_{t}^{i}\right]$. 
By recombining these additive elements, our approach aims to reconstruct the full temporal trajectory most representative of the underlying dynamics across varied timescales captured by the decomposed input representation.   

In order to achieve interpretability, we explore both linear and nonlinear interactions among trend, seasonal, and residual components in their contribution to the final output. Therefore we construct an interpretable generalized additive model (GAM)~\citep{hastie2017generalized} based on GPT's output to learn how the three components interact with each other, which is:
$
     g(Y) = F_\emptyset +\sum_i F_i(x_i) + \sum_tF_{\mathcal{I}_t}(x_{\mathcal{I}_t}),
$
where $F_\emptyset$ is a normalizing constant,  the footnote $i$ corresponds to the trend, season, and residual component. $\{\mathcal{I}_t\}$ is of a set of multiple interact components.  Then, we can calculate the first-order sensitivity index ~\citep{sobol1990sensitivity} or SHAP (SHapley Additive exPlanations) value~\citep{lundberg2017unified} to measure the sensitivity of each component.

\section{Experiments}

Our experiments are conducted using widely-recognized time series benchmark datasets, such as those detailed in~\citep{zhou2021informer}, alongside the GDELT dataset~\citep{GPT4MTS} and our proposed TETS dataset. These comprehensive datasets encompass a diverse array of domains, including, but not limited to, electricity (ETTh1, ETTh2, ETTm1, ETTm2, Electricity), traffic (Traffic), climate (Weather), news (GDELT), and finance (TETS), with data sampling frequencies ranging from minutes, hours to days and quarters. The inclusion of such varied datasets ensures a thorough evaluation of our experimental setups across multiple dimensions of time series data.
Due to the absence of a standard test split for zero-shot comparison, we adopt a uniform training methodology to ensure fair performance assessment across datasets unseen during model training. Specifically, to advance the paradigm of foundation models within the domain of transfer learning, we investigate a zero-shot setting for our experiments, which is the `many-to-one' scenario: training on multiple source datasets followed by zero-shot forecasting on a distinct, unseen target dataset.  For instance, when evaluating performance on a `weather' dataset, our model is pre-trained on diverse datasets including `ETTm1, ETTm2, ETTh1, ETTh2, Electricity, and Traffic' without exposure to the target weather data. This 'many-to-one' approach differs fundamentally from `one-to-one' or `one-to-many' configurations~\citep{zhang2022self} by using diverse pre-training datasets from varied domains, like traffic and weather data. This diversity, while rich, introduces complexity, as the model must identify patterns across potentially misaligned samples, complicating learning compared to models trained and tested on in distribution datasets.

We use GPT-2~\citep{radford2019language} as our backbone to build TEMPO\footnote{TEMPO's source code can be found at: \url{https://github.com/DC-research/TEMPO}} as shown in Figure~\ref{fig:structure_2}. 
To comprehensively demonstrate the performance of our model, we compare \modelname~ with the following baselines over long-term forecasting and short-term forecasting: 
(1) The pre-trained LLM-based models, including Bert~\citep{Bert/NAACL/Jacob}, GPT2~\citep{radford2019language, one_fits_all}, T5~\citep{T5}, and LLaMA~\citep{Touvron2023LLaMAOA}. (2) The Transformer-based models, including the PatchTST~\citep{Yuqietal-2023-PatchTST},  FEDformer~\citep{zhou2022fedformer}, ETSformer~\citep{woo2022etsformer} and Informer~\citep{zhou2021informer}. (3) The variant of Linear-based models, DLinear~\citep{dlinear} model. (4) General 2D-variation model, TimesNet~\citep{timesnet}.  Following traditional forecasting works, we report the Mean Squared Error(MSE) and Mean Absolute Error (MAE) results in this section. Please refer to the Appendix~\ref{exper_setting} and \ref{app:baseline} for the detailed experiment setting and baselines.

\subsection{Zero Shot Long-term forecasting Results}

\begin{table}[ht]
\centering
\caption{Transfer learning of long-term forecasting results on time series benchmark datasets. We use prediction length $O \in \{96, 192, 336, 720\}$. A lower MSE indicates better performance.   Hereafter, for the tables, the best results are marked in bold and the second optimal in underlined, respectively with MSE/MAE.}
    \label{tab:96_full}
\scalebox{0.7}{
    \renewcommand{\arraystretch}{1.2}
    \centering
    \begin{tabular}{c|c|c c c c c c c}
    \hline\hline
      \multirow{2}{*}{Horizon} & \multirow{2}{*}{Model} & ECL & Traffic & Weather & Ettm1 & Ettm2 & Etth1 & Etth2 \\ \cline{3-9}
        ~ & ~ & MSE/MAE & MSE/MAE & MSE/MAE & MSE/MAE & MSE/MAE & MSE/MAE & MSE/MAE \\ \hline
        \multirow{9}{*}{96} & TEMPO  & \textbf{0.178}/\textbf{0.276} & \textbf{0.476}/\textbf{0.343} & \textbf{0.211}/\textbf{0.254} & \textbf{0.438}/\textbf{0.424} & \textbf{0.185}/\textbf{0.267} &  \textbf{0.400}/\textbf{0.406} & \textbf{0.301}/\textbf{0.353} \\ 
        ~ & GPT2  & 0.193/0.288 & 0.522/0.380 & 0.226/0.274 & \underline{0.486}/\underline{0.438} & 0.193/0.273 & 0.400/0.416 & 0.320/0.363 \\ 
        ~ & T5  & \underline{0.185}/\underline{0.282} & \underline{0.508}/\underline{0.366} & 0.217/\underline{0.271} & 0.529/0.464 & \underline{0.190}/\underline{0.268} & \textbf{0.400}/\underline{0.409} & 0.328/0.366 \\ 
        ~ & PatchTST  & 0.489/0.546 & 1.023/0.641 & 0.247/0.301 & 0.733/0.554 & 0.273/0.345 & 0.57/0.518 & 0.379/0.412 \\ 
        ~ & Timesnet  & 0.293/0.369  & 0.585/0.401 & 0.247/0.295 & 0.518/0.470 & 0.202/0.290 & \underline{0.407}/0.423 & \underline{0.315}/\underline{0.362} \\ 
        ~ & FEDformer  & 0.300/0.399 & 0.835/0.564 & 0.292/0.346 & 0.698/0.553 & 0.665/0.634 & 0.509/0.502 & 0.385/0.426 \\ 
        ~ & ETSformer  & 0.707/0.638 & 1.419/0.795 & 0.453/0.416 & 1.117/0.678 & 0.353/0.404 & 0.469/0.457 & 0.405/0.428 \\ 
        ~ & Informer  & 0.512/0.531 & 1.400/0.830 & 0.837/0.711 & 0.880/0.657 & 0.263/0.360 & 0.642/0.562 & 0.704/0.651 \\ 
        ~ & DLinear  & 0.195/0.292 & 0.609/0.424 & \underline{0.212}/0.275 & 0.624/0.522 & 0.264/0.352 & 0.414/0.421 & 0.334/0.389 \\ 
       \hline\hline
        \multirow{9}{*}{192} & TEMPO  & \textbf{0.198}/\textbf{0.293} & \textbf{0.496}/\textbf{0.355} & \textbf{0.254}/\textbf{0.298} & \textbf{0.461}/\textbf{0.432} & \textbf{0.243}/\textbf{0.304} & \textbf{0.426}/\textbf{0.421} & \textbf{0.355}/\textbf{0.389} \\ 
        ~ & GPT2  & 0.207/\underline{0.300} & 0.533/0.387 & 0.273/0.312 & \underline{0.516}/0.461 & 0.254/0.312 & 0.441/0.433 & \underline{0.381}/\underline{0.402} \\ 
        ~ & T5  & \underline{0.205}/0.302 & \underline{0.524}/\underline{0.374} & 0.277/0.321 & 0.523\underline{/0.454} & \underline{0.246}/\underline{0.306} & \underline{0.428}/\underline{0.426} & 0.413/0.410 \\ 
        ~ & PatchTST  & 0.465/0.535 & 0.992/0.633 & 0.277/0.324 & 0.739/0.563 & 0.299/0.355 & 0.580/0.528 & 0.387/0.417 \\ 
        ~ & Timesnet  & 0.283/0.366 & 0.64/0.431 & 0.316/0.342 & 0.55/0.490 & 0.261/0.318 & 0.439/0.439 & 0.394/0.406 \\ 
        ~ & FEDformer  & 0.390/0.468 & 0.869/0.579 & 0.372/0.426 & 0.819/0.608 & 0.358/0.416 & 0.683/0.596 & 0.921/0.748 \\ 
        ~ & ETSformer  & 0.721/0.645 & 0.995/0.658 & 0.545/0.466 & 1.598/0.803 & 0.390/0.416 & 0.548/0.503 & 0.476/0.468 \\ 
        ~ & Informer  & 0.625/0.619 & 0.872/0.506 & 0.431/0.455 & 1.461/0.892 & 0.494/0.516 & 0.798/0.632 & 0.455/0.883 \\ 
        ~ & DLinear  & 0.204/0.300 & 0.595/0.412 & \underline{0.259}/\underline{0.308} & 0.599/0.511 & 0.292/0.365 & 0.439/0.437 & \underline{0.381}/0.415 \\ 
        \hline\hline
        \multirow{9}{*}{336} & TEMPO  & \textbf{0.209}/\textbf{0.309} & \textbf{0.503}/\textbf{0.356} & \textbf{0.292}/\underline{0.332} & \textbf{0.515}/\textbf{0.467} & \textbf{0.309}/\textbf{0.345} & \textbf{0.441}/\textbf{0.430} & \textbf{0.379}/\textbf{0.408} \\ 
        ~ & GPT2  & 0.231/0.324 & 0.566/0.421 & 0.441/0.379 & \underline{0.571}/\underline{0.502} & \underline{0.315}/\underline{0.35} & 0.449/0.440 & 0.394/0.416 \\ 
        ~ & T5  & \underline{0.229}/\underline{0.321} & \underline{0.550}/\underline{0.391} & \underline{0.330}/\textbf{0.330} & 0.572/0.504 & 0.316/\underline{0.346} & \underline{0.442}/\underline{0.438} & 0.416/0.427 \\ 
        ~ & PatchTST  & 0.531/0.569 & 0.987/0.626 & 0.317/0.347 & 0.755/0.576 & 0.342/0.382 & 0.677/0.573 & 0.386/0.425 \\ 
        ~ & Timesnet  & 0.733/0.633 & 1.609/0.864 & 0.359/0.372 & 0.638/0.532 & 0.38/0.392 & 0.555/0.503 & \underline{0.384}/\underline{0.413}  \\ 
        ~ & FEDformer  & 0.317/0.406 & 1.006/0.640 & 0.639/0.600 & 0.785/0.624 & 0.372/0.424 & 0.582/0.542 & -/5.755 \\ 
        ~ & ETSformer  & 0.862/0.707 & 0.940/0.621 & 0.487/0.444 & 1.154/0.682 & 0.409/0.428 & 0.728/0.585 & 0.446/0.451 \\ 
        ~ & Informer  & 1.222/0.863 & 0.978/0.507 & 0.370/0.412 & 0.949/0.631 & 0.788/0.622 & 1.125/0.810 & 1.389/0.848 \\ 
        ~ & DLinear  & 0.231/0.325 & 0.624/0.427 & 0.304/0.342 & 0.622/0.534 & 0.361/0.411 & 0.463/0.464 & 0.471/0.482 
        \\ \hline\hline
        \multirow{9}{*}{720} & TEMPO  & 0.279/0.355 & \textbf{0.538}/\textbf{0.376} & \textbf{0.370}/\textbf{0.379} & \textbf{0.591}/\textbf{0.509} & \textbf{0.386}/\textbf{0.395} & \underline{0.443}/\textbf{0.451} & \underline{0.409}/\underline{0.440} \\ 
        ~ & GPT2  & \underline{0.262}/\textbf{0.347} & {0.596}/\underline{0.399} & 0.484/0.422 & 0.646/\underline{0.54} & \underline{0.394}/\underline{0.397} & 0.445/\underline{0.454} & 0.434/0.448 \\ 
        ~ & T5  & 0.266/\underline{0.351} & \underline{0.578}/0.404 & 0.528/0.451 & 0.694/0.568 & \underline{0.394}/\underline{0.397} & \underline{0.443}/0.458 & 0.425/\underline{0.440} \\ 
        ~ & PatchTST  & 0.475/0.532 & 1.152/0.706 & 0.375/\underline{0.388} & 0.739/0.57 & 0.421/0.421 & 0.540/0.521 & 0.425/0.448 \\ 
        ~ & Timesnet  & 1.166/0.859 & 1.974/0.971 & 0.423/0.405 & 0.723/0.577 & 0.399/0.409 & \textbf{0.438}/0.461 & \textbf{0.394}/\textbf{0.431} \\ 
        ~ & FEDformer  & 0.423/0.48 & 0.965/0.652 & 0.409/0.425 & 0.816/0.614 & 0.455/0.462 & 0.688/0.618 & 0.427/0.452  \\ 
        ~ & ETSformer  & 0.666/0.640 & 0.798/0.518 & 0.592/0.506 & 1.038/0.665 & 0.444/0.438 & 0.615/0.561 & 0.446/0.466 \\ 
        ~ & Informer  & 0.881/0.778 & 1.532/0.800 & 1.133/0.842 & 0.779/0.616 & 1.075/0.725 & 0.836/0.687 & 1.330/0.866 \\ 
        ~ & DLinear  & \textbf{0.259}/0.352 & 0.623/0.42 & \underline{0.363}/0.389 & \underline{0.639}/0.559 & 0.515/0.490 & 0.467/0.481 & 0.639/0.559 \\  \hline\hline
        \multirow{9}{*}{Avg} & TEMPO  & \textbf{0.216}/\textbf{0.308} & \textbf{0.503}/\textbf{0.358} & \textbf{0.282}/\textbf{0.316} & \textbf{0.501}/\textbf{0.458} & \textbf{0.280}/\textbf{0.328} & \textbf{0.428}/\textbf{0.427} & \textbf{0.361}/\textbf{0.398} \\ 
        ~ & GPT2  & 0.223/0.315 & 0.554/0.397 & 0.356/0.347 & \underline{0.555}/\underline{0.485} & 0.289/0.333 & 0.436/0.436 & 0.382/0.407 \\ 
        ~ & T5  & 0.221/\underline{0.314} & \underline{0.540}/\underline{0.384} & 0.338/0.343 & 0.58/0.498 & \underline{0.287}/\underline{0.329} & \textbf{0.428}/\underline{0.433} & 0.396/0.411 \\ 
        ~ & PatchTST  & 0.49/0.545 & 1.039/0.652 & 0.304/0.340 & 0.741/0.566 & 0.334/0.376 & 0.592/0.535 & 0.394/0.425 \\ 
        ~ & Timesnet  & 0.619/0.557 & 1.202/0.667 & 0.336/0.354 & 0.607/0.517 & 0.311/0.352 & 0.460/0.457 & \underline{0.372}/\underline{0.403} \\ 
        ~ & FEDformer  & 0.358/0.439 & 0.919/0.609 & 0.428/0.449 & 0.780/0.600 & 0.463/0.484 & 0.616/0.565 & -/1.845 \\ 
        ~ & ETSformer  & 0.750/0.664 & 1.038/0.648 & 0.519/0.458 & 1.227/0.707 & 0.399/0.422 & 0.590/0.527 & 0.443/0.453 \\ 
        ~ & Informer  & 0.810/0.698 & 1.196/0.661 & 0.693/0.605 & 1.017/0.699 & 0.655/0.556 & 0.850/0.673 & 0.970/0.812 \\ 
        ~ & DLinear  & 0.222/0.317 & 0.613/0.421 & \underline{0.284}/\underline{0.329} & 0.621/0.531 & 0.358/0.405 & 0.446/0.451 & 0.456/0.461  \\ \hline\hline
    \end{tabular}
   }
\end{table}

Table ~\ref{tab:96_full} 
presents the performance of multiple time series forecasting models on MSE and MAE metrics across different prediction lengths under the `many-to-one' setting, with lower scores indicating more accurate forecasts. Our proposed model, \modelname, surpassed existing baselines on average over all prediction horizons across all datasets, highlighting the broad applicability of \modelname. Our model achieves the highest average performance scores. Specifically, it improves the weather and ETTm1 datasets by around 6.5\% and 19.1\%, respectively in MAE compared to the previous state-of-the-art model, PatchTST. It also secures the lowest error rates across numerous individual dataset-prediction length configurations. Compared to other pre-trained models for forecasting, \modelname~ consistently delivers the best results across different time series datasets. 
These results suggest that incorporating LLM with the well-designed prompt and implementing time series decomposition can contribute significantly to enhancing the accuracy and stability of zero-shot time series forecasting.

\subsection{Short-term Forecasting with Contextual Information}

\begin{table*}[ht]
    \caption{SMAPE results of EBITDA from TETS  and GDELT.  The result of EBITDA includes outliers removed where SMAPE exceeds 0.8/0.9. The best results are marked in bold and the second optimal in underlined respectively with 0.8 \& 0.9. (\textit{Sectors}: CC: Consumer Cyclical; CD: Consumer Defensive; Ind: Industrials; RE: Real Estate; \textit{Events}: 11: Disapprove; 17: Coerce; 19:Fight.)}
    \label{tab:smape_ebitda_results}
    
    \begin{small}
    \centering
    \scalebox{0.75}{
    \renewcommand{\arraystretch}{1.5}
    \centering
    \setlength\tabcolsep{3pt}
    \hspace{20pt}
    \begin{tabular}{c|c|c|c|c|c|c|c|c|c}
    \hline\hline

        \multicolumn{10}{c}{EBITDA Dataset}\\\hline
       Sectors & TEMPO &LLaMA & GPT2 & Bert & T5 & Informer & PatchTST & Reformer & DLinear  \\ \cline{1-10}
         CC & \textbf{32.27}/\textbf{33.48} & 33.13/34.31& 33.77/35.37 & 33.42/35.33 & \underline{32.65}/\underline{33.83} & 41.12/43.17 & 41.44/43.18 & 37.23/39.09 & 33.53/35.65  \\ \cline{2-10}
         CD & \textbf{25.9}/\textbf{26.25} & \underline{26.34}/26.62  & 26.86/27.15 & 27.34/28.3 & 26.44/\underline{26.79} & 35.65/36.08 & 31.6/31.98 & 29.93/30.36 & 27.01/28.04  \\ \cline{2-10}
         Ind & \textbf{26.7}/\textbf{27.42} & \underline{27.17}/\underline{27.98} & 27.9/28.63 & 27.89/28.95 & 27.3/28.12 & 34.83/35.87 & 33.84/34.87 & 30.23/31.28 & 27.59/28.84 \\ \cline{2-10}
         RE & \textbf{29.46}/\textbf{30.11} & \underline{29.63}/\underline{30.48} & 30.62/31.21 & 30.62/31.66 & 30.1/30.64 & 36.4/37.22 & 37.63/38.31 & 31.23/31.69 & 29.95/30.92  \\  
        \hline\hline

          \multicolumn{10}{c}{GDELT Dataset}\\\hline
           11 & \textbf{38.77}& 40.23 & 39.03 & \underline{38.89} & 39.04 & 42.00 & 40.45 & 46.72 & 40.14 \\ \cline{2-10}
         17 & \textbf{41.02} & 42.50 & 41.20 & \underline{41.10} & 41.24 & 44.44 & 42.72 & 48.08 & 42.45 \\ \cline{2-10}
         19 & \textbf{44.03} &45.49 & 44.17 & \underline{44.09} & 44.29 & 47.45 & 45.49 & 48.30 & 45.40 \\ \hline\hline
        

    \end{tabular}
    }
\end{small}
\end{table*}

\textbf{Dataset and metrics.} In this section, we introduce TETS, a new benchmark dataset built upon S\&P 500 dataset combining contextual information and time series, to the community. Following ~\citep{cao2023large}, we choose the symmetric mean absolute percentage error (SMAPE) as our metric in this section. Moreover, the GDELT is also used to verify the effectiveness the our proposed method. Please refer to Appendix~\ref{exper_setting_tets} and Appendix~\ref{GDELT} for the detailed dataset setting of TETS and GDELT;  Appendix~\ref{collect_dataset} for the proposed pipeline of collecting TETS dataset with both time series and textual information. 

\textbf{Contextual Information.} In order to incorporate the contextual information into our proposed \modelname,  we leverage the built-in tokenization capabilities of the generative pre-trained transformer to derive embeddings of input text. Then, we utilize these text embeddings corresponding to each time series instance, ${Text}$, to construct soft prompts with learnable parameters and concatenate them at the beginning of the input embedding, that is,  $\boldsymbol{x}={Text}\oplus \boldsymbol{x}_T \oplus \boldsymbol{x}_S \oplus \boldsymbol{x}_R$. Where the $\boldsymbol{x}_*$ for EBITDA is conducted with semi-soft prompt. This method is not strictly confined to our proposed model but can be feasibly applied in similar works to enhance their capability of handling and benefiting from contextual information. 
Comparisons with other design strategies of contextual information are provided in the Appendix~\ref{app:contextual} for further reference.

\textbf{Results.} From the transfer learning perspective, we choose to report the setting of `many-to-many', which means we train a model using in-domain sectors data and directly do the zero-shot test on all cross-domain sectors.
The SMAPE results of using different baseline models and our model on the TETS dataset and GDELT dataset are listed in Table~\ref{tab:smape_ebitda_results}  which is also zero-shot setting as data samples from those sectors are not seen during the training stage. Examining the results across all sectors, our proposed model, which combines time series data with supplementary summary (contextual) data, outperforms all the baseline methods in cross-domain sectors. Besides, we observe that transformer-based architectures training from scratch, specifically tailored for time series analysis—such as PatchTST, Informer, and Reformer~\citep{DBLP:conf/iclr/KitaevKL20-reformer}—tend to underperform in comparison to transformers pre-trained on linguistic datasets. This performance discrepancy indicates that the parameter initialization derived from pre-trained language models confers a superior starting point for model optimization. Consequently, these pre-trained models exhibit enhanced capabilities and adaptability within zero-shot learning contexts. Furthermore, in instances where the time series data exhibits a strong correlation to other modalities, such as textual information, devising an effective strategy to amalgamate these distinct modalities could lead to enhanced performance gains.

\section{Analysis}

\subsection{Ablation Study}
The provided ablation study, Table \ref{tab:long_term_ablation_study_final}, offers critical insights into the impact of the prompt and decomposition components on the performance of our model. In this table, the MSE and MAE on various datasets are reported for four scenarios: the original model configuration (`TEMPO'); the model without the prompt design and without decomposition, which is the setting of  `w/o Dec'; the model without prompt design (`w/o Pro') and the model without the decomposition loss alignment ('w/o Dec Loss').
Averagely, the exclusion of the prompt component leads to a deterioration in the model's predictive accuracy, indicating the prompt can be an important factor in enhancing the model's overall performance.
The omission of decomposition loss typically results in a decline in model performance. Decomposition loss facilitates the use of a richer historical dataset, which enhances the quality of individual decomposition components. This improvement in component quality is important for the model's forecasting accuracy.
 Note that employing the prompt design in isolation, without the support of decomposition, can detrimentally impact the backbone model's performance in most cases. This can be due to the difficulties in effectively prompting time series data from its raw form with limited semantic information.
 These findings underscore the essential nature of both prompt and decomposition elements in achieving robust forecasting capabilities under the zero-shot setting.

\subsection{Interpreting Model
Predictions}
\label{interpret}

\begin{figure}
    \centering
    
    \begin{minipage}{0.48\textwidth}
       
        \scalebox{0.63}{
            \renewcommand{\arraystretch}{1.5}
            \setlength\tabcolsep{3pt}
        \begin{tabular}{c|c|c|c|c|c}
        \hline\hline
    ~ & ~ & TEMPO & w/o Dec & w/o Pro & w/o Dec Loss  \\ \cline{3-6}
        ~ & ~ & MSE/MAE & MSE/MAE & MSE/MAE & MSE/MAE \\ \hline\hline
        \multirow{5}{*}{ECL} & 96 & \textbf{0.178}/\textbf{0.276} & 0.195/0.294 & \underline{0.185}/\underline{0.281} & 0.191/0.293  \\ \cline{2-6}
        ~ & 192 & \underline{0.198}/\textbf{0.293} & 0.210/0.301 & \textbf{0.196}/\underline{0.295} & 0.205/0.305  \\ \cline{2-6}
        ~ & 336 & \textbf{0.209}/\textbf{0.309} & 0.237/0.328 & \underline{0.225}/\underline{0.318} & 0.243/0.337  \\ \cline{2-6}
        ~ & 720 & 0.279/0.355 & 0.271/\textbf{0.351} & \underline{0.269}/0.359 & \textbf{0.262}/\underline{0.353}  \\ \cline{2-6}
        ~ & Avg & \textbf{0.216}/\textbf{0.308} & 0.228/0.319 & \underline{0.219}/\underline{0.313} & 0.225/0.322  \\ \hline\hline
        \multirow{5}{*}{Ettm1} & 96 & \underline{0.438}/\textbf{0.424} & 0.516/0.447 & 0.452/0.431 & \textbf{0.428}/\underline{0.425}  \\ \cline{2-6}
        ~ & 192 & \textbf{0.461}/\textbf{0.432} & 0.518/0.462 & \underline{0.47}/\underline{0.45} & 0.494/0.463  \\ \cline{2-6}
        ~ & 336 & \textbf{0.515}/\textbf{0.467} & 0.622/0.515 & \underline{0.519}/\underline{0.474} & 0.544/0.492  \\ \cline{2-6}
        ~ & 720 & \underline{0.591}/\textbf{0.509} & 0.644/0.50 & \textbf{0.582}/\underline{0.51} & 0.594/0.521  \\ \cline{2-6}
        ~ & Avg & \textbf{0.501}/\textbf{0.458} & 0.575/0.481 & \underline{0.506}/\underline{0.466} & 0.515/0.475  \\ \hline\hline
    \end{tabular}
        }
        \captionof{table}{Ablation study on TEMPO.}
        \label{tab:long_term_ablation_study_final}
    \end{minipage}
    \begin{minipage}{0.48\textwidth}
        \centering
        \includegraphics[width=\textwidth]{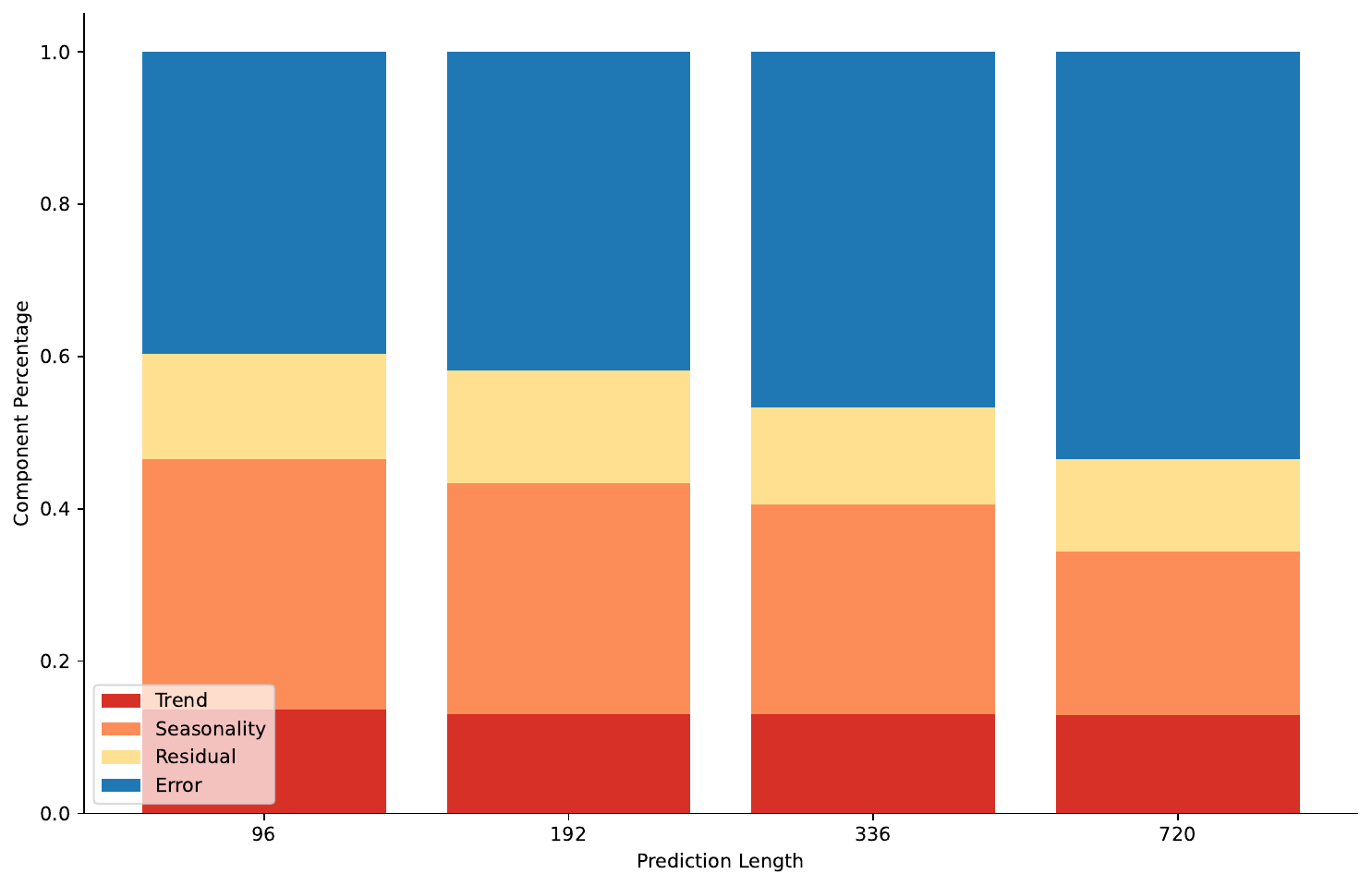}
        \captionof{figure}{The SHAP values of decomposed components of TEMPO for ETTm1.}
        \label{fig:shap_m1}
    \end{minipage}
\end{figure}

SHAP (SHapley Additive exPlanations) values serve as a comprehensive measure of feature importance, quantifying the average contribution of each feature to the prediction output across all possible feature combinations. As shown in Figure~\ref{fig:shap_m1}, when applied to our seasonal and trend decomposition, the SHAP values from the generalized additive model (GAM) suggest a dominant influence of the seasonal component on the model's predictions, implying a significant dependency of the model on the overall 
recurring patterns within the data. While the directional shifts of ETTm1 dataset's contribution is relatively stable.  The escalating values in the 'Error' column, which denote the discrepancy between the model's predictions and the ground truth, indicate a potential decline in the model's accuracy as the prediction length increases which is indeed observed in most experiments run. In this context, the STL decomposition proves invaluable as it enables us to identify and quantify the individual contributions of each component to the overall predictions, as demonstrated by the SHAP values. This detailed understanding can yield critical insights in how the pre-trained transformer is interpreting and leveraging the decomposing pre-processing step, thereby providing a robust foundation for model optimization and enhancement. SHAP values for weather dataset can be found at Figure~\ref{more_shap}.

\section{Conclusion}

This paper proposes a soft prompt based generative transformer, TEMPO, which achieves state-of-the-art performance in zero-shot time series forecasting. We introduce the novel integration of prompts and seasonal trend decomposition together within a pre-trained Transformer-based backbone to allow the model to focus on appropriately utilizing knowledge from different temporal semantics components. Moreover, we  demonstrate the effectiveness of TEMPO with multimodel input, effectively leveraging contextual information in time series forecasting. Lastly, with extensive experiments, we highlight the superiority of TEMPO in accuracy, and generalizability. One potential limitation worth further investigation is that superior LLMs with better numerical reasoning capabilities might yield better results. In addition, the encouraging results of TEMPO on the zero-shot experiments shed light into effective foundational models for time series.

\newpage
\section*{Acknowledgement}

This work is partially supported by the NSF Award \#2125142 and NSF Award \#2226087. The funding from these sources has been a cornerstone in enabling us to bring our project to fruition. We would like to extend our thanks to Yizhou Zhang, James Enouen, Qiang Huang, Chuizheng Meng, and Hao Niu for their invaluable discussions and insights in shaping the direction and execution of our work. We are also deeply grateful to the anonymous reviewers for their rigorous review process. Their detailed comments and constructive suggestions have significantly contributed to the improvement of this paper. The time and effort they invested in providing feedback have been invaluable and have greatly assisted us in refining our work.

\bibliography{iclr2024_conference}
\bibliographystyle{iclr2024_conference}

\newpage
\appendix

\section{Showcases}
\label{app:showcase}

\subsection{Compare with GPT4TS}
In Figure \ref{fig:visualization_etth1}, \ref{fig:visualization_etth2}, \ref{fig:visualization_ettm1}, \ref{fig:visualization_ettm2}, \ref{fig:visualization_weather}, we plot the comparison of the predicted value from our model and GPT4TS model given a look-back window. As shown in the datasets, we are able to predict close to the ground truth, which is also shown through our superior performance over other models in table \ref{tab:96_full}. We select time series with different characteristics under different prediction lengths $O \in \{96, 192\}$: time series with high variability (Figure \ref{fig:visualization_ettm1} a), periodic (Figure \ref{fig:visualization_etth1} a, Figure \ref{fig:visualization_etth1} b, 
\ref{fig:visualization_etth2} a, \ref{fig:visualization_etth2} b), non-periodic with a change in trend (Figure \ref{fig:visualization_ettm2} a, Figure \ref{fig:visualization_ettm2} b)

\begin{figure}[htbp]
\centering
\subfigure[prediction length $O=96$]{
\begin{minipage}{0.5\linewidth}
\centering
\includegraphics[width=2.8in]{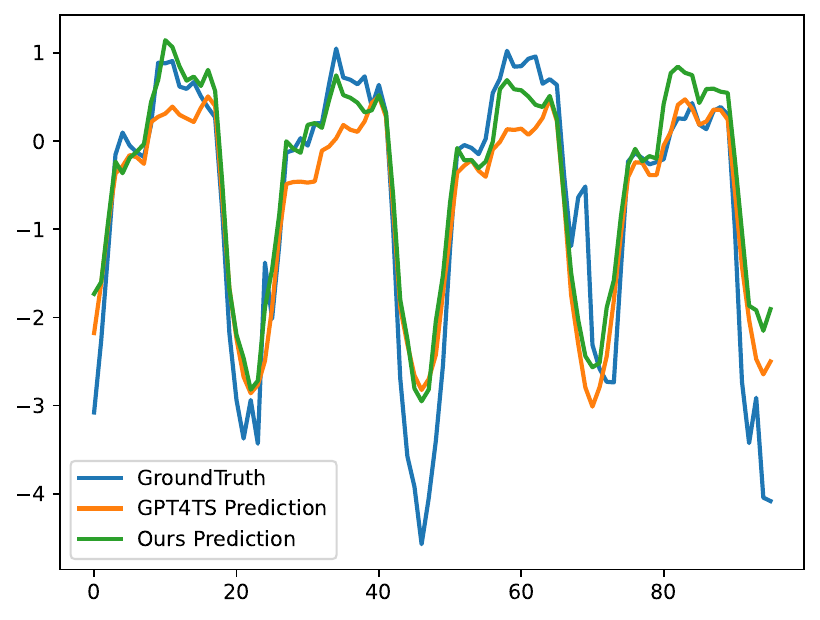}
\end{minipage}%
}%
\subfigure[prediction length $O=192$]{
\begin{minipage}{0.5\linewidth}
\centering
\includegraphics[width=2.8in]{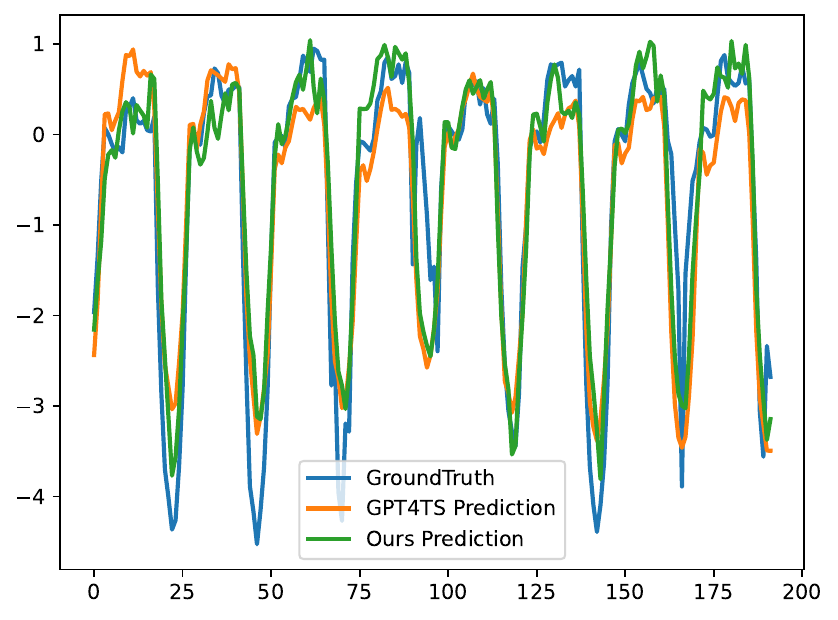}
\end{minipage}%
}%
\caption{Visualization of long-term forecasting results. Compared between our model \modelnamespace and GPT4TS on ETTh1 dataset}
\label{fig:visualization_etth1}
\end{figure}

\begin{figure}[htbp]
\centering
\subfigure[prediction length $O=96$]{
\begin{minipage}{0.5\linewidth}
\centering
\includegraphics[width=2.8in]{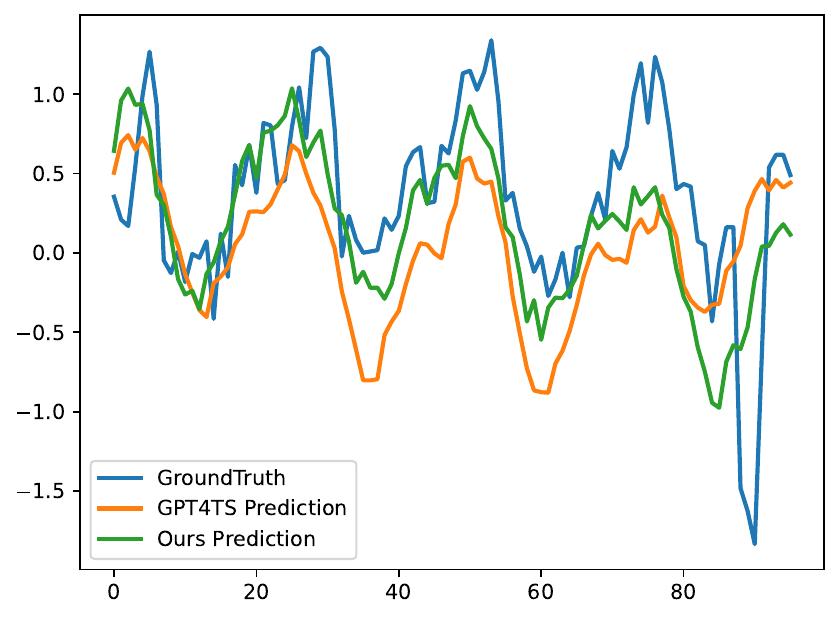}
\end{minipage}%
}%
\subfigure[prediction length $O=192$]{
\begin{minipage}{0.5\linewidth}
\centering
\includegraphics[width=2.8in]{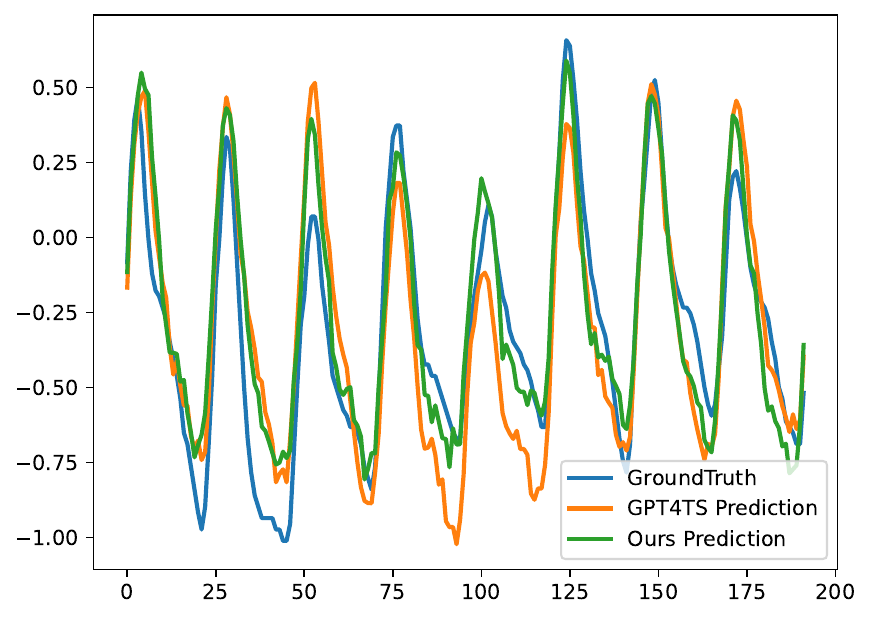}
\end{minipage}%
}%
\caption{Visualization of long-term forecasting results. Compared between our model \modelnamespace and GPT4TS on ETTh2 dataset}
\label{fig:visualization_etth2}
\end{figure}

\begin{figure}[htbp]
\centering
\subfigure[prediction length $O=96$]{
\begin{minipage}{0.5\linewidth}
\centering
\includegraphics[width=2.8in]{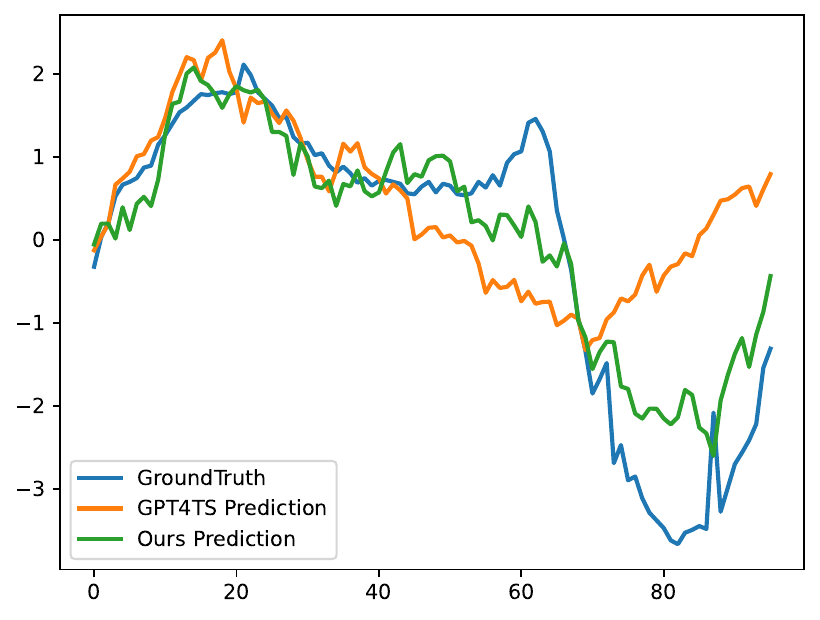}
\end{minipage}%
}%
\subfigure[prediction length $O=192$]{
\begin{minipage}{0.5\linewidth}
\centering
\includegraphics[width=2.8in]{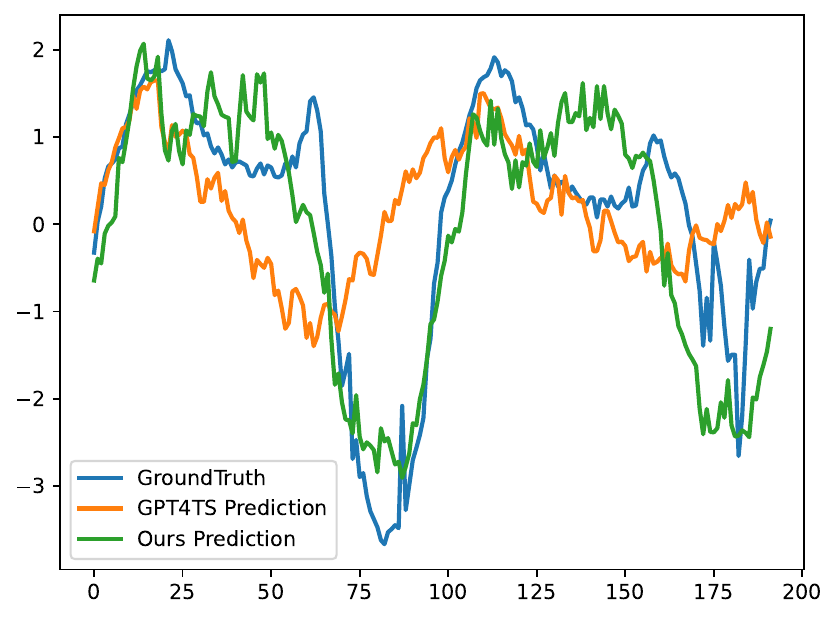}
\end{minipage}%
}%
\caption{Visualization of long-term forecasting results. Compared between our model \modelnamespace and GPT4TS on ETTm1 dataset}
\label{fig:visualization_ettm1}
\end{figure}

\begin{figure}[htbp]
\centering
\subfigure[prediction length $O=96$]{
\begin{minipage}{0.5\linewidth}
\centering
\includegraphics[width=2.8in]{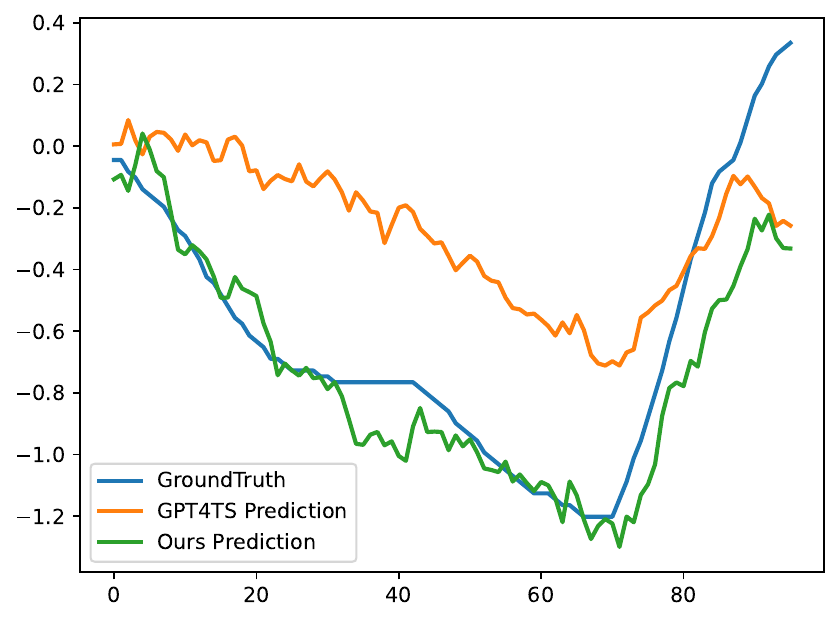}
\end{minipage}%
}%
\subfigure[prediction length $O=192$]{
\begin{minipage}{0.5\linewidth}
\centering
\includegraphics[width=2.8in]{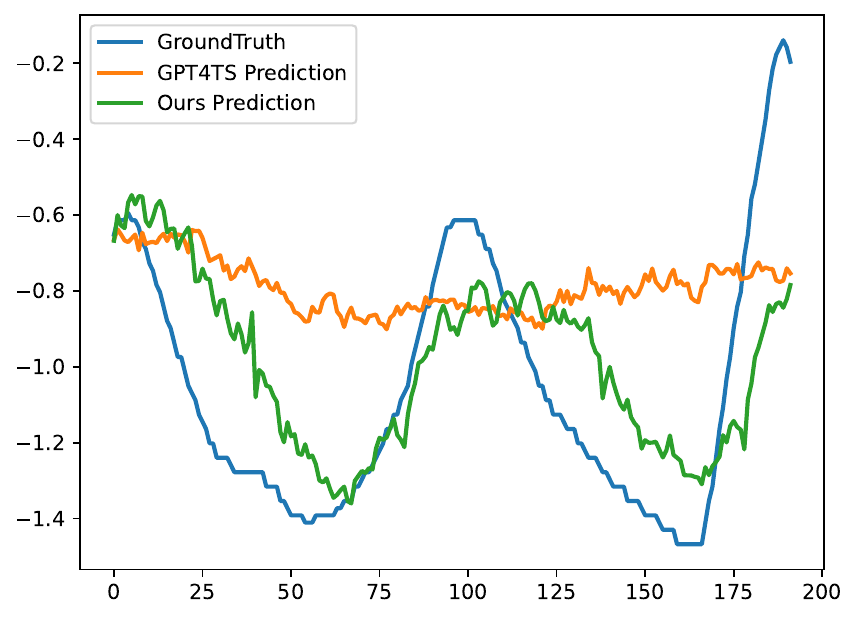}
\end{minipage}%
}%
\caption{Visualization of long-term forecasting results. Compared between our model \modelnamespace and GPT4TS on ETTm2 dataset}
\label{fig:visualization_ettm2}
\end{figure}

\begin{figure}[htbp]
\centering
\subfigure[prediction length $O=96$]{
\begin{minipage}{0.5\linewidth}
\centering
\includegraphics[width=2.8in]{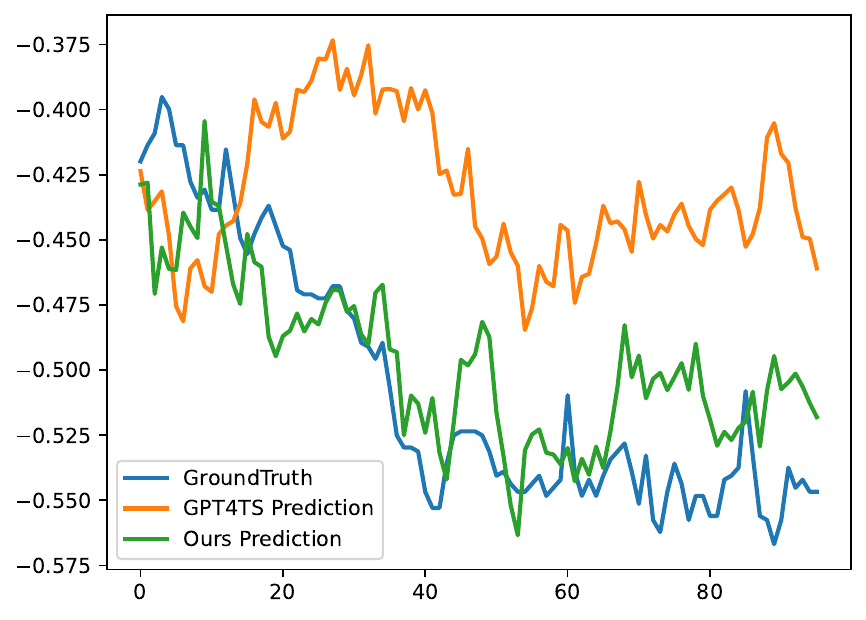}
\end{minipage}%
}%
\subfigure[prediction length $O=192$]{
\begin{minipage}{0.5\linewidth}
\centering
\includegraphics[width=2.8in]{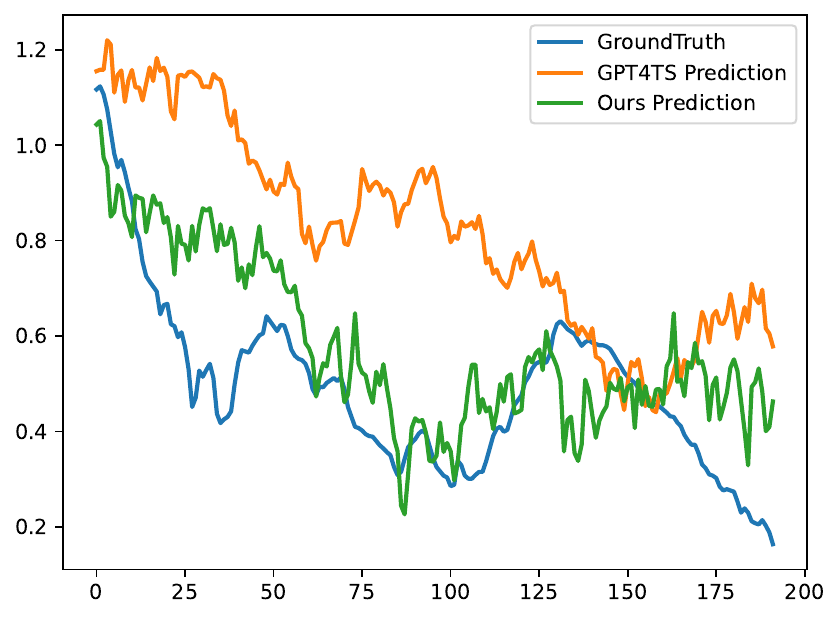}
\end{minipage}%
}%
\caption{Visualization of long-term forecasting results on weather dataset. Compared between our model \modelnamespace and GPT4TS on weather dataset}
\label{fig:visualization_weather}
\end{figure}

\begin{figure}[htbp]
\centering
\subfigure[prediction length $O=96$]{
\begin{minipage}{0.5\linewidth}
\centering
\includegraphics[width=2.8in]{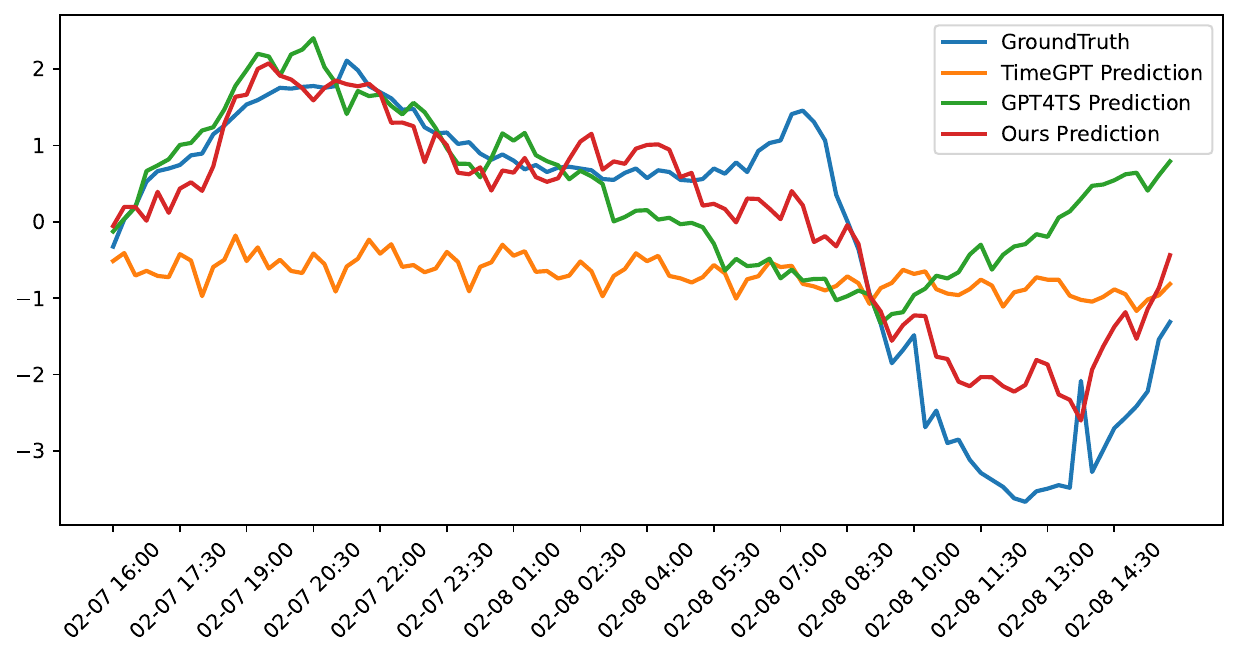}
\end{minipage}%
}%
\subfigure[prediction length $O=192$]{
\begin{minipage}{0.5\linewidth}
\centering
\includegraphics[width=2.8in]{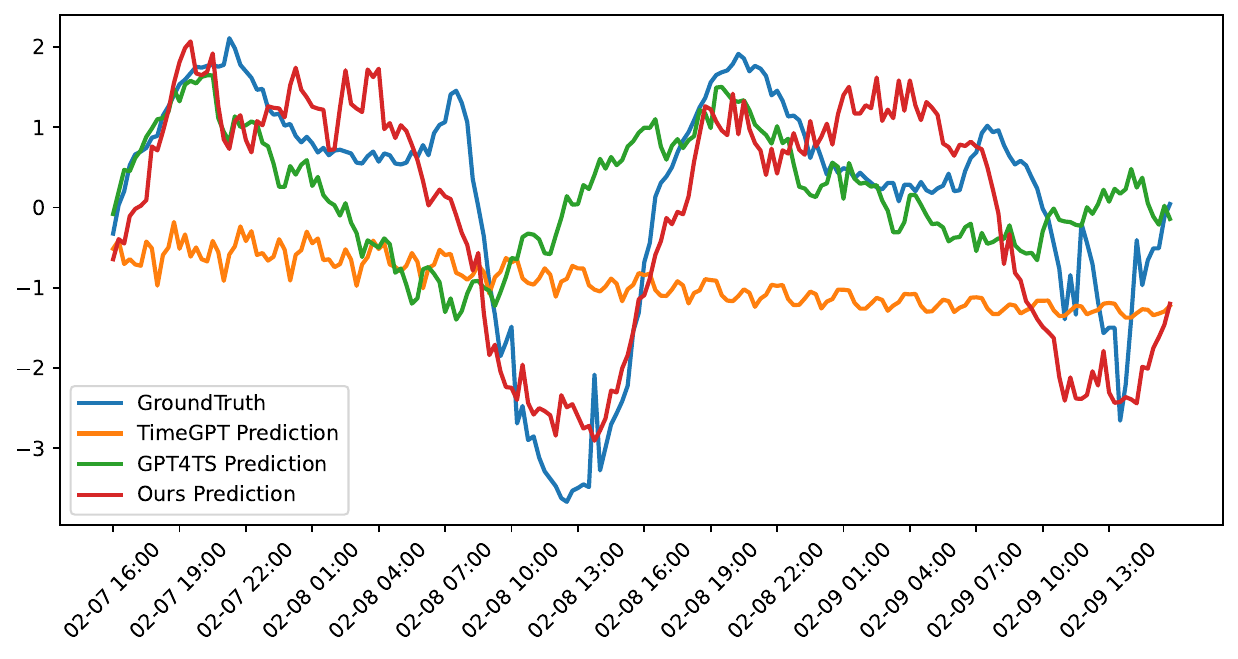}
\end{minipage}%

}%
\caption{Visualization of long-term forecasting results on ETTm1 dataset. Compared between our model \modelnamespace and TimeGPT on weather dataset}
\label{fig:visualization_m1_timegpt}
\end{figure}

\begin{figure}[htbp]
\centering
\subfigure[prediction length $O=96$]{
\begin{minipage}{0.5\linewidth}
\centering
\includegraphics[width=2.8in]{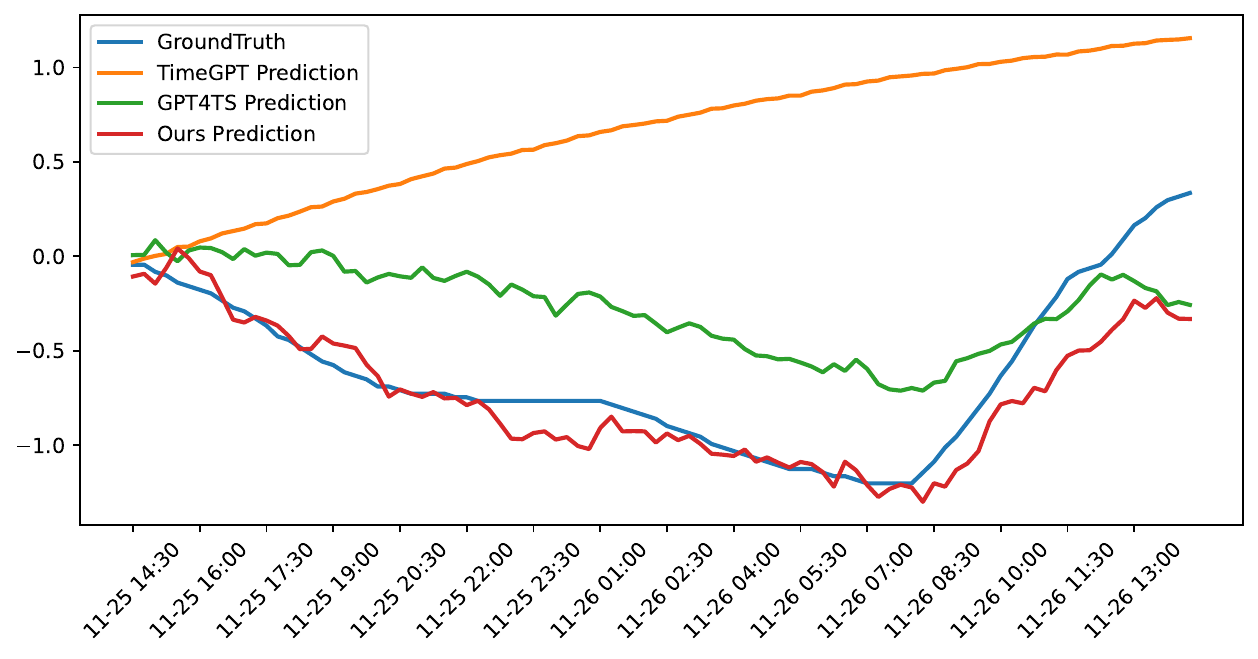}
\end{minipage}%
}%
\subfigure[prediction length $O=192$]{
\begin{minipage}{0.5\linewidth}
\centering
\includegraphics[width=2.8in]{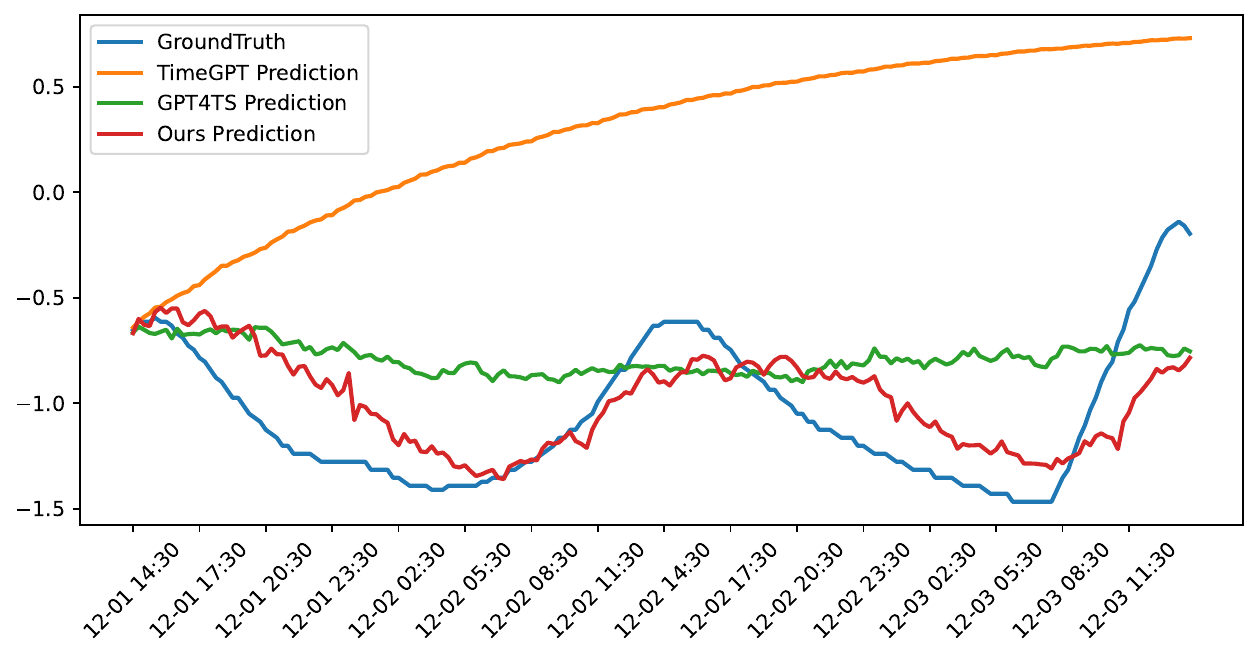}
\end{minipage}%
}%
\caption{Visualization of long-term forecasting results on ETTm2 dataset. Compared between our model \modelnamespace and TimeGPT on weather dataset}
\label{fig:visualization_m2_timegpt}
\end{figure}

\subsection{Compare with TimeGPT}

We also compare our results with TimeGPT~\citep{garza2023timegpt}, which is capable of generating accurate predictions for a diverse range of datasets not seen during training, demonstrating superior performance in zero-shot inference compared to traditional statistical, machine learning, and deep learning methods. Access to TimeGPT-1 (Beta) is provided through a Python SDK and a REST API. This accessibility allows us to explore TimeGPT's forecasting capabilities on our datasets. As shown in Figure~\ref{fig:visualization_m1_timegpt} and Figure~\ref{fig:visualization_m2_timegpt}, despite its design for various downstream tasks, it is important to note that TimeGPT may not perform as well in long-term forecasting scenarios. In contrast, our proposed model excels in zero-shot settings, including long-term forecasting, illustrating the need for foundation models that can adapt to both the breadth of time series applications and the depth of forecasting horizons.

\section{Experiment setting}
\label{exper_setting}
\begin{wraptable}{l}{0.6\textwidth}

\caption{Dataset details of benchmark dataset.}
\label{tab:dataset}
\vskip 0.15in
\begin{center}
\begin{small}
\begin{tabular}{l|cccc}
\toprule

Dataset & Length & Covariates & Sampling Period \\
\midrule
ETTh & 17420 & 7 & 1 hour\\
ETTm & 69680 & 7 & 15 min\\
Weather & 52696 & 22 & 10 min & \\
Electricity & 26304 & 321 & 1 hour \\
Traffic & 17544 & 862 & 1 hour \\
\bottomrule
\end{tabular}
\end{small}
\end{center}
\vskip -0.1in
\end{wraptable}
\subsection{Towards Foundation Model Experiments Details}
It has been well-established that channel-independence works well for time series datasets, so we treat each multivariate time series as multiple independent univariate time series. We use  popular time series benchmark datasets~\citep{zhou2021informer}: ETTm1, ETTm2, ETTh1, ETTh2, Weather, Electricity, Traffic, ILI and exchnge.  1) ETTm1, ETTm2, ETTh1, ETTh2 contain electricity load from two electricity stations at 15 minutes level and hourly level. 2) Weather dataset contains 21 meteorological indicators of Germany within 1 year; 3) Electricity dataset contains electricity consumption; 4) Traffic dataset contains the occupation rate of the freeway system across the State of California.  %
The lookback window $L$ is following ~\citep{one_fits_all}, and the prediction length $O$ is set to \{96, 192, 336, 720\}. In this experiment part, our experiments were conducted using single NVIDIA A100 GPU, with a batch size set to 256, and focused on long-term forecasting by employing a Mean Squared Error (MSE) loss function. To ensure the reliability of our results, we performed three iterative loops and calculated the average of the outcomes. Our exploration covered [3, 6] GPT layers and tested various weights, [0.001, 0.01, and 1], for the MSE loss function applied to the reconstructed components of the time series. We have documented the optimal results obtained from this search. A comprehensive analysis of the impact that the number of GPT layers has on the performance will be addressed in future research.

\paragraph{Towards Foundation Model's Zero Shot Setting}
For each prediction length, we train a model on a mixture of training data from different domains and test the model on the target unseen domain's data. We construct the combined training dataset by pooling the training data and fullyshufflinge them. To prevent undue bias and ensure fair representation of data from each domain in the combined training data, we select an equal number of training examples from each domain's training data. We noted that the number of training samples that ETTh1 and ETTh2 has is on a much smaller magnitude compared to the other three training datasets (ETTm1, Weather, Electricity), so selecting the minimum number of training samples among all other training datasets would result in too much data loss from ETTm1, Weather, and Electricity, etc. Therefore, we included all training examples from ETTh1 and ETTh2 in the combined training dataset. Similar to traditional experimental settings, each time series (ETTh1, ETTh2, ETTm1, Weather, Electricity, ETTm2, Traffic) is split into three parts: training data, validation data, and test data following in 7:1:2 ratio in ~\citep{zhou2022fedformer}, and we only merge the training and validation data.
For ETTm1, ETTm2, Weather and Electricity data, the number of examples sampled to be pooled into the combined training dataset is chosen to be the minimum number of training examples among these  training datasets.

\subsection{Proposed TETS Dataset Setting}
\label{exper_setting_tets}

\begin{figure}[htbp]
\centering
\includegraphics[width=\linewidth]{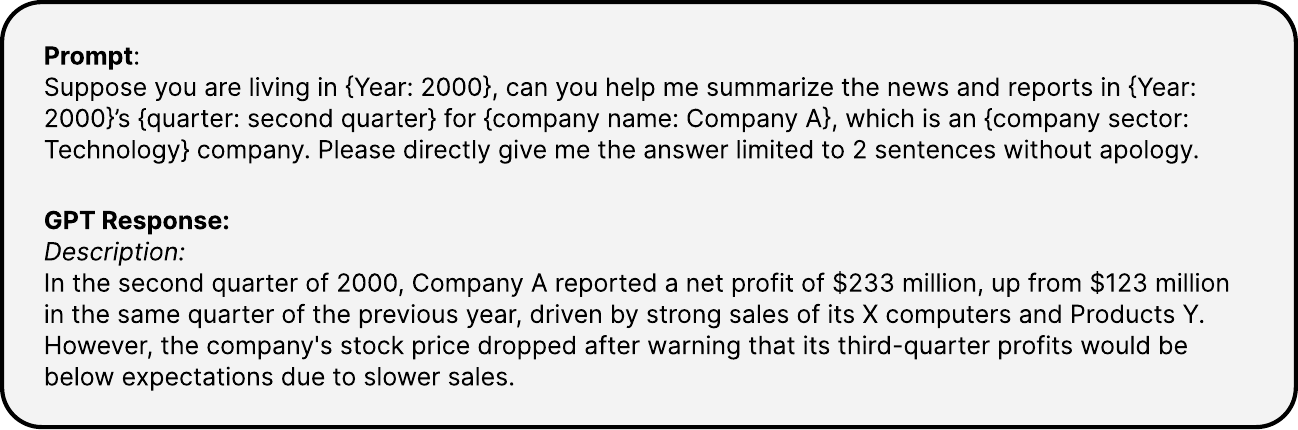}
\centering
\caption{Example for designing prompts using OPENAI ChatGPT-3.5 API.}
\label{fig:chatgpt}
\end{figure}

\paragraph{Data Collection} Our time series data for financial analysis and forecasting are derived primarily from the financial statements of companies including balance sheets, income statements, and cash flow statements. Specifically, we utilize data from the 500 largest U.S. companies across 11 sectors as listed in the Standard \& Poor's 500 Index (S\&P 500), which we divide into two parts: the first seven sectors for training and evaluation, and the remaining four for zero-shot forecasting tasks to test the model's ability to predict in unseen domains. While collecting corresponding contextual information from the abundance of digital news sources is challenging, OpenAI's ChatGPT API offers a solution to gather and condense relevant news efficiently. By inputting key details into the API and limiting the response to 110 tokens, as shown in Figure~\ref{fig:chatgpt}, we can swiftly extract pertinent contextual data to improve our analysis. Please refer to Section~\ref{collect_dataset} for further details of creating TETS dataset.

\paragraph{Prediction objective}
The primary objective of our experiment is to forecast the Earnings Before Interest, Taxes, Depreciation and Amortization(EBITDA) for companies listed in S\&P500, and our data range from 2000 to 2022. Following the multivariate time series framework presented in ~\citep{papadimitriou2020multi}, we select foundational financial metrics from the income statements as input features: cost of goods sold (COGS), selling, general and administrative expenses (SG\&A), RD expenses (RD\_EXP), EBITDA, and Revenue. Comparing with other metrics, the selected metrics contain information more relevant to our prediction objective. For Large Language based models, including our model \modelname, GPT4TS, and T5, we apply channel-independence strategy to perform univariate time series forecasting tasks. All five features are used for training (predicting its future value based on its past value), while only EBITDA is accessible during the training stage. Other models follow the multivariate time series forecasting setting, treating the five features as multivariate input and predicting the target, EBITDA, both in the training and testing stages.

We predict quarterly EBITDA based on the past 20 quarters' data. This predicted value is then used to forecast the next quarter's EBITDA, iteratively four times, leading to a yearly prediction. In order to measure the accuracy of these predictions based on the cumulative yearly value (sum of 4 quarters), we employ the symmetric mean absolute percentage error (SMAPE) as the evaluation metric as well as the forecasting loss function in this experimental part. 

\paragraph{Data Split}
For companies under each sector, we employ the windowing method to generate cohesive training and testing instances. Under the channel-independence setting where we separate each feature to obtain univariate time series, we get 80,600 samples from the seven in-domain sectors, and 9,199 samples from the four zero-shot sectors(also known as cross-domain sectors), five as much as we get in the channel dependent setting. The sectors splitting is elaborated in ~\ref{collect_dataset}. In our experiments shown in table \ref{tab:smape_ebitda_results}, We use 70\% of in-domain data for training, 10\% of in-domain data for evaluation, and all zero-shot data for unseen testing. 

\paragraph{Symmetric Mean Absolute Percentage Error}
\label{evaluation_metric}
In reality, the magnitude of financial metrics can vary significantly among different companies. So, we choose the symmetric mean absolute percentage error (SMAPE), a percentage-based accuracy measure, as our evaluation metric.
For EBITDA, there are many negative results that may influence the final SMAPE. We use the form of SMAPE-Abs SMAPE:
\begin{equation}
    \text{AbsSMAPE}=\frac{200\%}{n}\sum_{t=1}^{n}\frac{\left | F_{t}-A_{t} \right |}{\left | F_{t} \right | + \left | A_{t} \right |},
\end{equation}
Here, $F_t$ represents the true value, $A_t$ represents the predicted value in our system, and $n$ represents the total time steps we need to forecast.

SMAPE can be particularly sensitive to outliers. Specifically, when the true data and prediction have opposite signs, the resulting error may be up to 200\%, seriously distorting the final results. Following the approach in~\citep{papadimitriou2020multi}, we filter out data points at the 80\% and 90\% thresholds and find most of the outliers are related to significant financial shifts due to mergers \& acquisitions (M\&A). 

\subsection{GDELT Dataset Setting}
\label{GDELT}
We utilized the GDELT dataset ~\citep{GPT4MTS}, which focuses on predicting the respective mentions and mentions in the news media. We utilized the data collected from the 55 regions under the US and the national data for the US and divided the 10 event root types in the dataset into unseen and seen sets, as demonstrated in Table \ref{tab:event_number2name}. We focused on predicting the three key variables NumMentions, NumArticles, NumSources related to the particular event type within a given timeframe and geographical region. We apply channel-independence strategy to perform univariate time series forecasting tsks for all baseline models and our model. All three features are used for training and evaluation (predicting its future value based on its past value).

We predict the future 7 days based on the past 15 days' data directly. In other to measure the accuracy of the predicitions, we use mean square error (MSE) and mean absolute error (MAE). For each region, we employ the windowing method to generate cohesive training and testing instances for each event root type. Under our channel-independence setting, we get 122,008 samples from the seven in-domain sectors (seen sectors) for training, and 76,048 samples for evaluating under the three zero-shot sectors (unseen sectors). In our experiments, we use 70\% of in-domain data for training, 10\% for evaluation and all zero-shot data for unseen testing.

\begin{table}[t]
    \centering
    \renewcommand{\arraystretch}{1.2}
    \begin{tabular}{c|c|c}
    \hline \hline
     & Event Number & Event Type Name \\
    \hline \hline
    \multirow{7}{*}{Training Event} 
     & 01 & Make Public Statement \\
     & 02 & Appeal \\ 
     & 03 & Express Intent to Cooperate \\
     & 04 & Consult \\
     & 05 & Engage in Diplomatic Cooperation \\
     & 07 & Provide Aid \\
     & 08 & Yield \\
    \hline
    \multirow{3}{*}{Test Event}
     & 11 & Disapprove \\
     & 17 & Coerce \\
     & 19 & Fight \\
    \hline\hline
    \end{tabular}
    \caption{Event number to event type Name table}
    \label{tab:event_number2name}
\end{table}

\section{Further Results}
\label{app:more_results}

\subsection{Self-Supervised Representation Learning}

Our proposed model architecture can be designed to support self-supervised learning and thus further embrace foundation models for time series. Following ~\citep{Yuqietal-2023-PatchTST}, we mask a random subset of patches by replacing them with zeros, where the patches are divided into non-overlapping patches for simplicity and to avoid masked patches influencing predictions. The prediction head is removed and replaced with a linear layer to reconstruct the masked patches. The model is trained to minimize the MSE between the predicted and true masked patches. To handle multivariate time series with varying numbers of features, we apply channel independence~\citep{dlinear} to model each time series independently.

With the strong performance \modelnamespace showed under the experiment 'many-to-one' zero-shot setting, from the perspective of a self-supervised cross-domain foundational model, we further investigate if using a  \modelnamespace model trained on datasets across domains can still achieve comparable performance on unseen domains. 
Here, we still use the 'many-to-one' setting but the model is trained in a self-supervised manner. Specifically, we first use all other domain's data to train a representation model then only use 5\% data of the training data to fine turn the total model with the prediction layer as a forecasting downstream task.
Table \ref{tab:self-supervise} provides a comprehensive comparison of our model against other baseline models on three multivariate time series datasets that are unseen by the models during training, namely electricity and traffic and weather. All these selected 3 datasets are entirely dissimilar to any data the model has encountered before. 
\modelnamespace outperforms baseline models, achieving the lowest MSE and MAE in most cases. 
Note that \modelname's average MSE and MAE is \textbf{7.3\%} and \textbf{4.6\%} less than the best-performing baseline model (GPT2) for the weather dataset, respectively. 
This finding shed light on the strong generalizability of \modelnamespace and indicated its potential of serving as a foundational time series forecasting model, maintaining robust performance for unseen domains. 

\begin{table*}
    \caption{Self-supervised representation learning results are fine-tuned on 5\% in-domain datasets. We use prediction length $O \in \{96, 192, 336, 720\}$. A lower MSE indicates better performance, and the best results are  in bold.}
    \label{tab:self-supervise}
    \centering
    \begin{small}
    \scalebox{0.8}{
    \renewcommand{\arraystretch}{1.5}
    
    \setlength\tabcolsep{3pt}
    \hspace{20pt}
    \begin{tabular}{c|c|c|c|c|c|c}
    \hline\hline
        \multirow{2}{*}{~} & \multirow{2}{*}{~} & TEMPO & GPT4TS & T54TS & Bert4TS & PatchTST \\ \cline{3-7}
        ~ & ~ & MSE/MAE & MSE/MAE & MSE/MAE & MSE/MAE & MSE/MAE \\ \hline\hline
         \multirow{5}{*}{ECL} & 96 & \textbf{0.19}/\textbf{0.29} & 0.202/0.301 & \underline{0.199}/\underline{0.293} & 0.202/0.298 & 0.21/0.308 \\ \cline{2-7}
        ~ & 192 & \textbf{0.211}/\textbf{0.31} & \underline{0.217}/0.313 & 0.238/0.337 & 0.227/0.321 & 0.223/\underline{0.312} \\ \cline{2-7}
        ~ & 336 & \textbf{0.229}/\textbf{0.323} & 0.258/0.353 & 0.273/0.364 & \underline{0.256}/\underline{0.345} & 0.282/0.357 \\ \cline{2-7}
        ~ & 720 & \textbf{0.375}/\textbf{0.444} & \underline{0.43}/\underline{0.475} & 0.455/0.49 & 0.442/0.479 & 0.606/0.561 \\ \cline{2-7}
        ~ & Avg & \textbf{0.251}/\textbf{0.342} & \underline{0.277}/\underline{0.361} & 0.291/0.371 & 0.282/\underline{0.361} & 0.33/0.385 \\ \hline\hline
        \multirow{5}{*}{Traffic} & 96 & \underline{0.56}/0.411 & 0.607/0.417 & \textbf{0.543}/\underline{0.408} & 0.591/0.423 & 0.577/\textbf{0.403} \\ \cline{2-7}
        ~ & 192 & \textbf{0.575}/\underline{0.419} & 0.603/0.421 & \underline{0.594}/0.431 & 0.613/0.432 & 0.596/\textbf{0.411} \\ \cline{2-7}
        ~ & 336 & \textbf{0.597}/\textbf{0.433} & \underline{0.63}/\underline{0.435} & 0.659/0.458 & 0.639/0.445 & 0.665/0.454 \\ \cline{2-7}
        ~ & 720 & \underline{0.65}/\underline{0.452} & \textbf{0.643}/\textbf{0.439} & 0.69/0.49 & 0.744/0.496 & 0.802/0.501 \\ \cline{2-7}
        ~ & Avg & \textbf{0.595}/\underline{0.429} & \underline{0.621}/\textbf{0.428} & 0.622/0.447 & 0.647/0.449 & 0.66/0.442 \\ \hline\hline
        \multirow{5}{*}{Weather} & 96 & \textbf{0.217}/\textbf{0.268} & 0.288/0.31 & 0.252/0.288 & \underline{0.237}/0.288 & 0.249/\underline{0.285} \\ \cline{2-7}
        ~ & 192 & \textbf{0.265}/\textbf{0.302} & 0.305/0.331 & 0.322/0.336 & 0.291/0.323 & \underline{0.277}/\underline{0.314} \\ \cline{2-7}
        ~ & 336 & \underline{0.322}/\underline{0.342} & 0.338/0.353 & 0.346/0.358 & 0.335/0.354 & \textbf{0.311}/\textbf{0.341} \\ \cline{2-7}
        ~ & 720 & 0.41/0.397 & \textbf{0.381}/\textbf{0.377} & 0.444/0.42 & 0.466/0.436 & \underline{0.385}/\underline{0.386} \\ \cline{2-7}
        ~ & Avg & \textbf{0.304}/\textbf{0.327} & 0.328/0.343 & 0.341/0.351 & 0.332/0.35 & \underline{0.305}/\underline{0.331} \\ \hline\hline

    \end{tabular}
    }
\end{small}
\end{table*}

\begin{table}[htp]
      \centering
      \renewcommand{\arraystretch}{1.5}
    \setlength\tabcolsep{5pt}
    \hspace{20pt}
      \caption{Results of long term forecasting experiments on ETTh1 and ETTh2. The best results are marked in bold and the second optimal in underlined, respectively
    with MSE/MAE. Note that the TEMPO is under zero-shot setting and other models are under full-shot setting.}
      \resizebox{\columnwidth}{!}{
        \begin{tabular}{c|c||c||cccccccc}
              \hline\hline
              & &TEMPO&iTransformer & TimesNet & PatchTST & Crossformer & TiDE & DLinear & SCINet & FEDformer \\
              \hline
              &       & MSE/MAE &  MSE/MAE & MSE/MAE & MSE/MAE & MSE/MAE & MSE/MAE & MSE/MAE & MSE/MAE & MSE/MAE \\
              \hline    \multirow{5}[1]{*}{\rotatebox{90}{ETTh1}} & 96    & 0.400/0.406 & 0.386/0.405 & \underline{0.384}/\underline{0.402} & 0.414/0.419 & 0.423/0.448 & 0.479/0.464 & 0.386/\textbf{0.400} & 0.654/0.599 & \textbf{0.376}/0.419 \\ \cline{3-11}
              & 192   & \underline{0.426}/\textbf{0.421} & 0.441/0.436 & 0.436/\underline{0.429} & 0.460/0.445 & 0.471/0.474 & 0.525/0.492 & 0.437/0.432 & 0.719/0.631 & \textbf{0.420}/0.448 \\ \cline{3-11}
              & 336   & \textbf{0.441}/\textbf{0.430} &  0.487/\underline{0.458} & 0.491/0.469 & 0.501/0.466 & 0.570/0.546 & 0.565/0.515 & 0.481/0.459 & 0.778/0.659 & \underline{0.459}/0.465 \\\cline{3-11}
              & 720   & \textbf{0.443}/\textbf{0.451} &  0.503/0.491 & 0.521/0.500 & \underline{0.500}/\underline{0.488} & 0.653/0.621 & 0.594/0.558 & 0.519/0.516 & 0.836/0.699 & 0.506/0.507 \\\cline{3-11}
              & Avg.   & \textbf{0.428}/\textbf{0.427} &  0.454/\underline{0.447} & 0.458/0.450 & 0.469/0.454 & 0.529/0.522 & 0.541/0.507 &  0.456/0.452 & 0.747/0.647& \underline{0.440}/0.460 \\
        \hline\hline
        \multirow{5}[0]{*}{\rotatebox{90}{ETTh2}} & 96    & \underline{0.301}/0.351  & \textbf{0.297}/\underline{0.349} & 0.340/0.374 & 0.302/\textbf{0.348}& 0.745/0.584 & 0.400/0.440 & 0.333/0.387 & 0.707/0.621 & 0.358/0.397 \\\cline{3-11}
              & 192   & \textbf{0.355}/\textbf{0.389} & \underline{0.380}/\underline{0.400} & 0.402/0.414 & 0.388/0.400 & 0.877/0.656 & 0.528/0.509 & 0.477/0.476 & 0.860/0.689 & 0.429/0.439 \\\cline{3-11}
              & 336   & \textbf{0.379}/\textbf{0.408} & 0.428/\underline{0.432} & 0.452/0.541 & \underline{0.426}/0.433 & 1.043/0.731 & 0.643/0.571 & 0.594/0.541 & 1.000/0.744 & 0.496/0.487 \\\cline{3-11}
              & 720   & \textbf{0.409}/\textbf{0.440} &  \underline{0.427}/\underline{0.445} & 0.462/0.657 & 0.431/0.446 & 1.104/0.763 & 0.874/0.679 & 0.831/0.657 & 1.249/0.838 & 0.463/0.474 \\\cline{3-11}
              & Avg.   & \textbf{0.361}/\textbf{0.398} &  \underline{0.383}/\underline{0.407} & 0.414/0.427 & 0.387/0.407 & 0.942/0.684& 0.611/0.550 & 0.559/0.515 &  0.954/0.723 & 0.437/0.449 \\
        \hline\hline
        \end{tabular}%
        }
      \label{tab:full_shot}%
    \end{table}%

\subsection{Comparing with Full-shot State-of-the-arts  Baselines}
 Towards foundation model training differs significantly from the one-to-one/many scenarios, where pre-training involves a homogenous dataset, often with consistent season patterns, sampling rates, and temporal scales. This homogeneity facilitates pattern learning transferable to fine-tuned datasets. In contrast, towards foundation model training involves pre-training on highly diverse datasets, such as merging traffic and weather data, which may hinder the model's ability to discern underlying patterns.
In Table~\ref{tab:full_shot}, we provide further results on ETTh1 and ETTh2 datasets, demonstrating that the performance of TEMPO (zero-shot setting) surpasses that of state-of-the-art models specifically designed for these target datasets with full-shot settings. The results in Table~\ref{tab:full_shot} are obtained from \citep{liu2023itransformer}, including but not limited to iTransformer\citep{liu2023itransformer}, Crossformer~\citep{zhang2022crossformer}, TiDE~\citep{das2023long} and SCINet~\citep{liu2022scinet}, which are also reported in our contemporaneous work, MOIRAI~\citep{woo2024unified}.

\subsection{Comparing with ARIMA}

\begin{table}[]
\centering
      \renewcommand{\arraystretch}{1.5}
    \setlength\tabcolsep{5pt}
    \hspace{20pt}
\caption{Compare the results with ARIMA.}
\label{app:arima}
\resizebox{\columnwidth}{!}{
\begin{tabular}{c|cc|cc|cc|cc}
\hline\hline
 & \multicolumn{2}{c}{ECL}   & \multicolumn{2}{c}{Traffic} & \multicolumn{2}{c}{Weather} & \multicolumn{2}{c}{Ettm2} \\\cline{2-9}
 & TEMPO       & ARIMA       & TEMPO        & ARIMA        & TEMPO        & ARIMA        & TEMPO       & ARIMA       \\\cline{2-9}
 & MSE/MAE     & MSE/MAE     & MSE/MAE      & MSE/MAE      & MSE/MAE      & MSE/MAE      & MSE/MAE     & MSE/MAE     \\\hline
96                   & \textbf{0.178/0.276} & 1.220/0.814 & \textbf{0.476/0.343}  & 1.997/0.924  & \textbf{0.211/0.254}  & 0.217/0.258  & \textbf{0.185/0.267} & 0.225/0.301 \\\cline{2-9}
192                  & \textbf{0.198/0.293}& 1.264/0.842 & \textbf{0.496/0.355}  & 2.044/0.944  & \textbf{0.254/0.298}  & 0.263/0.299  & \textbf{0.243/0.304} & 0.298/0.345 \\\cline{2-9}
336                  & \textbf{0.209/0.309} & 1.311/0.866 & \textbf{0.503/0.356}  & 2.096/0.960  & \textbf{0.292/0.332}  & 0.330/0.347  &\textbf{ 0.309/0.345} & 0.370/0.386 \\\cline{2-9}
720                  & \textbf{0.279/0.355} & 1.364/0.891 & \textbf{0.538/0.376}  & 2.138/0.971  & \textbf{0.393/0.387}  & 0.425/0.405  & \textbf{0.386/0.395} & 0.478/0.445 \\\cline{2-9}
Avg.                 & \textbf{0.216/0.308} & 1.290/0.853 & \textbf{0.503/0.357}  & 2.069/0.950  & \textbf{0.287/0.318}  & 0.309/0.327  & \textbf{0.280/0.328} & 0.343/0.369 \\\hline\hline
\end{tabular}
}
\end{table}

As a pioneering foundation model, TEMPO is engineered to forecast future values directly, eliminating the necessity for retraining with each new data instance. Its underlying framework captures intricate temporal patterns, granting it the versatility to generalize across various time series. In this study, we compare TEMPO's forecasting prowess with that of the ARIMA model~\citep{hyndman2008automatic}, which is renowned for its capacity to make accurate predictions within a specific time series once the initial model parameters have been set. While ARIMA models excel in continuing predictions within the series they are configured for, they do not inherently possess the faculty to forecast across disparate time series without recalibration. We obtain the ARIMA's forecasting results from ~\citep{challu2023nhits}. As shown in Table \ref{app:arima}, the results highlight the superior adaptability of our `towards foundation model' – TEMPO – which retains its predictive accuracy even when applied to time series beyond its training scope, thereby illustrating the feasibility of more universal and resilient forecasting methodologies.

\section{Further Analysis}
\label{app:furthe_Analysis}

\subsection{Design of Prompt Pool}
\label{prompt_pool_app}
In this section, we propose another potential prompt design for addressing non-stationary nature of real-world time series data with distributional shifts~\citep{huang2020causal}. Specifically, we introduce a shared pool of prompts stored as distinct key-value pairs. Ideally, we want the model to leverage related past experiences, where similar input time series tend to retrieve the same group of prompts from the pool~\citep{l2p}. This would allow the model to selectively recall the most representative prompts at the level of individual time series instance input. In addition, this approach can enhance the modeling efficiency and predictive performance, as the model would be better equipped to recognize and apply learned patterns across diverse datasets via a shared representation pool.
 Prompts in the pool could encode temporal dependencies, trends, or seasonality effects relevant to different time periods. 
Specifically, the pool of prompt key-value pairs is defined as:
\begin{equation}
    \mathbf{V}_\mathbf{K}=\left\{\left(\boldsymbol{k}_1, V_1\right),\left(\boldsymbol{k}_2, V_2\right), \cdots,\left(\boldsymbol{k}_M, V_M\right)\right\}, 
\end{equation}
where $M$ is length of prompt pool, $V_m \in \mathbb{R}^{L_p \times L_E}$ is a single prompt with token length $L_p$ and the same embedding size $L_E$ as $\mathcal{P}_T^i$ and $\boldsymbol{k}_m \in\mathbf{K}=\left\{\boldsymbol{k}_m\right\}_{m=1}^M$ with the shape of  $\mathbb{R}^{L_E}$. The score-matching process can be formulated with the score-matching function $\gamma\left(\mathcal{P}_T^i, \boldsymbol{k}_{m}\right) = {\mathcal{P}_T^i\cdot \boldsymbol{k}_{m}}/{\|\mathcal{P}_T^i\|\|\boldsymbol{k}_{m}\|}$, 
where $\gamma: \mathbb{R}^{L_E} \times \mathbb{R}^{L_E} \rightarrow \mathbb{R}$. 
The model is trained in an end-to-end way to optimize predictions with the prompts. The query $\mathcal{P}_T^i$  that is used to retrieve the top-$\mathcal{K}$ corresponding value comes from the patched time series input. Therefore, similar time series can be assigned to similar prompts. 
Denoting $\left\{s_j\right\}_{j=1}^\mathcal{K}$ as a subset of $\mathcal{K}$ indices for the selected top-$\mathcal{K}$ prompts, our input embedding of trend is as follows:
\begin{equation}
    \boldsymbol{x}_T=\left[V_{s_1} ; \cdots ; V_{s_\mathcal{K}} ; \mathcal{P}_T\right], \quad 1 \leq \mathcal{K} \leq M,
\end{equation}
where we concatenate all the tokens along the temporal length dimension, so as  $\boldsymbol{x}_S,  \boldsymbol{x}_R$. Each instance can be assigned to multiple prompts, which can jointly encode knowledge pertinent to the forecasting task- such as periodic patterns exhibited by the time series, prevailing trends, or seasonality effects.

\subsection{Results on different prompt design}
\begin{table*}
\centering
\caption{Compare the different prompt designs on the ETTm2 dataset.}
\begin{tabular}{c|c|c|c|c|c}
\hline\hline
  &Semi-soft & Soft & Hard & Pool  & Pool mask all  \\\cline{2-6}
&mse/mae & mse/mae & mse/mae & mse/mae & mse/mae \\\hline
96&\textbf{0.182}/\textbf{0.263} & 0.189/0.271 & 0.182/0.267 & 0.185/0.267 & 0.1952/0.274\\\cline{2-6}
192&0.243/\textbf{0.304} & 0.252/0.307 & 0.243/0.302 & \textbf{0.242/0.304} & 0.2739/0.324\\\cline{2-6}
336&0.309/0.344 & 0.306/0.348 & 0.299/0.340 & \textbf{0.289/0.336} & 0.3131/0.354\\\cline{2-6}
720&0.384/0.392 & 0.386/0.394 & 0.380/0.392 & \textbf{0.373/0.386} & 0.3794/0.390\\\cline{2-6}
Avg.&0.280/0.326 & 0.283/0.330 & 0.276/0.325& \textbf{0.273}/\textbf{0.323} &	0.290/0.335\\\cline{2-6}
\hline
\hline
\end{tabular}
\label{table:prompt_pool_design}
\end{table*}

In this section, we examine the impact of various prompt designs on model performance. We utilize the `semi-soft' prompt as outlined in Section~\ref{prompt_design}, where the prompt vectors are initialized semi-softly; the soft prompt, which entails the random initialization of vectors of identical dimensions to the `semi-soft' prompt; and the hard prompt, which is semantically meaningful and remains fixed post-tokenization. Additionally, we explore the prompt pool, as described in Section ~\ref{prompt_pool_app}, and employ a similar leave-one-out approach to mask all prompts within the pool to investigate its effectiveness.

The findings, presented in Table ~\ref{table:prompt_pool_design}, reveal that, in the ETTm2 dataset, the prompt pool outperforms the   `semi-soft' prompt in three out of four scenarios, underscoring the potential of prompts to enhance model capacity and adaptability to shifts in data distribution. Furthermore, we observe that prompts with explicit semantic content (Semi-soft and Hard) surpass the performance of simple soft prompts. This suggests that incorporating semantic information as discrete indicators within a pre-trained model can more effectively orchestrate domain knowledge. This understanding informs the design of prompts for efficient interaction with language models, especially in applications where precision and relevance of the output are crucial.

\subsection{Analysis on Prompt pool}
\label{appendix_prompt_pool}

Here is a summary of how the prompts are initialized and trained in our work:
\begin{itemize}
    \item Initialization: The prompt embeddings in the pool are randomly initialized from a normal distribution, as is standard practice for trainable parameters in neural networks.
    \item Training: The prompts' value and all other model parameters are trained in an end-to-end manner to optimize the forecasting objective. This allows the prompts to be continuously updated to encode relevant temporal knowledge.
\end{itemize}
The number of prompts and embedding dimensions are treated as hyperparameters and tuned for good performance.
Different pool settings, including pool size, top k number, and prompt length, will lead to different results. To explore this, we conduct a total of 27 experiments, setting 3 distinct values for each of the 3 settings: (1) pool size of 10, 20, and 30. (2) top $k$ numbers of 1, 2, and 3. (3) prompt lengths of 1, 2, and 3. We choose the combination with the best results for \modelname~ settings. For the long-term and short-term forecasting experiments, we choose a pool size with $M$ = 30 and  $\mathcal{K}$=3 and prompt length is 3.
Detailed design analysis provides insights into prompt similarity and selection. Note that, the prompt pool's key in ~\citep{l2p} is trainable which allows us to maintain consistent and distinct characteristics of time series data for analysis. Our work offers an initial exploration into prompt-based tuning for time series forecasting, but substantial room remains for advancing prompt pool design.

\subsubsection{Prompt Selection Distribution}

To elucidate the mechanics behind prompt selection, we have visualized the distribution histograms for chosen prompts corresponding to the trend, seasonal, and residual elements of the ETTm2 dataset in Figure \ref{fig:prompt_selection}. In our experimental framework, each data point is permitted to select multiple prompts—with three prompts being chosen per component. Consequently, the frequency is determined by the number of times a particular prompt is selected across the dataset.
The histograms reveal pronounced discrepancies in prompt preferences between periodic and seasonal components. For instance, within the ETTm2 dataset, prompts 11, 20, and 24 are predominantly selected for capturing trends, whereas prompts 8, 10, and 29 are primarily chosen for seasonal fluctuations. This observation substantiates the premise that the prompt pool is adept at furnishing specialized prompts tailored to discrete characteristics of time series data.

\begin{figure}
\begin{center}
\centerline{\includegraphics[width=0.6\columnwidth]{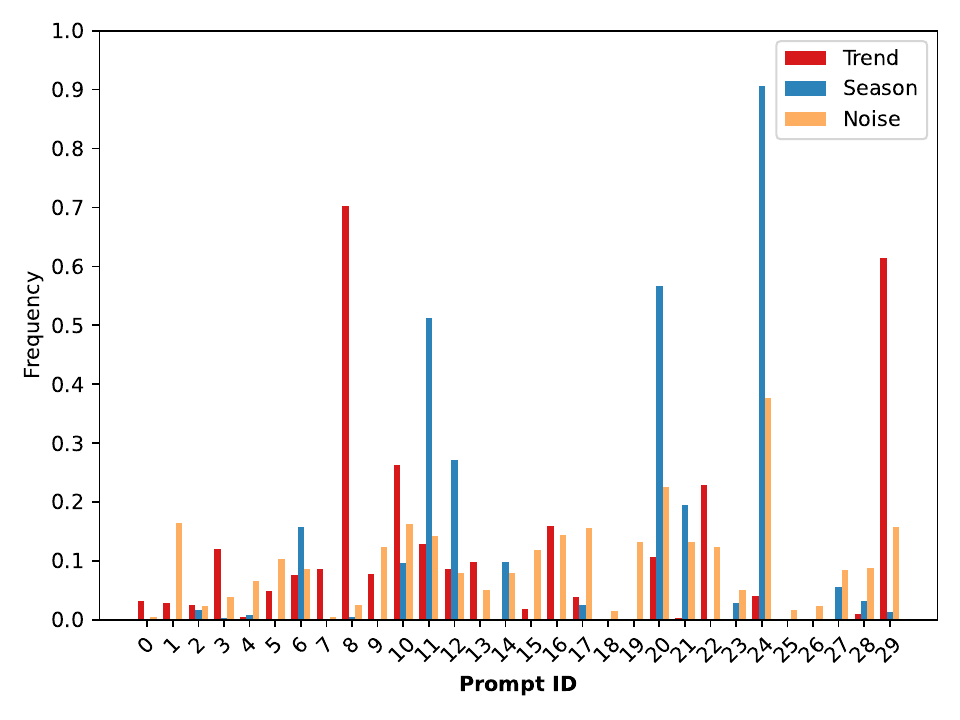}}
\caption{Prompt Distribution for prompt pool selection.} 
\label{fig:prompt_selection}
\end{center}
\end{figure}

\subsection{Analysis on designs of injecting contextual information}
\label{app:contextual}

\begin{table*}[h]
    \caption{SMAPE results of GEBDIT dataset with different textual information injection.}
    \label{tab:smape_GEBDIT_prompt_analysis}
    
   \centering
    \scalebox{0.7}{
    \renewcommand{\arraystretch}{1.5}
    \centering
    \setlength\tabcolsep{3pt}
    \hspace{20pt}
    \begin{tabular}{c|c|c|c|c|c|c|c|c|c}
    \hline\hline

        ~& Event & Sum + TP   & SumP + TP  & Sum $\oplus$ TP  & SumP $\oplus$ TP  & Sum + Semi & SumP + Semi &  Sum  $\oplus$ Semi& SumP  $\oplus$ Semi \\ \hline
        
        \multirow{3}{*}{\rotatebox{90}{SMAPE}} & 11 & \underline{38.77} & 38.77 & \textbf{38.75} & 38.90 & 38.91 & 38.82 & 39.04 & 38.79 \\ \cline{2-10}
        ~ & 17 & \underline{41.02} & 41.03 & \textbf{40.95} & 41.05 & 41.24 & 41.08 & 41.38 & 41.08 \\ \cline{2-10}
        ~ & 19 & \underline{44.03} & 44.02 & \textbf{44.06} & 44.10 & 44.41 & 44.19 & 44.73 & 44.24 \\ \hline\hline
        

    \end{tabular}
   }
\end{table*}

In this section, we investigate the influence of various configurations of textual injection and original prompt design from multi-modality perspective. As depicted in Table~\ref{tab:smape_GEBDIT_prompt_analysis}, eight distinct prompt designs were formulated. 'Sum' denotes the utilization of a direct summary of textual data as a prompt, while 'SumP' signifies the use of textual information as a query within the prompt pool. The symbols '+' and '$\oplus$' represent summation and concatenation operations, respectively. 'TP' stands for 'time series prompt pool,' and 'Semi' indicates a 'semi-soft prompt' where we manually design the prompt, with trainable parameters, referred to as "Predict the future time step given the \{\textit{time series data type}\}" for 3 different time series (Trend, Season, Residual) after decomposition. Each design choice exerts a distinct impact on the performance metrics. The direct incorporation of textual information along with the prompt pool yields the most optimal and near-optimal outcomes. In future work, we aim to delve deeper into the analysis of multimodal solution design strategies for time series forecasting.

\subsection{Hidden Representation}

Figure \ref{fig:representation} demonstrates the difference between the representation of the output hidden space from the pre-trained langauge model. While the representation of time series learned from GPT4TS is centered as a whole, the representation of the decomposed component from \modelnamespace implies a certain soft boundary between the three components. This is a demonstration of how \modelnamespace is able to learn the representation of trend, seasonality, residual parts respectively, which contributes to the superior performance of our model \modelname.

\begin{figure}[htbp]
\centering
\subfigure[\modelname-ETTh1]{
\begin{minipage}{0.5\linewidth}
\centering
\includegraphics[width=2in]{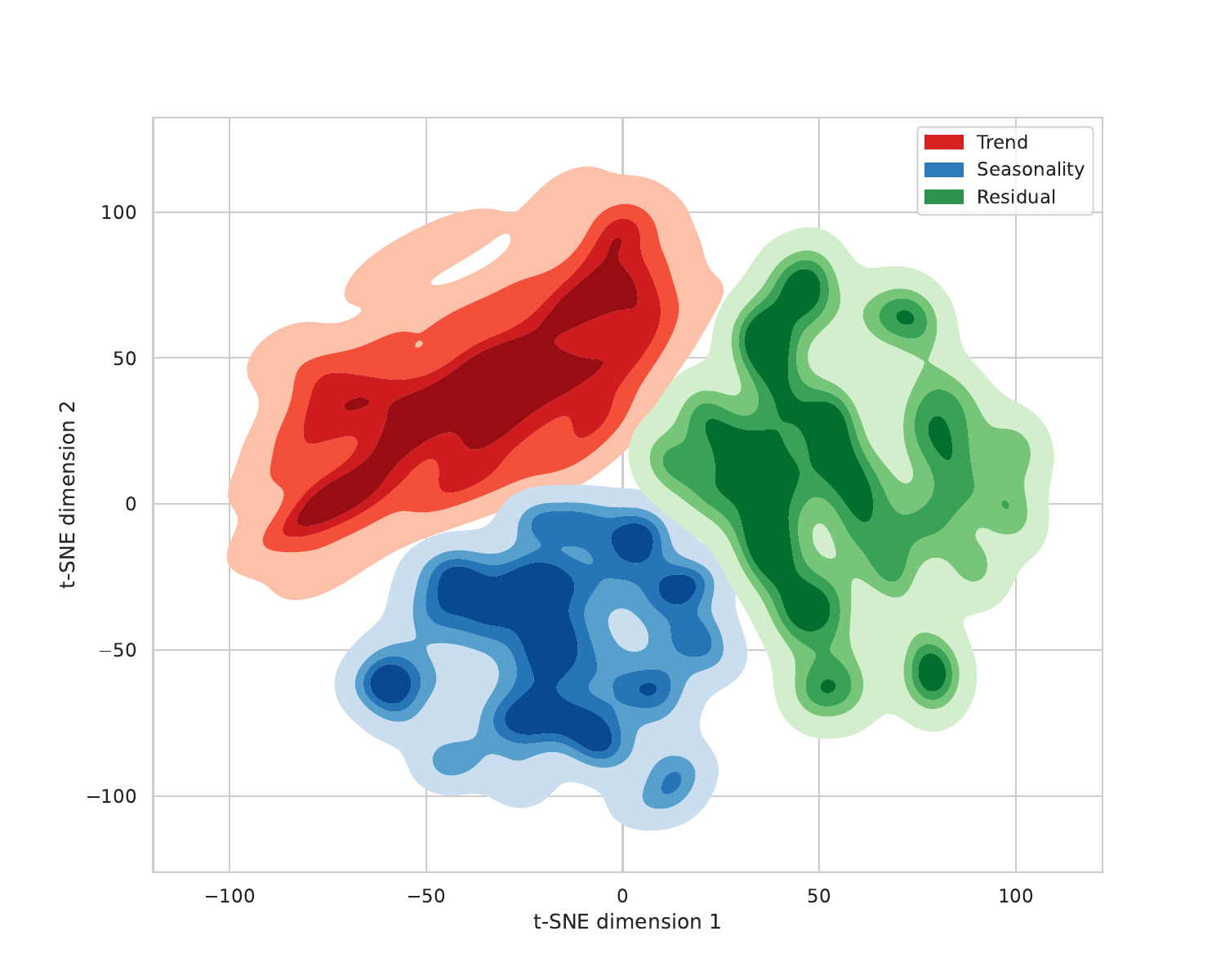}
\end{minipage}%
}%
\subfigure[GPT4TS-ETTh1]{
\begin{minipage}{0.5\linewidth}
\centering
\includegraphics[width=2in]{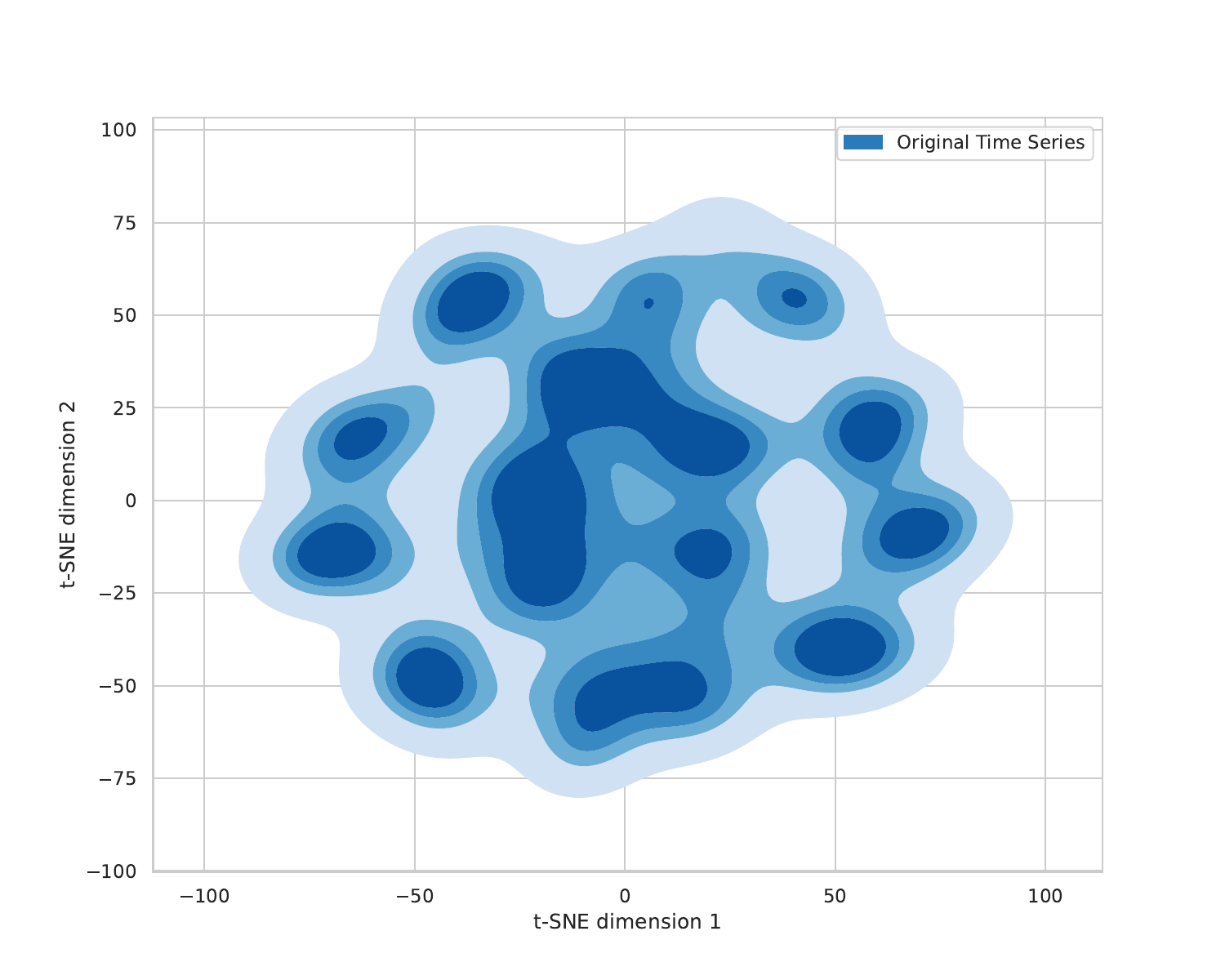}
\end{minipage}%
}%
\centering

\subfigure[\modelname-ETTh2]{
\begin{minipage}{0.5\linewidth}
\centering
\includegraphics[width=2in]{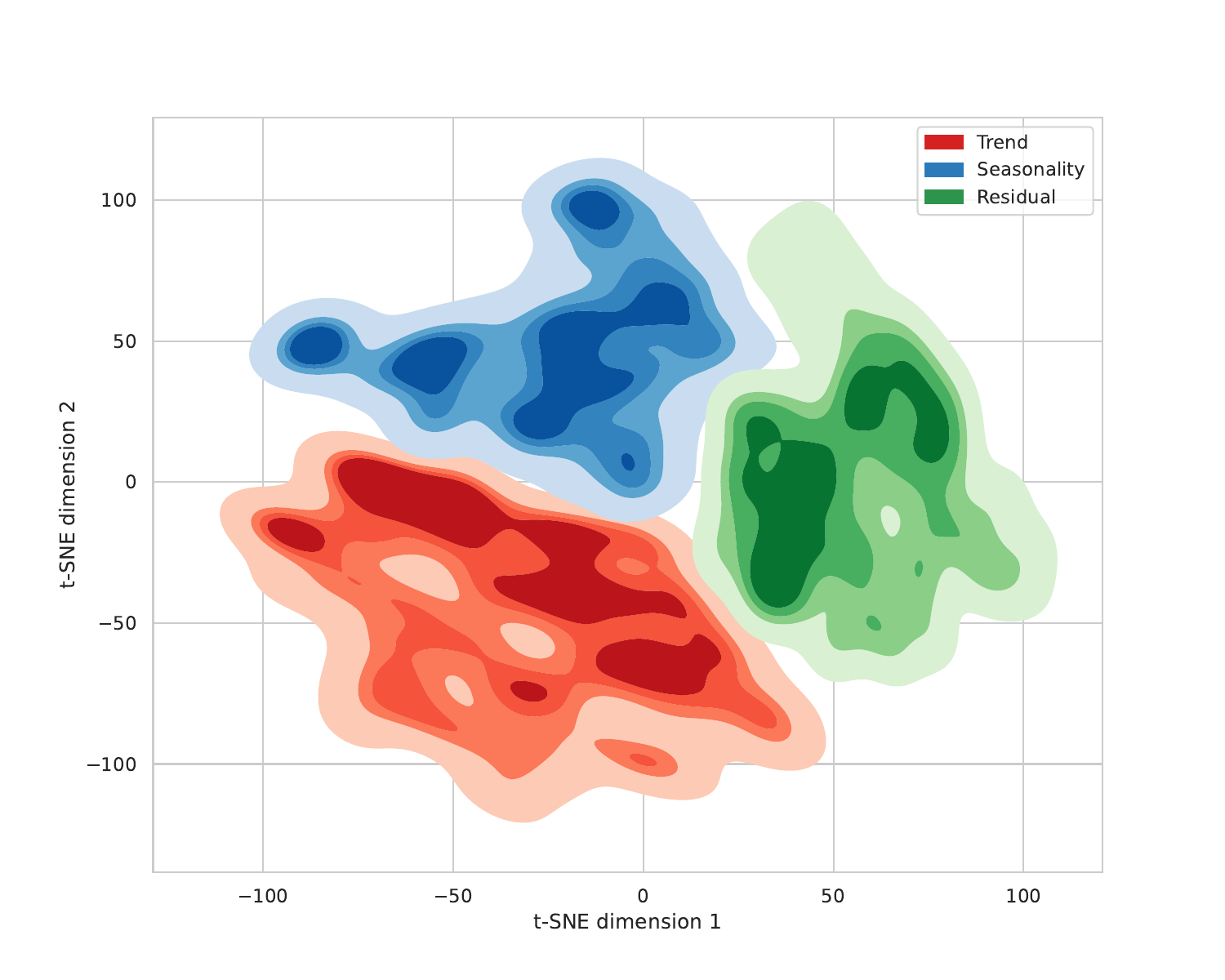}
\end{minipage}%
}%
\subfigure[GPT4TS-ETTh2]{
\begin{minipage}{0.5\linewidth}
\centering
\includegraphics[width=2in]{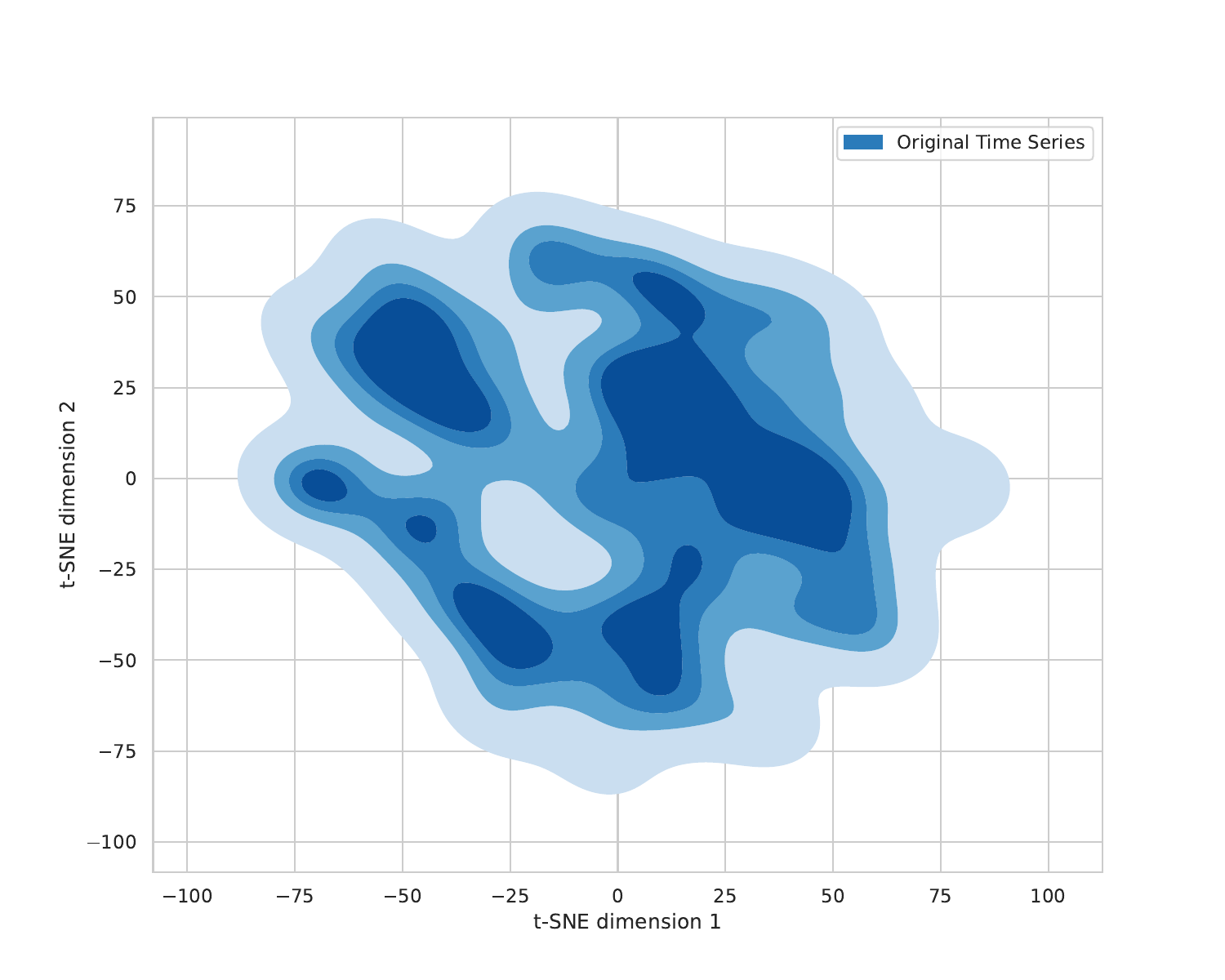}
\end{minipage}%
}%
\centering

\subfigure[\modelname-ETTm1]{
\begin{minipage}{0.5\linewidth}
\centering
\includegraphics[width=2in]{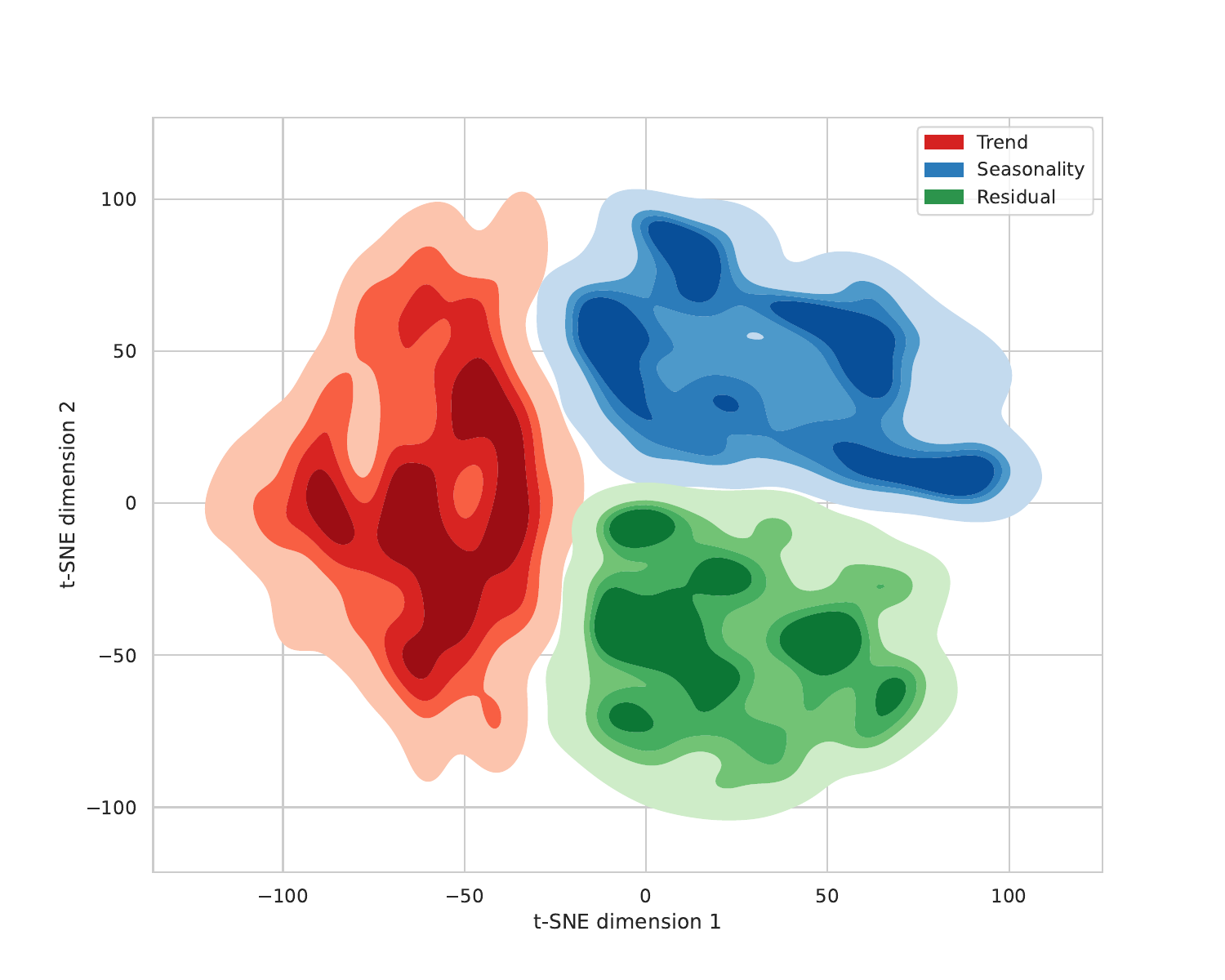}
\end{minipage}%
}%
\subfigure[GPT4TS-ETTm1]{
\begin{minipage}{0.5\linewidth}
\centering
\includegraphics[width=2in]{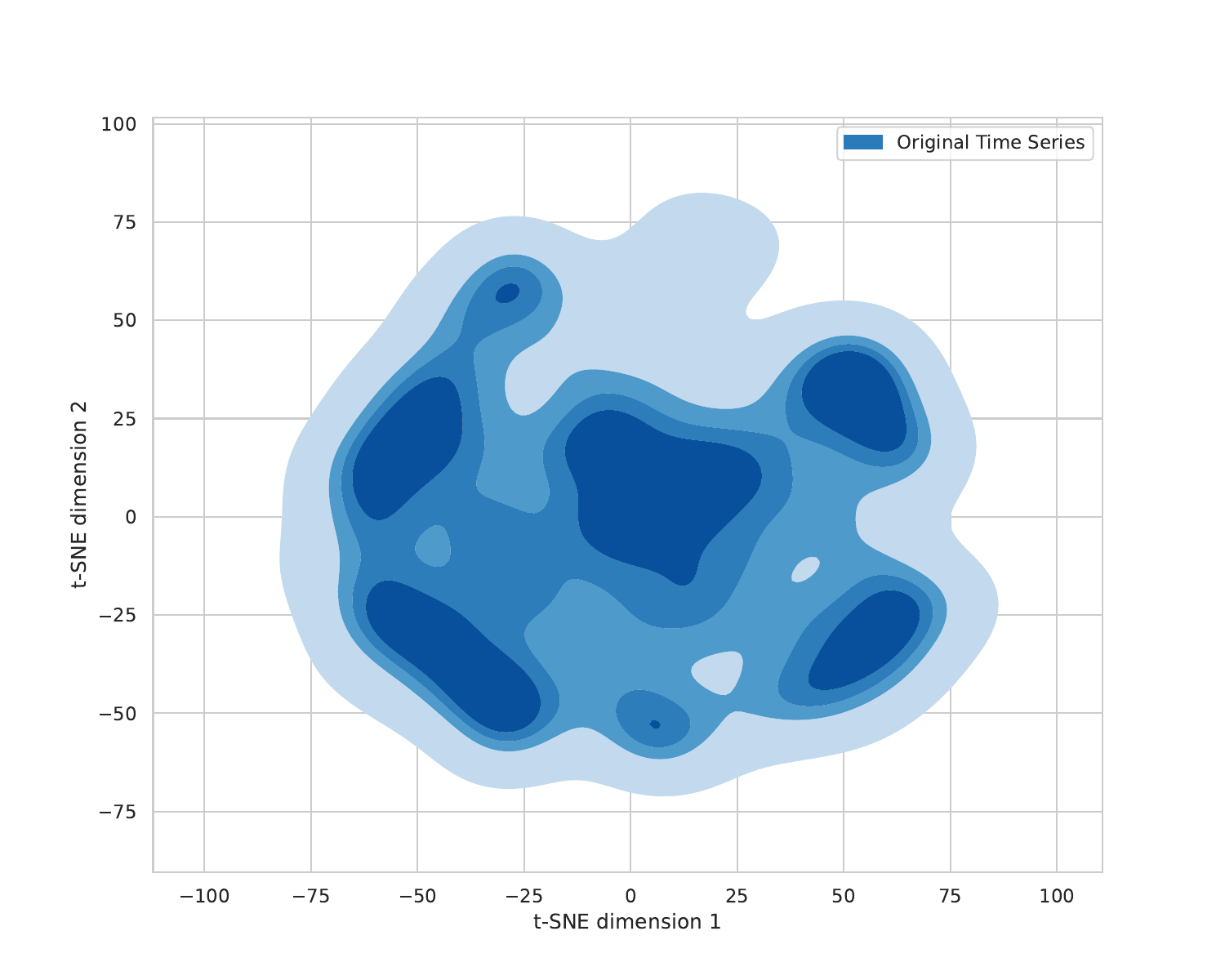}
\end{minipage}%
}%
\centering

\subfigure[\modelname-ETTm2]{
\begin{minipage}{0.5\linewidth}
\centering
\includegraphics[width=2in]{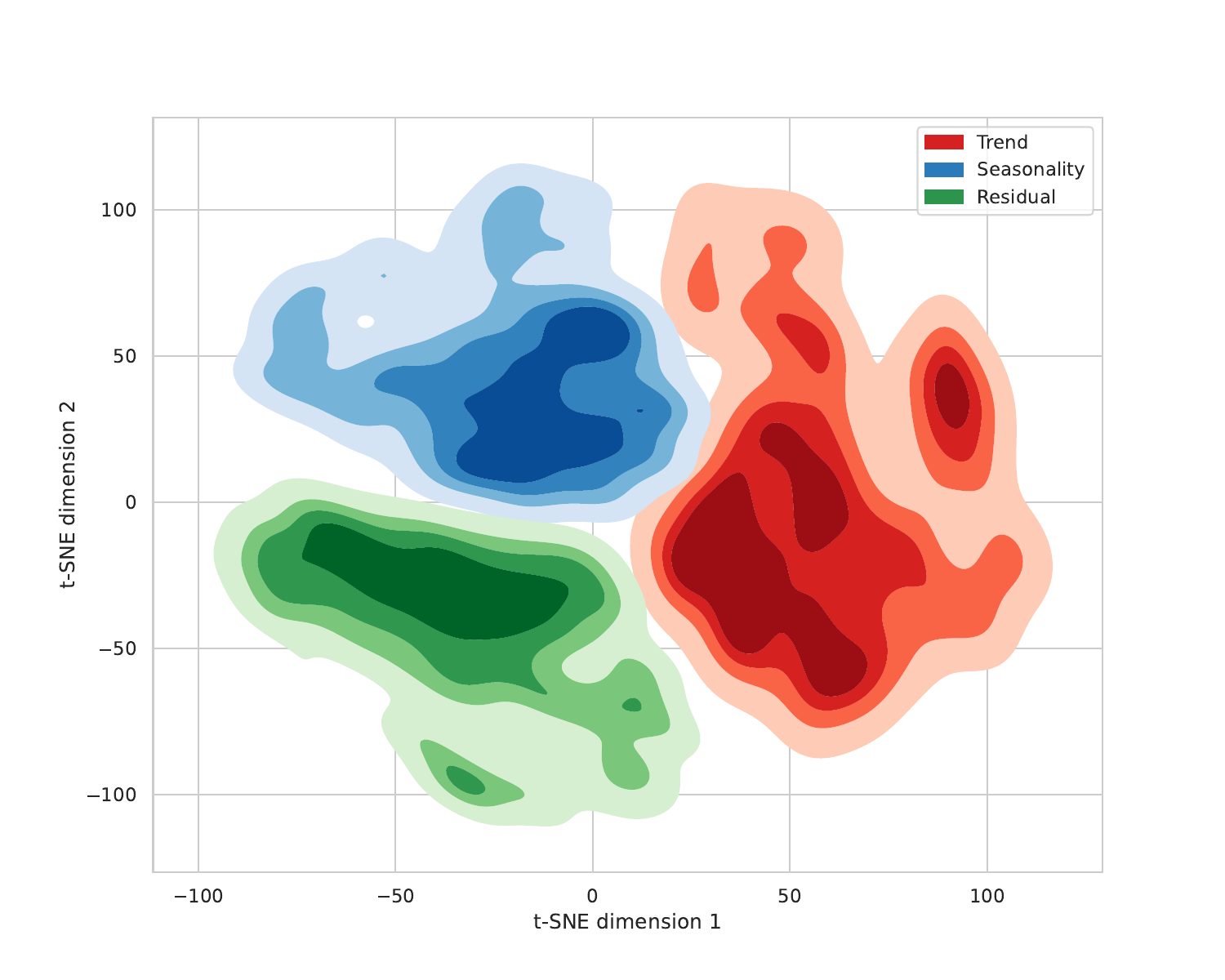}
\end{minipage}%
}%
\subfigure[GPT4TS-ETTm2]{
\begin{minipage}{0.5\linewidth}
\centering
\includegraphics[width=2in]{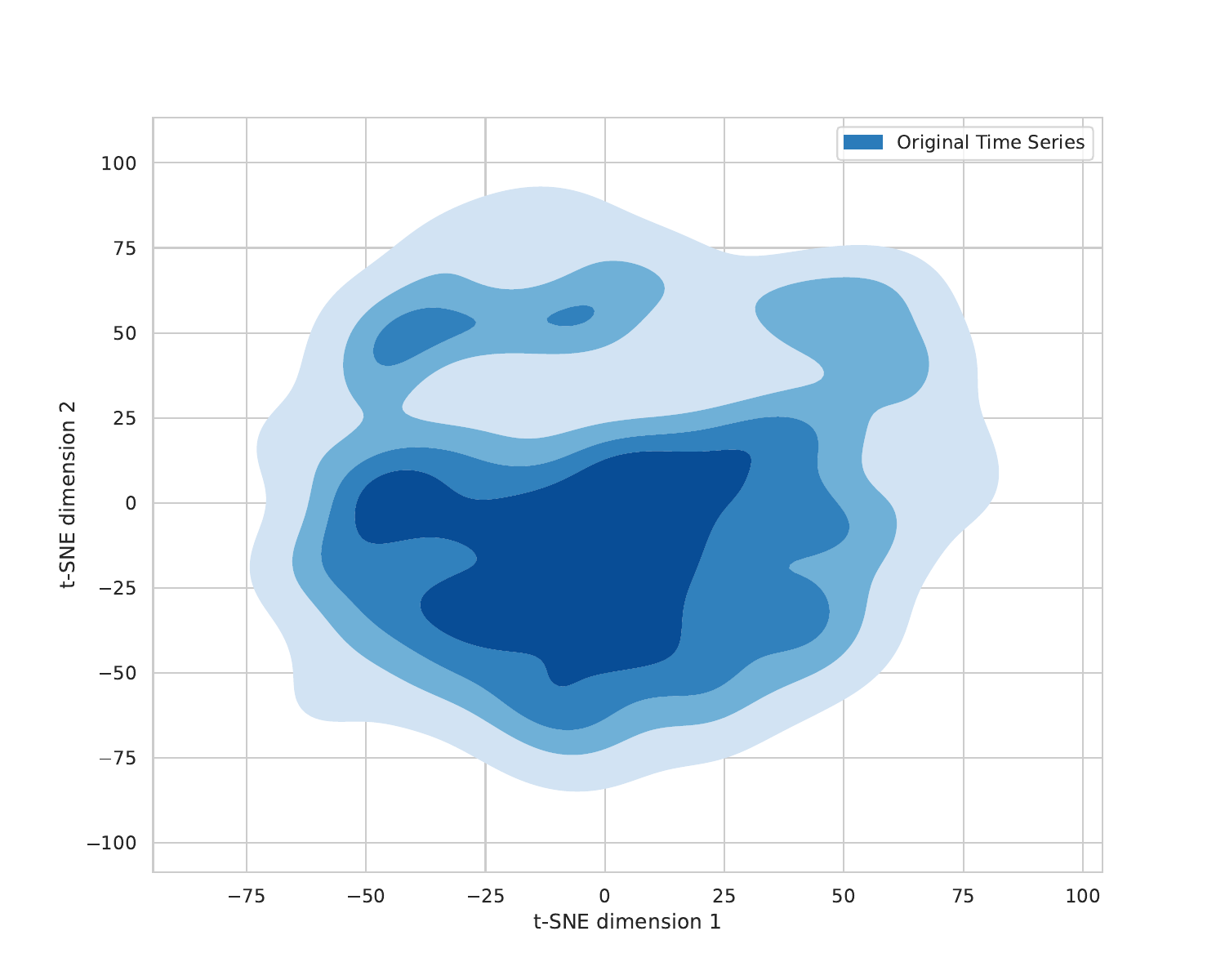}
\end{minipage}%
}%
\centering
\caption{Comparison of GPT4TS representation with \modelnamespace representation for prediction length $O = 96$ using TSNE. Trend in red, seasonality in blue, residual in green.}
\label{fig:representation}

\end{figure}

\subsection{Model Training Time Comparison}

Figure ~\ref{fig:model_runtime} illustrates the training time of other baseline models in comparison to our model \modelname. To ensure fairness, we calculated the percentage of runtime for models operating on identical machines and utilizing equivalent computational resources. Each model's training time is presented as a ratio relative to \modelname's training time. A value less than 1 indicates that the model trains faster than \modelname, while a value greater than 1 suggests the opposite. We use horizontal bars to visually represent each model's relative training time, with the bars extending to the left or right of the central vertical line based on whether they are faster or slower than our model \modelname, respectively.

\begin{figure}[htbp]

\caption{Visual Comparison on relative training time of other models and our proposed model \modelname~ under channel independent setting.}
\label{fig:model_runtime}

\centering
\includegraphics[width=0.8\columnwidth]{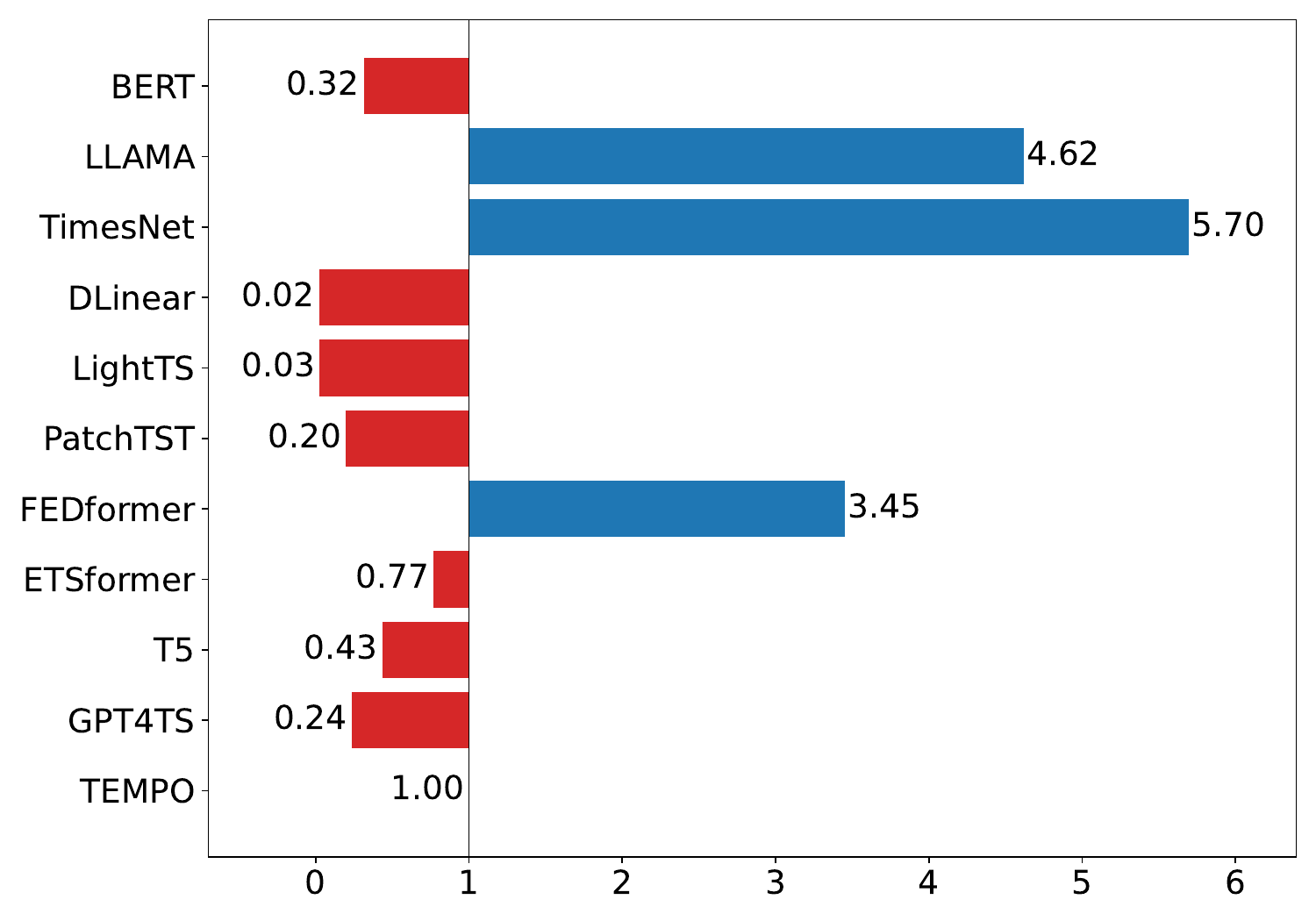}

\end{figure}

\section{The roles of Generalized Additive Models (GAM) and SHapley Additive exPlanations (SHAP)}

\begin{figure}
\begin{center}
\centerline{\includegraphics[width=0.8\columnwidth]
{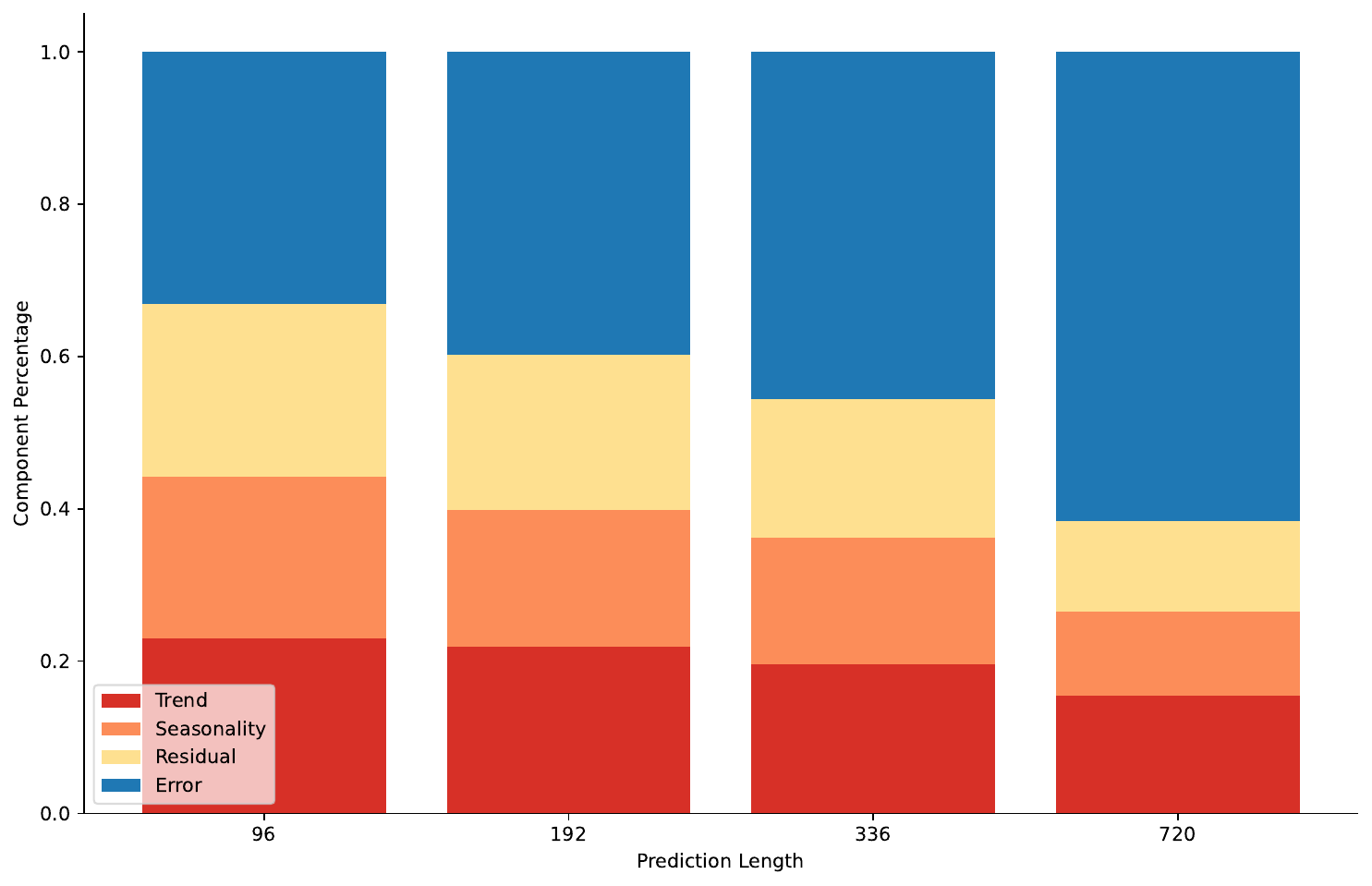}}
\caption{The SHAP (SHapley Additive exPlanations) values of decomposed components of \modelnamespace for weather dataset.} 
\label{more_shap}
\end{center}
\end{figure}

In our paper, GAM and SHAP serve as instrumental tools, not only for affirming anticipated findings but also for yielding deeper insights and explanations into the inner workings of intricate models.
\begin{itemize}
    \item Role of GAM: GAM inherently models the effects of different features as additive components. This characteristic of GAM provides intrinsic interpretability to TEMPO. It's not merely a tool for confirming the absence of patterns in residuals; it also helps us understand how each feature contributes to the final prediction.
    \item Role of SHAP: SHAP helps in attributing feature effects post-hoc to explain the predictions made by complex models, which may otherwise be opaque.
\end{itemize}
The utility of GAM and SHAP in our analysis can be detailed as follows:
\begin{itemize}
    \item Confirmation of Assumptions: the analyses quantitatively confirm assumptions about model behavior with data-driven evidence, rather than just intuition. This substantiation increases the trust and transparency in the model's predictions;
    \item Detecting Unexpected Behaviors: the component attribution could reveal unexpected behaviors if present. For example, residual impact being higher than expected could indicate overfitting noise.
    \item Providing Nuanced Insights: SHAP provides nuance beyond high-level expectations, like showing the increasing error of seasonal components in longer forecasts. 
\end{itemize}
In our paper, we use the ETTm1 and weather datasets as detailed examples. The full results used to calculate the SHAP value can be found at Table~\ref{tab:SHAP-original-values}. In datasets exhibiting strong seasonality, the seasonal component may display much larger variations than the residual component. Conversely, in datasets with minimal seasonality, the variations between these two components should be more comparable. We can calculate the strength of seasonality via:
\begin{align}
    \mathrm{S}=\max \left(0,1-\frac{\operatorname{Var}\left(R_t\right)}{\operatorname{Var}\left(S_t\right)+\operatorname{Var}\left(R_t\right)}\right)
\end{align}
When we compare the seasonality strengths of different datasets, we find that ETTm1 (as shown in Figure~\ref{fig:shap_m1}, with a seasonality strength of 0.99) constitutes strongly seasonal data, whereas the weather dataset (depicted in Figure~\ref{more_shap} with a seasonality strength of 0.476) exhibits less seasonality and a more pronounced trend.
These findings align with the conclusions drawn from the SHAP values. The performance degradation of ETTm1, when the prediction length is increased, can be primarily attributed to inaccuracies in the prediction of seasonal terms. In summary, SHAP provides pivotal descriptive power for model transparency, moving beyond intuition. The ability to discern how much and where components contribute enables targeted improvements. These insights can guide us in better leveraging inductive bias to enhance both efficiency and effectiveness in the era of pre-training models. One of the interesting future works is that we can adaptively and selectively optimize specific components based on the GAM structure and SHAP scores during the training process. This approach would allow us to focus our computational resources and efforts on the most influential components, thereby improving the overall effectiveness of the model.

\begin{table}[ht]
\centering
\caption{SHAP original values for each component}
\label{tab:SHAP-original-values}
\begin{centering}
\renewcommand{\arraystretch}{1.5}
\setlength\tabcolsep{5pt}
\hspace{20pt}
\scalebox{0.7}{
\begin{tabular}{c|c|c|c|c|c|c|c|c|c}
\hline\hline
       &       & w/o trend & w/o season & w/o residual &  trend & season & residual & empty set & complete set \\
       \hline \hline
       &       & MSE/MAE & MSE/MAE & MSE/MAE & MSE/MAE & MSE/MAE & MSE/MAE & MSE/MAE  & MSE/MAE  \\ \cline{3-10}
\multirow{5}{*}{ETTm1} & 96   & \textbf{0.437}/0.432 & 0.670/0.526 & 0.457/0.441 & 0.663/0.541 & 0.472/0.455 & 0.680/0.535 & 1.104/0.790 & 0.438/\textbf{0.424} \\ 
\cline{3-10}
                       & 192  & 0.466/0.447 & 0.646/0.518 & 0.488/0.455 & 0.682/0.529 & 0.483/0.455 & 0.666/0.526 & 1.101/0.789 & \textbf{0.461}/\textbf{0.432}\\ \cline{3-10}
                       & 336  & \textbf{0.505}/\textbf{0.466} & 0.672/0.530 & 0.526/0.476 & 0.680/0.531 & 0.524/0.475 & 0.707/0.543 & 1.102/0.790 & 0.515/0.467 \\ \cline{3-10}
                       & 720  & \textbf{0.579}/\textbf{0.507} & 0.678/0.549 & 0.586/0.508 & 0.684/0.548 & 0.592/0.509 & 0.709/0.558 & 1.105/0.794 & 0.591/0.509 \\ \cline{3-10}
                       & Avg  & \textbf{0.497}/0.463 & 0.666/0.531 & 0.514/0.470 & 0.677/0.537 & 0.518/0.474 & 0.691/0.540 & 1.103/0.791 & 0.501/\textbf{0.458 }\\ \hline \hline
\multirow{5}{*}{Weather} & 96  & 0.213/0.267 & \textbf{0.202}/0.261 & 0.205/0.264 & 0.223/0.289 & 0.234/0.293 & 0.220/0.284 & 0.637/0.608 & 0.211/\textbf{0.254 }\\ \cline{3-10}
                         & 192 & 0.266/0.317 & \textbf{0.251}/\textbf{0.297} & 0.256/0.306 & 0.254/0.304 & 0.290/0.335 & 0.262/0.316 & 0.638/0.608 & 0.254/0.298 \\ \cline{3-10}
                         & 336 & 0.317/0.356 & \textbf{0.290}/0.333 & 0.295/0.331 & 0.293/\textbf{0.331} & 0.328/0.357 & 0.313/0.356 & 0.640/0.609 & 0.292/0.332 \\ \cline{3-10}
                         & 720 & 0.402/0.401 & 0.371/0.383 & 0.377/0.380 & 0.364/\textbf{0.378} & 0.389/0.393 & 0.385/0.398 & 0.638/0.610 & \textbf{0.370}/0.379 \\ \cline{3-10}
                         & Avg & 0.300/0.335 & \textbf{0.279}/0.318 & 0.283/0.320 & 0.283/0.325 & 0.310/0.345 & 0.295/0.339 & 0.638/0.609 & 0.282/\textbf{0.316} \\ \hline\hline
\end{tabular}
}
\end{centering}
\end{table}

\section{Baseline Model Explanations}
\label{app:baseline}
We demonstrate the baseline models we compared with in our experiments in the following:
\begin{itemize}
   
    \item \href{https://github.com/cure-lab/LTSF-Linear}{DLinear}~\citep{dlinear}:  	DLinear combines a decomposition scheme from Autoformer and FEDformer with linear layers to predict time series data by modeling trend and seasonal components separately and summing their features for enhanced performance in trend-rich datasets.
   
    \item \href{https://github.com/yuqinie98/PatchTST}{PatchTST}~\citep{Yuqietal-2023-PatchTST}: PatchTST is a Transformer-based model for multivariate time series forecasting that segments data into subseries patches and uses a channel-independent design to efficiently reduce computational costs while enhancing long-term prediction accuracy.
   
    \item \href{https://arxiv.org/pdf/2201.12740.pdf}{FEDformer} ~\citep{zhou2022fedformer}: FEDformer combines seasonal-trend decomposition with Transformers for time series forecasting, leveraging frequency insights for efficiency and accuracy, outperforming state-of-the-art methods.
    \item \href{https://github.com/zhouhaoyi/Informer2020}{Informer} ~\citep{zhou2021informer}: Informer is a transformer-based model optimized for long sequence time-series forecasting, leveraging ProbSparse self-attention for efficiency, self-attention distilling for handling long inputs, and a generative decoder for rapid predictions.
    \item \href{https://github.com/salesforce/ETSformer}{ETSformer} ~\citep{woo2022etsformer}: ETSformer is a novel Transformer architecture for time-series forecasting that integrates exponential smoothing principles, replacing traditional self-attention with exponential smoothing attention and frequency attention, to enhance accuracy, efficiency, and interpretability.

    \item \href{https://github.com/thuml/TimesNet}{TimesNet}~\citep{timesnet}: TimesNet transforms 1D time series into 2D tensors capturing intra- and inter-period variations and uses TimesBlock with an inception block to extract complex temporal patterns, excelling in multiple time series tasks.
    
    \item \href{https://github.com/openai/gpt-2}{GPT-2} ~\citep{radford2019language}: GPT-2 is a decoder-based language model developed by OpenAI, designed to generate coherent and diverse textual content from a given prompt. In our work, we use the GPT-2 with 6 layers as the backbone, which is adapted from GPT4TS ~\citep{one_fits_all}.
     
    \item \href{https://github.com/google-research/bert}{BERT} ~\citep{Bert/NAACL/Jacob}: BERT (Bidirectional Encoder Representations from Transformers) is an encoder-based deep learning model utilizing the Transformer architecture designed by Google to understand the context of words in a sentence by analyzing text bi-directionally. 
    
    \item \href{https://github.com/google-research/text-to-text-transfer-transformer}{T5} ~\citep{T5}: T5 (Text-to-Text Transfer Transformer) is a state-of-the-art neural network model with encoder-decoder based architecture designed by Google that converts every language problem into a text-to-text format. 
    \item \href{https://github.com/facebookresearch/llama}{LLaMA} ~\citep{Touvron2023LLaMAOA}: LLaMA (Large Langauge Model Meta AI) is a collection of state-of-the-art foundation language models ranging from 7B to 65B parameters delivering exceptional performance, while significantly reducing the needed computational power and resources. In our work, we use the first 6 layers of 7B LLaMA.
    
\end{itemize}

\section{Theorical analysis}
\label{proof}

\subsection{Proof of Theorem ~\ref{theorem_3_1}}
\begin{theorem}
Suppose that we have time series signal $Y(t)=S(t)+T(t)+R(t),t\in[t_1,t_n]$, where $S(t)$ is the seasonal signal (periodical), $T(t)$ is the trend signal (non-periodical) and $R(t)$ is the residual signal. Let $E=\{e_1,e_2,...,e_n\}$ denote a set of orthogonal bases. Let $E_S\subseteq E$ denote the subset of $E$ on which $S(t)$ has non-zero eigenvalues and $E_T\subseteq E$ denote the subset of $E$ on which $T(t)$ has non-zero eigenvalues. If $S(t)$ and $T(t)$ are not orthogonal, i.e. $\sum_{i=1}^n S(t_i)T(t_i)\not=0$, then $E_T\cap E_S\not=\emptyset$, i.e. $E$ can not disentangle the two signals onto two disjoint sets of bases.
\end{theorem}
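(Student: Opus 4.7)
The plan is to prove this by contrapositive: assume $E_T \cap E_S = \emptyset$ and show that $S(t)$ and $T(t)$ must then be orthogonal as vectors in $\mathbb{R}^n$. The whole argument rests on expressing both signals in the basis $E$ and exploiting the pairwise orthogonality of its elements.

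First I would fix notation by viewing $S$ and $T$ as vectors $\mathbf{s}, \mathbf{t} \in \mathbb{R}^n$ with components $S(t_i)$ and $T(t_i)$, and expanding them in the orthogonal basis: $\mathbf{s} = \sum_{j=1}^{n} a_j e_j$ and $\mathbf{t} = \sum_{j=1}^{n} b_j e_j$. I would interpret the phrase ``$S(t)$ has non-zero eigenvalues on $E_S$'' as meaning $E_S = \{e_j : a_j \neq 0\}$, and similarly $E_T = \{e_j : b_j \neq 0\}$; this is the standard reading when an orthogonal basis is treated as a spectral decomposition of the signal.

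Next, I would compute the inner product $\langle \mathbf{s}, \mathbf{t} \rangle = \sum_{i=1}^n S(t_i)T(t_i)$ via the expansions. Using $\langle e_j, e_k \rangle = 0$ for $j \neq k$, the double sum collapses to
\begin{equation}
\langle \mathbf{s}, \mathbf{t} \rangle = \sum_{j=1}^n a_j b_j \, \|e_j\|^2.
\end{equation}
Under the contrapositive hypothesis $E_T \cap E_S = \emptyset$, for each index $j$ at least one of $a_j, b_j$ vanishes, so every term $a_j b_j$ is zero and the inner product is zero. This contradicts the assumption $\sum_{i=1}^n S(t_i)T(t_i) \neq 0$, forcing $E_T \cap E_S \neq \emptyset$.

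There is no real technical obstacle here — the proof is essentially a one-line calculation once the expansions are in place. The only genuinely delicate point is interpretive rather than computational: pinning down what ``non-zero eigenvalues'' means for a vector (as opposed to an operator) and making sure the reader agrees that $E_S$ and $E_T$ are exactly the supports of the coefficient sequences. Once that is clarified, the orthogonality of $E$ does all the work, and the residual $R(t)$ plays no role in the argument (it only enters via the implicit decomposition of $Y$).
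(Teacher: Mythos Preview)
Your proposal is correct and follows essentially the same approach as the paper: both expand $S$ and $T$ in the orthogonal basis, reduce the inner product to $\sum_j a_j b_j \|e_j\|^2$ via pairwise orthogonality, and conclude that a nonzero inner product forces some coefficient pair $(a_j,b_j)$ to be jointly nonzero. The only cosmetic difference is that you frame it as a contrapositive/contradiction while the paper argues directly from the nonvanishing sum, but the underlying computation and interpretation of $E_S,E_T$ as the coefficient supports are identical.
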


\begin{proof}
    We decompose $S(t)$ and $T(t)$ onto $E$ and acquire that $S(t) = \sum a_ie_i$ and $T(t) = \sum b_ie_i$. Then it is obvious that $e_i\in E_S \iff a_i\not=0$ and $e_i\in E_T \iff b_i\not=0$. Now, let us consider the inner product of $S(t)$ and $T(t)$:
    \begin{equation}
        \sum_{i=1}^n S(t_i)T(t_i)=S(t)\cdot T(t) = (\sum a_ie_i)\cdot(\sum b_ie_i) = \sum_{i,j}a_ib_je_ie_j
    \end{equation}
    Note that the components found by PCA is a set of orthogonal basis. Thus, for any $i\not=j$, we have $e_ie_j=0$. Thus, we have:
    \begin{equation}
        \sum_{i=1}^n S(t_i)T(t_i)=S(t)\cdot T(t) = (\sum a_ie_i)\cdot(\sum b_ie_i) = \sum_ia_ib_i||e_i||^2_2
    \end{equation}

    Note that $\sum_{i=1}^n S(t_i)T(t_i)=0$. Thus, there must be at least one $i$ such that $a_i\not=0$ and $b_i\not=0$. Thus, $e_i\in E_S$ and $e_i\in E_T$, in other words, $E_T\cap E_S\not=\emptyset$. 
    
\end{proof}

The above theorem proves that if $T(t)$ and $S(t)$ are not orthogonal, then there does not exist a set of orthogonal bases that disentangle $S(t)$ and $T(t)$ onto two disjoint sets of bases. Note that it is common that a periodical signal is not orthogonal with a non-periodical signal. This is because the spectrum of a periodical signal is discrete and the spectrum of a periodical signal is continuous. Thus, it is very likely that there exist overlaps on those non-zero frequencies of the periodical signal.
Note that PCA also aims at learning a set of orthogonal bases on the data. We can quickly acquire a corollary that PCA can not disentangle the two signals into two disjoint sets of bases. Based on ~\citep{one_fits_all}'s Theorem 1, we can reveal that self-attention in pre-trained large models learns to perform a function closely related to PCA. Therefore, the self-attention mechanism cannot automatically decompose the time series into its trend and seasonal components unless we manually perform this operation.

\subsection{INTERPRETING MODEL PREDICTIONS FROM FREQUENCY DOMAIN}
In addition to Section~\ref{interpret}, which gives an experimental perspective on why decomposition can aid forecasting results, we provide a theoretical analysis from the spectral domain. Specifically, time series signals can be represented as a combination of different frequencies in the spectral domain. Forecasting is challenging because real-world series comprises convoluted mixtures of variations with overlapping periodicities. However, by shifting our view to the frequency domain, we can identify distinct components via STL decomposition containing isolated frequencies that stand out clearly from the rest of the spectrum. This separation of dominant periodic patterns is crucial because forecasting future values equates to predicting how these underlying frequencies evolve over time: 

\begin{proposition}
[Equivalence of time domain forecasting and frequency domain forecasting ] Assume $x_0, x_1, ..., x_{N-1}$ and $\hat{x}_0, \hat{x}_1..., \hat{x}_{N-1}, \hat{x}_N$ are the input and output sequences of the frequency model. Then, $\hat{x}_N$ transferred from the frequency domain is the predicted value at timestamp $N$. 
\label{freq}
\end{proposition}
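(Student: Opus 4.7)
The plan is to establish the equivalence by showing that the Discrete Fourier Transform (DFT) is a linear bijection between the time domain and the frequency domain, so that any prediction made in the frequency domain corresponds uniquely to a prediction in the time domain. First, I would set up notation: given the input sequence $x_0, x_1, \ldots, x_{N-1}$, define its DFT coefficients $X_k = \sum_{n=0}^{N-1} x_n \, e^{-2\pi i k n / N}$ for $k = 0, \ldots, N-1$, with inverse $x_n = \tfrac{1}{N}\sum_{k=0}^{N-1} X_k \, e^{2\pi i k n / N}$. The ``frequency model'' is interpreted as a mapping that takes the input coefficients $\{X_k\}_{k=0}^{N-1}$ and produces frequency coefficients $\{\hat{X}_k\}_{k=0}^{N}$ corresponding to a sequence of length $N+1$.

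Next, I would invoke the inverse DFT on the augmented coefficient set to define $\hat{x}_n = \tfrac{1}{N+1}\sum_{k=0}^{N} \hat{X}_k \, e^{2\pi i k n / (N+1)}$ for $n = 0, 1, \ldots, N$. By the invertibility of the DFT, this map is one-to-one, so the sequence $\hat{x}_0, \hat{x}_1, \ldots, \hat{x}_N$ is uniquely determined. In particular, evaluating at $n = N$ yields a well-defined value $\hat{x}_N$ which, by construction, is the $(N+1)$-st sample of the time-domain signal reconstructed from the predicted spectrum. Hence forecasting the next coefficient in the frequency domain and inverting is equivalent to directly forecasting $\hat{x}_N$ in the time domain.

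The key subtlety, and the step I expect to be the main obstacle, is handling the fact that the Fourier basis itself depends on the sequence length: the input uses basis functions of period $N$, while the output uses those of period $N+1$. To avoid this mismatch, I would either (i) zero-pad the input to length $N+1$ before applying the DFT so that both representations live in the same frequency lattice, or (ii) work with the Z-transform / DTFT formulation where the frequency variable is continuous and the extension in length corresponds cleanly to evaluating at one additional time index. Either route makes the ``transfer from frequency domain'' step rigorous and completes the equivalence.
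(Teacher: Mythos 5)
Your proposal reduces the statement to a tautology and misses the actual content of the paper's proof. You interpret the ``frequency model'' as directly emitting all $N+1$ frequency coefficients $\{\hat{X}_k\}_{k=0}^{N}$ of a length-$(N+1)$ sequence, in which case inverting the DFT to recover $\hat{x}_N$ is immediate by definition of the inverse transform. But nothing in your argument uses the fact that the first $N$ time samples $x_0,\ldots,x_{N-1}$ are \emph{already known}; you are implicitly asking the model to re-predict all $N+1$ spectral coefficients, with no guarantee that the reconstructed $\hat{x}_0,\ldots,\hat{x}_{N-1}$ agree with the given history. The paper's proof is precisely about exploiting that constraint: it shows that once $f(0),\ldots,f(N-1)$ are fixed, the entire length-$(N+1)$ spectrum $F'(0),\ldots,F'(N)$ and the next sample $f(N)$ are determined by the \emph{single} new coefficient $F'(N)$. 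It does this by expanding both $F(u)$ (length-$N$ DFT) and $F'(u)$ (length-$(N+1)$ DFT) in terms of the known samples plus $f(N)$, isolating $f(N)$ from the $u=N$ equation $F'(N)=B+\frac{1}{N+1}f(N)e^{-i2\pi N^2/(N+1)}$, and then substituting back to express each $F'(u)$, $u<N$, in closed form. That reduction to a one-dimensional prediction problem is the substance of the proposition, and it does not appear in your proposal.

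The ``basis mismatch'' you flag is real, but neither of your proposed remedies resolves it. Zero-padding the input to length $N+1$ gives the length-$(N+1)$ DFT of $x_0,\ldots,x_{N-1},0$, which is not the DFT of the actual extended sequence $x_0,\ldots,x_{N-1},x_N$ since $x_N$ is precisely the unknown; the interpolated spectrum you would obtain has no direct relationship to the quantity $F'(N)$ the model must predict. Passing to the DTFT/Z-transform sidesteps the grid but leaves you with a continuous-frequency object that you would still have to sample and relate back to the two finite DFTs, which is exactly the bookkeeping the paper carries out explicitly via the quantities $A$ and $B$. To repair the proposal, you would need to (i) impose $\hat{x}_n=x_n$ for $n<N$ as a hard constraint, (ii) observe that this cuts the $N{+}1$-dimensional space of spectra down to a one-parameter family indexed by $f(N)$ (equivalently by $F'(N)$), and (iii) derive the explicit change of variables between $F'(N)$ and $f(N)$ — which is the paper's argument.
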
 
Given input sequence $\{x_t| t = 0, 1, ..., N-1\}$, where $N$ is the number of discrete timestamps, in the time domain, the Discrete Fourier Transform (DFT, $F$) and inverse  Discrete Fourier Transform (iDFT, $f$) operation to obtain the frequency domain can be defined as:
\begin{equation}
\label{DFT}
  F(u)=\frac{1}{N}\sum\limits_{x=0}^{N-1} f(x) e^{\frac{-i2\pi ux}{N}}, u=0,1,…,N-1,
\end{equation}
\begin{equation}
\label{IDFT}
  f(x)=\sum\limits_{u=0}^{N-1}F(u) e^{\frac{i2\pi ux}{N}}, x=0,1,…,N-1.
\end{equation}

According to Proposition~\ref{freq}, assuming that the next value of $F(u)$, can be predicted as $F'(N)$, 
other unknown variables in the time and frequency domains, including the $(N+1)$th discrete sample $f(N)$ and the new DFT's result  $F'(u)$, $u=0, 1, 2,…, N-1$  are determined by the given $F' (N)$.  

\begin{proof}
Let 
\begin{equation}
    A=\sum_{x=0}^{N-1}f(x)(\frac{e^{-\frac{i2\pi ux}{N}}}{N}-\frac{e^{-\frac{i2\pi ux}{N+1}}}{N+1}), 
\end{equation}
\begin{equation}
    B=\frac{1}{N+1}\sum_{x=0}^{N-1}f(x)e^{-\frac{i2\pi Nx}{N+1}},
\end{equation}
 then we have:  
\begin{equation}
f(N) = (N+1)(F'(N)-B)e^{-\frac{i2\pi N^2}{N+1}},
\end{equation}
\begin{equation}
F'(u)=A+(F'(N)-B)e^{\frac{i2\pi (N-u)N}{N+1}}.
\end{equation}
For $u=0,1,2,...,N-1$,  the value of $F'(u)-F(u)$ can be represented as:
\begin{equation}
    F'(u)-F(u) = A + \frac{1}{N+1}f(N)e^{-\frac{i2\pi uN}{N+1}}.
\end{equation}
For $u=N$, the value of $F'(N)$ can be represented as
\begin{equation}
    F'(N) = B + \frac{1}{N+1}f(N)e^{-\frac{i2\pi N^2}{N+1}}
\end{equation}. 

Given $F'(N)$, we can inference $F'(u)$ by:
\begin{equation}
   F'(u)=A+(F'(N)-B)e^{\frac{i2\pi (N-u)N}{N+1}}, u=0,1,2,...,N-1 .
\end{equation}

and $f(N)$ by:
\begin{equation}
  f(N) = (N+1)(F'(N)-B)e^{-\frac{i2\pi N^2}{N+1}},  
\end{equation}
Thus, the only variable that needs to be predicted is $F' (N)$.
\end{proof}

This proposition reveals that if it is easy to predict patterns in the frequency domain, we can more easily predict the time series' future values. Forecasting equates to predicting the evolution of the underlying frequencies that make up the time series signal.
STL decomposition significantly aids this task by separating components with distinct dominant periodic patterns. With STL, each component presents far fewer intertwining periodic influences to disentangle, which notably simplifies the prediction problem.
For instance, the trend component may exhibit a lone annual cycle that clearly dominates its spectrum. A targeted predictive model focusing solely on accurately estimating the progression of this isolated frequency can generate accurate forecasts.
Likewise, the seasonal element neatly isolates recurring daily or weekly frequencies. Models tailored specifically for these known periodicities allow for highly predictable extrapolations.
In contrast, directly modeling the raw data's condensed spectrum with numerous blended periodic components yields unsatisfactory approximations. The overlapping frequencies are difficult to distinguish and predict independently.

Conceptualizing forecasting through a frequency domain lens reveals how STL decomposes complex spectral mixtures into distinguishable frequency-based sub-problems. This allows implementation optimized predictive strategies to uncover patterns in each component for markedly improved time series predictions. In essence, STL facilitates accurate future predictions by disentangling the spectral content into simpler predictable forms.

\section{Detail of the TETS Dataset}
\label{collect_dataset}

\paragraph{Time series data}

Analyzing and forecasting a company’s future profitability and viability are essential for its development and investment strategies. Financial assessment and prediction are data-driven, mostly relying on the combination of diverse data types including company reports, etc. In this project, our primary sources are the company’s financial statements: balanced sheet, income statements, and cash flow statements.

The Standard \& Poor's 500 Index (S\&P 500) represents a stock market index that measures the stock performance of the 500 largest companies in the U.S.11 sectors in the S\&P500 are included in our dataset:
Basic Materials (21 companies), 
Communication Services (26 companies), 
Energy (22 companies), 
Financial Services (69 companies), 
Healthcare (65 companies), 
Technology (71 companies), 
Utilities (30 companies), 
Consumer Cyclical (58 companies), 
Consumer Defensive (36 companies), 
Industrials (73 companies), 
Real Estate (32 companies).
In terms of dataset division, we separate the sectors in our dataset to achieve both in-domain task setting and zero-shot task setting. The first seven sectors are treated as training and evaluation sectors, while the last four sectors are reserved as unseen sectors for zero-shot forecasting task.

To address missing numerical information for companies in the S\&P 500 that lack data prior to 2010, we apply linear interpolation after experimenting with various methods. Linear interpolation is a technique that estimates a value within a range using two known end-point values. For missing values in research and development expenses, we adopted a zero-filling strategy. This is because null entries in these statements typically indicate that the company did not make any investment in that area.

\paragraph{Contextual data collection}
\label{propose_dataset}
This rise of  Large-scale pre-trained models (LLMs) in the field of Natural Langauge Processing has provided new possibilities for their application in time seris analysis. LLMs have proven useful for analyzing and learning complicated relationships and making inferences across different time series sequences. However, most existing approaches primarily convert time series data to direct input into LLMs, overlooking the fact that the LLMs are pre-trained specifically for natural language and thus neglecting the incorporation of contextual data.

Further, the information contained in time series data is limited, especially in the financial field. Time series data in the financial field, such as company statements, primarily reflect the financial numeric changes based on the company’s historical strategy and broader macroeconomic shifts. These data contain the company’s internal historical information. However, the broader market environment, referred to as external information, also plays an important role in the company's future development. For example, medicine and healthcare companies experienced steady growth before the outbreak of COVID-19. But between 2019 and 2020, after the outbreak of the pandemic, the financial statements of such companies were impacted significantly. As a result, we recognize the value of integrating news and reports as external data sources to complement internal information contained in time series data. The information contained in the external data mainly includes 3 parts: (i). Policy shifts across regions (ii). Significant events occurring globally (iii). Public reaction to companies' products. Together, these elements provide supplementary information missing in time series data (internal data), therefore enhancing our forecasting capabilities.

Extracting contextual data, such as news and reports, from varied sources presents a significant challenge. In today's digital age, numerous news websites and apps deliver a wide range of world news, spanning from influential news affecting entire industries to trivial, minor reports. Thus, it is crucial to filter and summarize the information, distinguishing between pivotal and less significant news. Fortunately, the recently released ChatGPT API\footnote{https://platform.openai.com/docs/guides/gpt} by Open AI offers the capability of collecting and summarizing news and reports for a specified duration.

Through consolidating all relevant details -- query, quarter, yearly context, company information, and specific requirements -- into user message and setting a cap at 110 tokens for response, we can efficiently obtain the desired contextual information from ChatGPT API. For illustration, Figure ~\ref{fig:chatgpt} displays an example from company A, showcasing designed prompts and corresponding responses from ChatGPT 3.5. If the contextual information can not be generated, the API often returns messages with keywords such as 'unfortunately' and 'sorry'. We detect and replace them with the term 'None', representing neutral contextual information.
Additionally, Figure ~\ref{fig:EBITDA_ts_summary_A} and ~\ref{fig:EBITDA_ts_summary_B} provide a illustration of our dataset, encompassing both time series data and the corresponding contextual texts. A detailed view of the contextual texts can be seen in Figure ~\ref{fig:Time_series_summary_1} and ~\ref{fig:Time_series_summary_2}.

\begin{figure}[htbp]
\centering
\includegraphics[width=\linewidth]{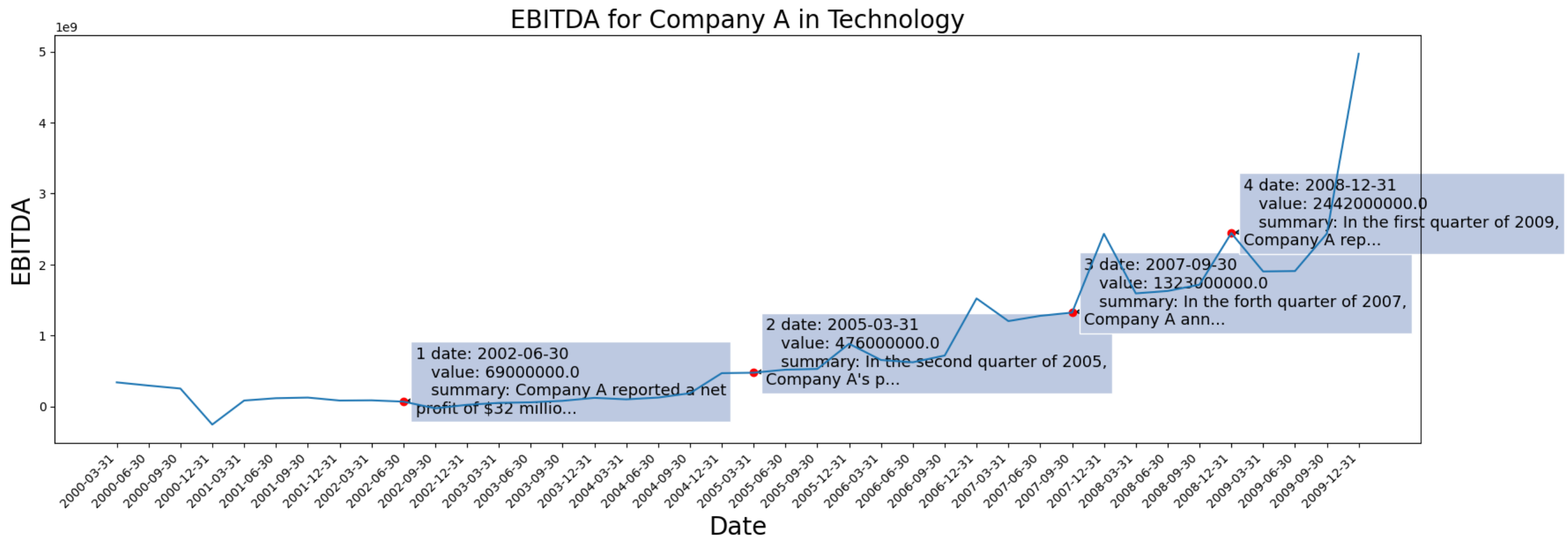}
\centering
\caption{EBITDA for Company A with contextual information}

\label{fig:EBITDA_ts_summary_A}
\end{figure}

\begin{figure}[htbp]
\centering
\includegraphics[width=0.9\linewidth]{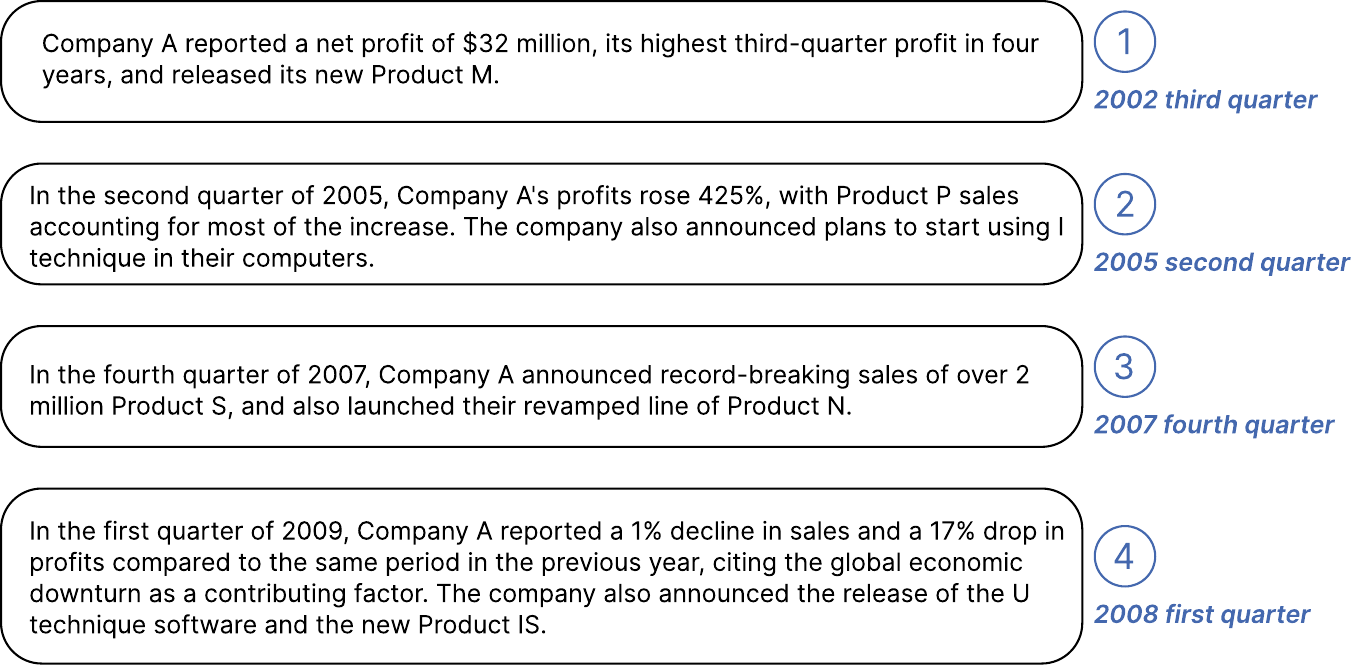}
\centering
\caption{Example of generated contextual information for Company A marked in Figure \ref{fig:EBITDA_ts_summary_A}}
\label{fig:Time_series_summary_1}
\end{figure}

\begin{figure}[htbp]
\centering
\includegraphics[width=\linewidth]{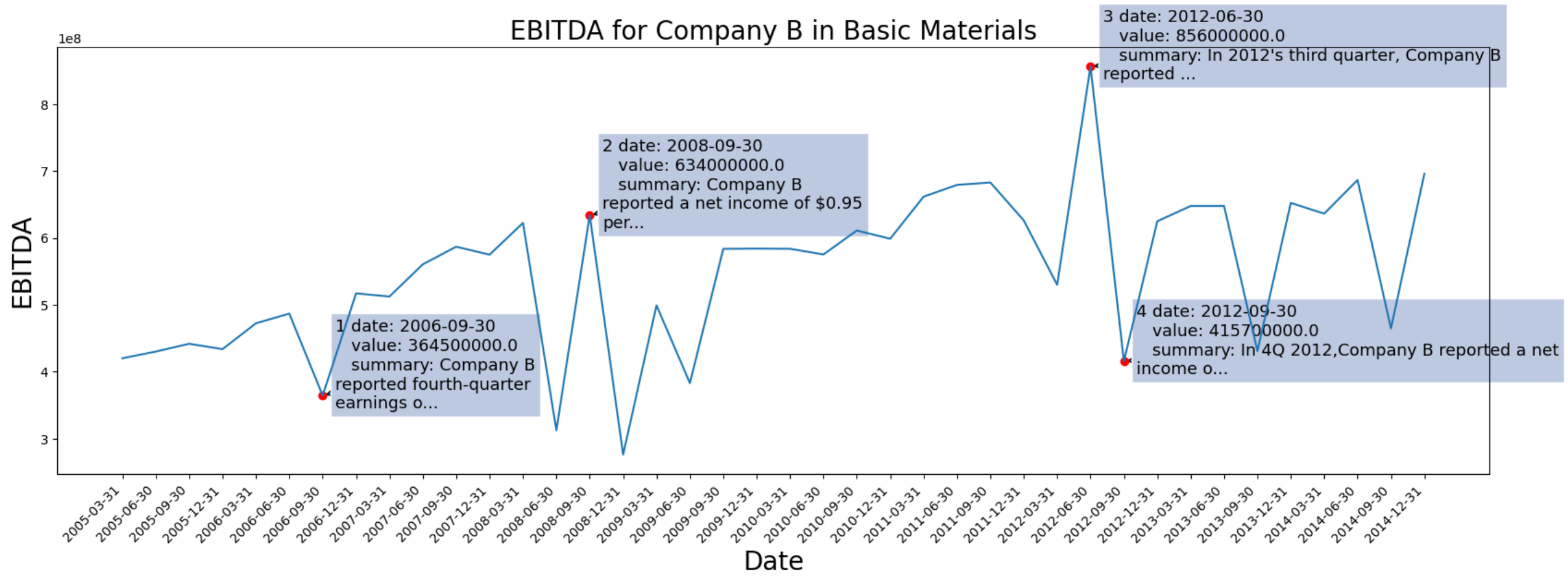}
\centering
\caption{EBITDA for Company B with contextual informatino}
\label{fig:EBITDA_ts_summary_B}
\end{figure}

\begin{figure}[htbp]
\centering
\includegraphics[width=0.9\linewidth]{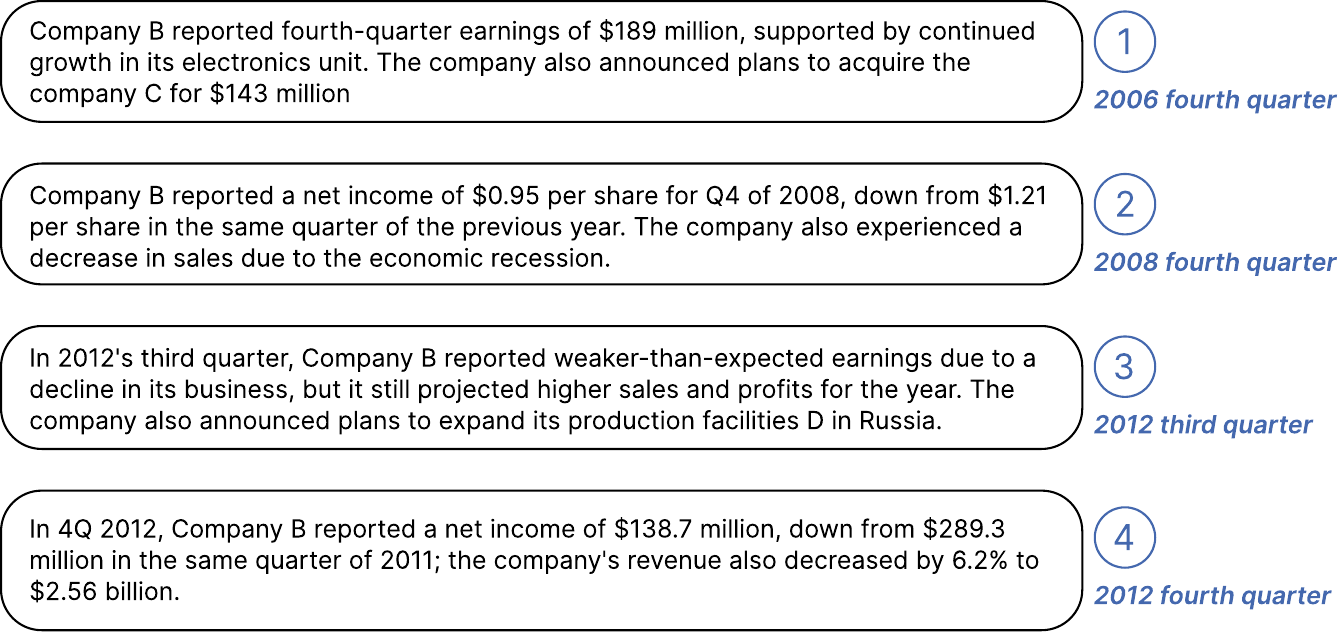}
\centering
\caption{Example of generated contextual information for Company B marked in Figure \ref{fig:EBITDA_ts_summary_B}}
\label{fig:Time_series_summary_2}
\end{figure}

\begin{table*}[ht]
    \caption{Table of Main Notation on \modelname}
    \centering
    \label{tab:notation}
    \begin{small}
    \renewcommand{\arraystretch}{1.5}
    \begin{tabular}{c|c}
    \hline\hline
        Notation & Description \\ \hline\hline
        $\hat{\mathbf{x}}_{t}^i$ & $i^{th}$ channel prediction at time step t\\ \hline
        $\mathbf{x}_{t}^{i}$ & $i^{th}$ channel look back window/historical values at time step t\\ \hline
        $\Phi$ & model parameter\\ \hline
        $\mathbf{V}$ & prompt value from prompt pool\\ \hline
        $X$ & input data which can be decomposed into $X_{T}$ $X_{S}$ $X_{R}$\\ \hline
        $X_{Tt}, X_{St}, X_{Rt}$ & trend, season, residual component set in time $t$\\ \hline
        $x_{Tt}^{i}$ & $i^{th}$ channel $t^{th}$ timestep of $x_{T}^{i}$\\ \hline
        $\hat{\mathbf{x}}_{Tt}^i$ & predict value of trend component\\ \hline
        $\mathcal{P}$ & patch of input data\\ \hline
        $k_{m}$ & $m^{th}$ key in prompt pool\\ \hline
        $V_{m}$ & $m^{th}$ value in prompt pool\\ \hline
        $\mathbf{V_{k}}$ & prompt pool\\ \hline
        $\mathcal{K}$ & hyperparameter, number of prompts to choose\\ \hline
        $M$ & hyperparameter, length of prompt pool\\ \hline
        $Z^*$ & GPT output for * (trend, seasonal, residual)\\ \hline
        $L_H$ & prediction length\\ \hline
        $L_E$ & embedding vector length\\ \hline
        $Y_*$ & final predict value before de-normalization\\ \hline
        $\hat{Y}_*$ & final predict value\\ \hline\hline

    \end{tabular}
\end{small}
\end{table*}

\end{document}